%% file: iclr2027_conference.tex
\documentclass{article} 
\usepackage{iclr2027_conference,times}

\input{math_commands.tex}

\usepackage{hyperref}
\usepackage{url}

\title{{Breaking Homogeneity}: Diversifying Persona Sets for Creative LLM Outputs}

\author{Sang Bin Moon \\
School of Electrical and Computer Engineering\\
Purdue University\\
West Lafayette, IN 47909, USA \\
\texttt{moon182@purdue.edu} \\
\And
Nicole Cho, Daniel Borrajo \& Sumitra Ganesh \\
J.P. Morgan AI Research \\
New York, NY 10172, USA \\
\texttt{\{nicole.cho, daniel.borrajo,} \\
\texttt{sumitra.ganesh\}@jpmorgan.com} \\
\AND
Abolfazl Hashemi \\
School of Electrical and Computer Engineering\\
Purdue University\\
West Lafayette, IN 47909, USA \\
\texttt{abolfazl@purdue.edu} \\
}

\usepackage{graphicx} 
\usepackage{tcolorbox}
\usepackage{amsmath, amssymb, amsfonts, amsthm}
\usepackage{algorithm}
\usepackage{algpseudocode}
\usepackage{xcolor}
\usepackage{bbm}
\usepackage{booktabs}
\usepackage{cleveref}
\usepackage{multirow}
\usepackage{array}
\usepackage{longtable}
\usepackage{enumitem}
\usepackage{caption}
\usepackage{makecell}
\usepackage{multirow}
\usepackage{etoc}

\newtheorem{theorem}{Theorem}
\newtheorem{lemma}{Lemma}

\newtheorem{assumption}{Assumption}
\newtheorem{remark}{Remark}

\algrenewcommand\algorithmiccomment[1]{\hfill\textcolor{gray}{$\triangleright$ #1}}

\newcommand{\dper}[1]{d_{#1}^{\mathrm P}}
\newcommand{\dcov}[1]{F_{\mathrm{cov},#1}}
\newcommand{\ddisp}[1]{F_{\mathrm{disp},#1}}

\tcbuselibrary{listings,breakable}

\newtcblisting{promptbox}[1]{
    breakable,
    before skip=8pt,
    after skip=8pt,
    colback=gray!10,
    colframe=black,
    coltitle=white,
    boxrule=1.5pt,
    fonttitle=\bfseries,
    title=#1,
    listing only,
    listing options={
        basicstyle=\normalfont,
        breaklines=true,
        breakindent=0pt,
        breakautoindent=false,
        columns=fullflexible,
        keepspaces=true,
        extendedchars=true,
        literate={—}{{---}}1
                 {–}{{--}}1
                 {é}{{\'e}}1
                 {…}{{\dots}}1
    }
}

\iclrfinalcopy 
\begin{document}
\etocdepthtag.toc{mainpaper}

\maketitle

\begin{abstract}
Language models often produce homogeneous responses to open-ended tasks;
such homogeneity can spawn groupthink---the convergence of ideas toward a singular and potentially suboptimal decision. 
We formulate persona diversification as a set-level conditioning problem and study two orthogonal design choices: selecting versus generating personas, and space-filling versus frontier-seeking diversity. We instantiate this design space with four methods spanning coverage and dispersion subset selections, uniform-coverage sampling, and evolutionary persona generation.
Evaluations on the Alternative Uses Task (AUT), Infinity-Chat, and Divergent Association Task (DAT) show the benefits of the proposed methods across tasks and creativity objectives.
On AUT, evolutionary persona generation increases response diversity by 78.8\%, originality by 26.1\%, flexibility by 49.5\%, and holistic creativity by 13.9\% over task-only prompting, while maintaining 98.5\% validity; on Infinity-Chat, it nearly doubles persona-induced response separation relative to random personas. Moreover, evolutionary personas compose with creativity-optimized prompting, further increasing its response diversity by 18.6\% and creativity by 6.3\%. These results establish persona-set geometry as a task-agnostic mechanism for eliciting divergent LLM outputs, 
and support persona diversification as a reusable complement to prompt optimization.
\end{abstract}

\section{Introduction}
Homogeneity in Large Language Model (LLM) outputs, specifically those that pertain to a narrow subset of WEIRD (Western, Educated, Industrialized, Rich, and Democratic) responses, severely limits the potential of LLMs \citep{anthis2025llmsocialsimulationspromising} and engenders a problematic artificial hivemind \citep{jiang2026artificial}. This homogeneity is known to be spawned by the RLHF fine-tuning pipeline endemic to frontier models \citep{kirk2024understandingeffectsrlhfllm, west2025basemodelsbeataligned}; such homogeneity not only stifles creativity but also creates systematic risks of groupthink. For example, relying on LLMs for investment recommendations can engender similar or identical advice across independent teams - which results in compounded risk \citep{imf2024ai,bis2024ai}. 

Conditioning a language model on a \emph{persona}, defined as the societal, biological, and personal traits of the human user, has been presented as a potential lever for steering behavior and eliciting distinct perspectives \citep{choi2024picle,ge2024scaling}. 
However, empirically measuring the impact of diverse personas on the model's creative ability to generate divergent solutions for the same task is relatively understudied \citep{paglieri2026personageneratorsgeneratingdiverse}. We observe three gaps in current research: first, the limited breadth of evolutionary methods that truly make a \textit{set} of personas diverse; second, the underexplored notions of what actually makes a set of personas \textit{diverse} and a set of outputs \textit{creative}; and third, the understudied realm of whether a set of diverse personas can trigger the model's divergent trajectories to generate more creative outputs. 

In this work, we utilize persona as an explicit, optimizable variable and hypothesize that 
a set of truly diverse personas can successfully trigger the alternative, divergent trajectories embedded within the model that would otherwise remain inactivated. Moreover, we hypothesize that divergent gains compound when a \emph{set} of personas is explicitly diversified, ensuring that the induced responses spread across many regions at once, for the same task. Thus, our focus is threefold: what makes a set of personas \emph{diverse}, how can we \emph{induce} that diversity with guarantees on the resulting persona-side spread, and does the spread ultimately improve creativity?

\begin{figure}[t!]
\centering
\includegraphics[width=\linewidth]{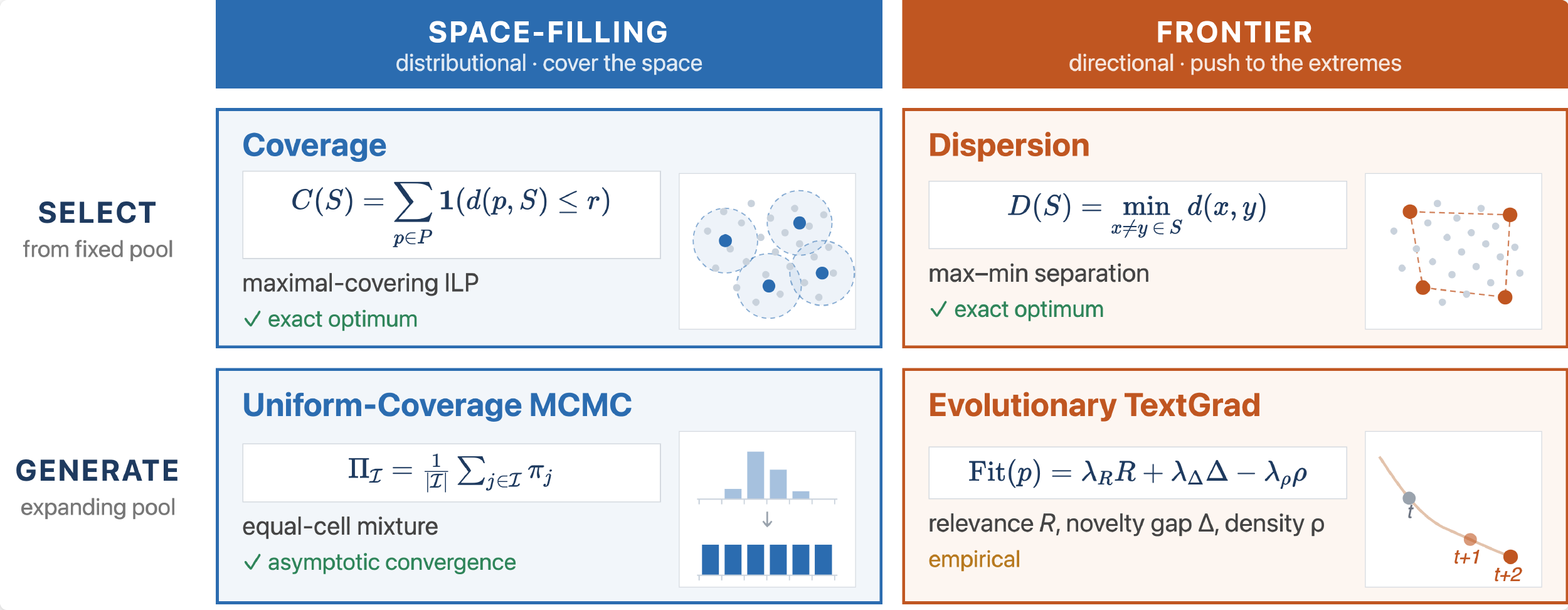}
\vspace{-6mm}
\caption{\textbf{Persona diversification design space.} Two \emph{principles} of diversity (columns) are each instantiated at two
\emph{levels of control} (rows). The \emph{space-filling} principle is implemented by \textsc{Coverage} and UC-MCMC, whereas the \emph{frontier-seeking} principle is instantiated by \textsc{Dispersion} and evolutionary \textsc{TextGrad}. Three algorithms have formal guarantees for persona-space objectives and sampling targets, while evolutionary generation is empirical.}
\label{fig:framework}
\vspace{-4mm}
\end{figure}

To answer these questions, we organize persona diversification along two axes (Figure~\ref{fig:framework}): selection from a fixed candidate pool versus generation beyond it, and space-filling coverage versus frontier-seeking separation. Crossing these two axes yields four methods: \textsc{Coverage} and \textsc{Dispersion} for selection, and Uniform-Coverage MCMC (UC-MCMC) and evolutionary \textsc{TextGrad} for generation \citep{yuksekgonul2025textgrad}. \textsc{Coverage} and \textsc{Dispersion} achieve global optima for their population objectives, UC-MCMC converges asymptotically to an equal-cell target over reachable cells, and evolutionary generation is evaluated empirically.
 

Our central finding is that the performance gains are driven by the specific structural geometry of the persona sets, not merely the presence of a conditioning variable.
Across the benchmarks, greater persona-set dispersion is positively associated with greater response dispersion. 
By quantifying this displacement against the model’s unconditioned output, we demonstrate the specific advantage a persona-conditioned query provides over a standard task-only prompt: namely, increased creativity and a meaningful departure from the model's narrow default behavior.

Our diversification objectives depend on persona geometry rather than downstream task performance, allowing the same persona sets to be reused across tasks. Persona conditioning supplements prompt engineering and can be combined with advanced prompting and reasoning scaffolds.

\paragraph{Contributions.}
\begin{itemize}[noitemsep, leftmargin=*, topsep=0pt, partopsep=0pt, label={\tiny\raisebox{0.5ex}{$\blacktriangleright$}}]
    \item A \emph{design space} for inducing persona diversity that crosses two levels of control (selection vs. generation) with two diversity principles (space-filling vs. frontier), unifying four methods under one framework (Figure~\ref{fig:framework}). The construction is task-agnostic and compatible with prompt optimization and reasoning scaffolds.
    \item On the \emph{selection} axis, exact algorithms for both principles, maximal-covering location \textsc{Coverage} and max--min \textsc{Dispersion}, with global-optimality guarantees.
    \item On the \emph{generation} axis, two complementary methods: a Metropolis-within-Gibbs sampler with provable asymptotic convergence to an equal-cell mixture over reachable semantic cells, and an evolutionary \textsc{TextGrad} procedure that searches for isolated, low-density personas.
    \item Empirical evidence that persona diversification can improve response creativity across three benchmarks, including comparisons with task-only, random-persona and prompt engineering baselines.
\end{itemize}

\begin{figure}[t!]
    \centering
    \includegraphics[width=0.95\columnwidth]{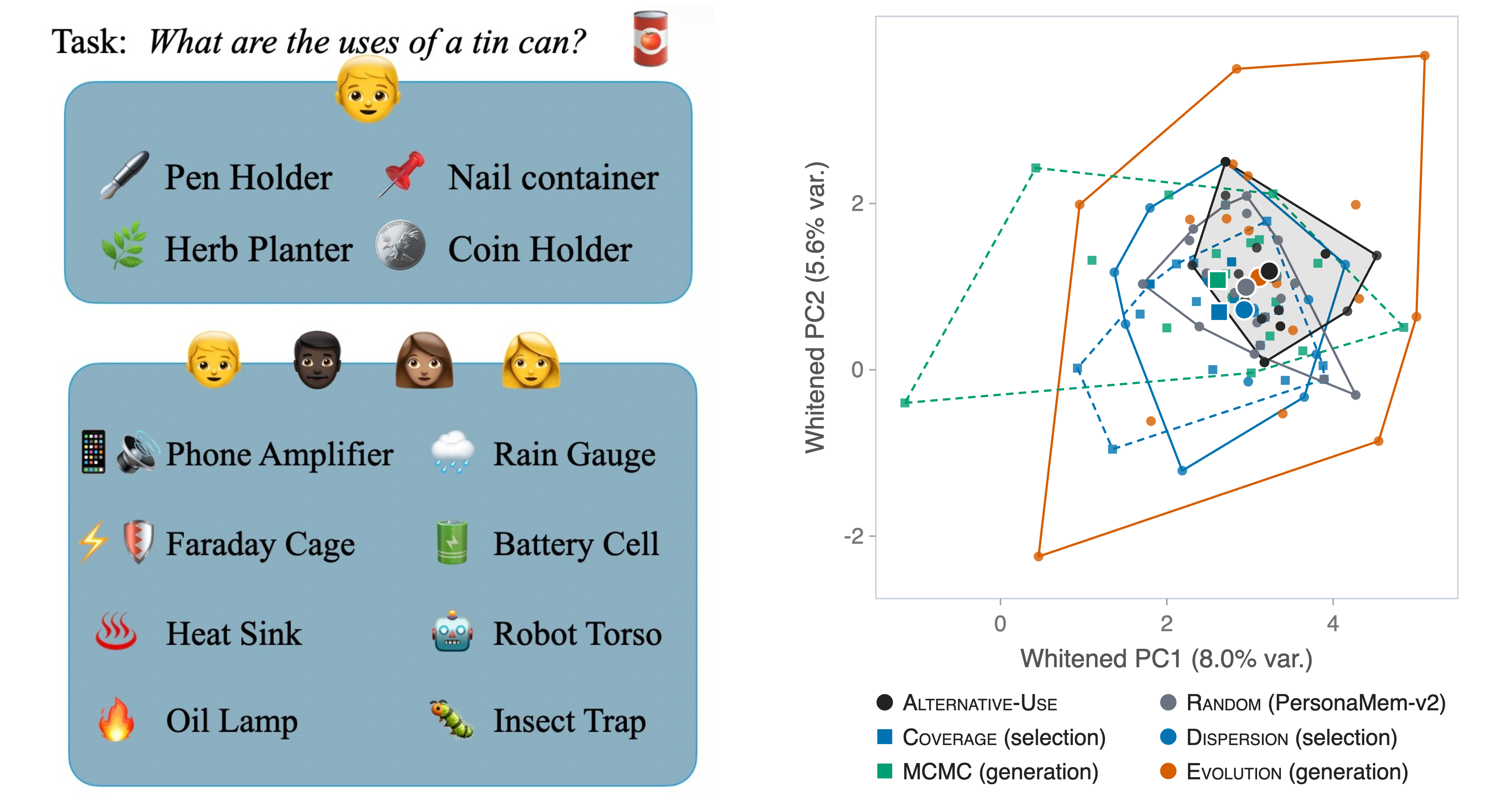}
    \vspace{-3mm}
    \label{fig:tin-can}
    \caption{\textbf{Persona conditioning diversifies Alternative Uses Task responses.}
    \textit{Left:} Example uses for a tin can under task-only prompting (top) and persona conditioning (bottom), illustrating a broader range of ideas.
    \textit{Right:} Mahalanobis-whitened PCA projection of response embeddings across persona conditions, with convex hulls showing their semantic spread.}
    \vspace{-6mm}
\end{figure}

\section{Problem Formulation}
\label{sec:problem}
While a single persona provides pointwise conditioning, it cannot dictate the structure of a broader collection of viewpoints. To address this set-level challenge, we define the \emph{persona diversification} problem. The objective is to design a collection of persona-conditioned queries that hold the task prompt and base response model constant, isolating the persona set as the sole optimization variable. We formulate this shared problem and its underlying semantic geometry below.

\subsection{The Set-Level Conditioning Problem}
Let $\mathcal T$ be the text-string space and $\Omega\subseteq\mathcal T$ be the specialized persona-string space. A persona is defined as a short, natural-language description $p\in\Omega$, and a persona condition is a set $\mathcal S$ of fixed cardinality $|\mathcal S|=k$. For a task prompt $x\sim\mathcal D$ and a frozen downstream response model $G_{\theta_{\mathrm R}}$, the model generates a response conditioned on $p$ as
\begin{equation}
    y_p\sim G_{\theta_{\mathrm R}}(\,\cdot\mid x,p),
    \qquad p\in\mathcal S,
    \label{eq:persona-conditioned-response}
\end{equation}
where $p=\varnothing$ denotes the persona-free reference baseline. Holding $x$ and $G_{\theta_{\mathrm R}}$ fixed, the downstream goal is for the induced response collection $\{y_p:p\in\mathcal S\}$ to span a broader, more creative range of valid, useful outputs than task-only prompting or an unoptimized persona set.

Crucially, our optimization variable and evaluation target are deliberately distinct. Our algorithms construct persona sets based entirely on frozen \emph{persona-space} geometry, whereas creativity is evaluated strictly on the induced \emph{response-space} outputs. We do not assume that persona-space distance monotonically transfers to response-space distance, novelty, or quality. Consequently, the guarantees we provide apply solely to the persona-space; whether this persona diversification reliably translates to improved response creativity remains an empirical hypothesis evaluated in \Cref{sec:experiment}.

\subsection{Semantic Geometry}
Let $\mathcal P=\{p_i\}_{i=1}^{N_{\mathcal P}}\subseteq\Omega$ denote a generic finite candidate population, with $[N_{\mathcal P}]=\{1,\ldots,N_{\mathcal P}\}$. Selection operates on the baseline population $\mathcal P_0$, while generation expands it into $\mathcal P_T$ before downstream selection. 
Let $\phi:\mathcal T\to\mathbb R^d$ denote the full, unit-normalized output of a frozen text encoder, which supports Matryoshka Representation Learning (MRL) \citep{kusupati2022matryoshka}. For Mahalanobis distance, we use the prefix $\widetilde\phi(p)=\phi(p)_{1:d_{\mathrm M}}$ without renormalizing after truncation.

This encoder induces persona-space dissimilarities $\dper{m}$, indexed by $m\in\{\mathrm{cos},2,\mathrm{Mah}\}$, as
\begin{align}
    \dper{\mathrm{cos}}(p,p') &= 1-\frac{\phi(p)^\top\phi(p')}{\|\phi(p)\|_2\|\phi(p')\|_2},    \label{eq:persona-cosine}\\
    \dper{2}(p,p') &= \|\phi(p)-\phi(p')\|_2,    \label{eq:persona-l2}\\
    \dper{\mathrm{Mah}}(p,p') &= \sqrt{(\widetilde\phi(p)-\widetilde\phi(p'))^\top \widehat\Sigma_{\mathrm P}^{-1} (\widetilde\phi(p)-\widetilde\phi(p'))}. \label{eq:persona-mahalanobis}
\end{align}
Here, $\widehat\Sigma_{\mathrm P}$ is the sample covariance of the truncated embeddings $\{\widetilde\phi(p):p\in\mathcal P_0\}$, regularized by adding $\varepsilon I_{d_{\mathrm M}}$, with $\varepsilon=10^{-6}$ and $d_{\mathrm M}=128$.

Both finite-pool selectors rely on the pairwise dissimilarity matrix and its realized spectrum
\begin{align}
    D^{\mathrm P}_{m,ij} &= \dper{m}(p_i,p_j),
    & \mathbf D_m^{\mathrm P} &= \bigl(D^{\mathrm P}_{m,ij}\bigr)_{i,j=1}^{N_{\mathcal P}}, \label{eq:persona-pairwise-matrix}\\
    \Lambda_m(\mathcal P) &= \{D^{\mathrm P}_{m,ij}:i,j\in[N_{\mathcal P}]\},
    & L_m &= |\Lambda_m(\mathcal P)|,   \label{eq:persona-pairwise-spectrum}
\end{align}
Let $r$ denote a unique dissimilarity value (or radius) drawn from this spectrum. The $L_m$ distinct values of $\Lambda_m(\mathcal P)$, including $0$, are sorted in ascending order such that $r_{m,(1)}<\cdots<r_{m,(L_m)}$.

\section{Diversification Algorithms}
\label{sec:method}
Building on the semantic geometry established in \Cref{sec:problem}, this section structures our algorithms as a progression in both control and exploratory reach. We begin with \textbf{Selection} operators (\Cref{sec:selection}) that enforce space-filling and frontier diversity within a fixed candidate population. To break the performance ceiling imposed by this finite pool, we escalate to open-ended \textbf{Generation} methods (\Cref{sec:mcmc,sec:evolution}), actively expanding the persona support through a space-filling MCMC sampler and a frontier-seeking evolutionary TextGrad search. To maintain narrative flow, all formal proofs and extended mathematical formulations are deferred to Appendix \ref{app:selection} and \ref{app:generation}.

\subsection{Diversification Design Space}
Our framework is organized along two independent axes, as illustrated in \Cref{fig:framework}. Selection extracts a subset $\mathcal S\subseteq\mathcal P$ of size $k$ from a frozen candidate pool $\mathcal P$, whereas generation expands the baseline pool $\mathcal P_0$ with new valid personas before applying a downstream selection protocol to obtain the final $k$-persona condition. We analyze generation and downstream selection separately because guarantees for the generation process do not automatically transfer to the selected subset.

\subsection{Diverse Persona Selection: Coverage and Dispersion}
\label{sec:selection}
To extract a $k$-sized subset $\mathcal{S}$ from a finite candidate pool $\mathcal{P}$, we operationalize the space-filling and frontier principles via two exact selection algorithms. For the space-filling principle, we formulate selection as thresholded coverage. A persona $p_i$ is considered covered if its distance to the nearest selected exemplar in $\mathcal{S}$ is at most a radius $r$. The fraction of the population covered evaluates to
\begin{equation}
    F_{\mathrm{cov},m}(\mathcal{S};\mathcal{P},r) = \frac{1}{N_{\mathcal{P}}}\sum_{i=1}^{N_{\mathcal{P}}} \mathbbm{1}\Bigl\{\min_{p_j\in\mathcal{S}} d^{\mathrm{P}}_m(p_i,p_j)\leq r\Bigr\}.
    \label{eq:coverage}
\end{equation}
To avoid metric-specific scaling from manually tuning $r$, we adaptively calibrate the smallest radius $r^\star_m$ at which some subset successfully covers the target of $n_{\mathrm{cov}} = \lceil c_{\mathrm{cov}}N_{\mathcal{P}}\rceil$ personas, written as
$$r^\star_m(\mathcal{P};c_{\mathrm{cov}}) = \min\Bigl\{r\geq 0 : \max_{\substack{\mathcal{S}\subseteq\mathcal{P}\\|\mathcal{S}|=k}} N_{\mathcal{P}} \, F_{\mathrm{cov},m}(\mathcal{S};\mathcal{P},r) \geq n_{\mathrm{cov}}\Bigr\}.$$
The algorithm lexicographically attains this minimal radius, then breaks ties among radius-optimal subsets by maximizing the covered count at the threshold. 

Since $F_{\mathrm{cov},m}$ changes only at the discrete values of the pairwise spectrum $\Lambda_m(\mathcal{P})$, finding $r^\star$ reduces to an exact discrete search over the sorted distances. For a target coverage count $n_{\mathrm{cov}}$, the greedy solution $G_m(r)\ge n_{\mathrm{cov}}$ certifies feasibility of a trial radius $r$ and establishes the upper bound $r^\star\leq r$. By searching over the sorted distances, the upper bound becomes tighter. At the same time, the classical submodular guarantee bounds the optimal covered count from above as $n_{\mathrm{opt}}\le\lfloor G_m(r)/(1-1/e)\rfloor$ \citep{nemhauser1978analysis}. If $n_{\mathrm{cov}}>\lfloor G_m(r)/(1-1/e)\rfloor$, the radius is certified infeasible, establishing the lower bound $r^\star>r$. These certificates narrow the bracket on $r^\star$, efficiently reducing the number of exact-solver calls. This iterative algorithm (Algorithm~\ref{alg:exact_coverage}) solves both the smallest feasible radius $r^\star$ and the maximum coverage at $r^\star$.

\begin{theorem}[Exact Calibrated Coverage]
\label{thm:exact_coverage}
In exact mode, provided every radius feasibility probe is certified, the adaptive search algorithm (Algorithm~\ref{alg:exact_coverage}) terminates and returns a globally minimal radius $r^\star = r^\star_m(\mathcal{P};c_{\mathrm{cov}})$, together with a subset $\mathcal S^\star_{\mathrm{acov},m} (\mathcal P;c_{\mathrm{cov}})$ of cardinality $k$ satisfying the maximum coverage at $r^\star$. For every $\mathcal S\subseteq\mathcal P$ with $|\mathcal S|=k$ and $1\leq k\leq N_{\mathcal P}$,
$$\dcov{m}\!\left(\mathcal S^\star_{\mathrm{acov},m}(\mathcal P;c_{\mathrm{cov}});\mathcal P,r^\star\right) \geq \dcov{m}(\mathcal S;\mathcal P,r^\star).$$
In particular, its covered count at $r^\star$ is at least $n_{\mathrm{cov}}$.
\end{theorem}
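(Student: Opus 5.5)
The plan is to reduce the continuous minimization over $r\ge0$ to a finite search over the sorted spectrum $r_{m,(1)}<\cdots<r_{m,(L_m)}$ of $\Lambda_m(\mathcal P)$, and then show that the bracketing search, driven by certified feasibility probes, converges to the smallest feasible index. Write $n_{\mathrm{opt}}(r)=\max_{|\mathcal S|=k} N_{\mathcal P}\,\dcov{m}(\mathcal S;\mathcal P,r)$.

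\textbf{Reduction to the spectrum.} First, $n_{\mathrm{opt}}(r)$ is nondecreasing in $r$, since each indicator $\mathbbm 1\{\min_{p_j\in\mathcal S}\dper m(p_i,p_j)\le r\}$ is monotone in $r$. Second, it is right-continuous and piecewise constant, changing only at values in $\Lambda_m(\mathcal P)$, because every quantity $\min_j \dper m(p_i,p_j)$ is itself an entry of $\mathbf D^{\mathrm P}_m$.

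\textbf{Existence of $r^\star_m$.} Feasibility holds at $r=r_{m,(L_m)}$: there, any nonempty $\mathcal S$ covers all $N_{\mathcal P}\ge n_{\mathrm{cov}}$ personas. This uses $k\ge1$ and $c_{\mathrm{cov}}\le 1$. Combining this with the first two facts, the set $\{r: n_{\mathrm{opt}}(r)\ge n_{\mathrm{cov}}\}$ is a closed up-ray $[r_{m,(\ell^\star)},\infty)$, where $\ell^\star$ is the smallest index with $n_{\mathrm{opt}}(r_{m,(\ell^\star)})\ge n_{\mathrm{cov}}$. Hence the minimum defining $r^\star_m$ is attained and equals $r_{m,(\ell^\star)}$.

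\textbf{Search invariants.} The search maintains a bracket $(\ell_{\mathrm{lo}},\ell_{\mathrm{hi}}]$ of indices with the invariant that $r_{m,(\ell_{\mathrm{lo}})}$ is infeasible (or $\ell_{\mathrm{lo}}=0$) and $r_{m,(\ell_{\mathrm{hi}})}$ is feasible. The two certificates preserve this invariant, as follows.
\begin{itemize}
\item A greedy count $G_m(r)\ge n_{\mathrm{cov}}$ exhibits a $k$-subset covering at least $n_{\mathrm{cov}}$ personas, so $r$ is feasible.
\item The Nemhauser--Wolsey--Fisher bound states that $G_m(r)\ge(1-1/e)\,n_{\mathrm{opt}}(r)$ for the monotone submodular coverage function. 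Since $n_{\mathrm{opt}}$ is an integer, $n_{\mathrm{opt}}(r)\le\lfloor G_m(r)/(1-1/e)\rfloor$. Therefore $n_{\mathrm{cov}}$ exceeding this bound certifies that $r$ is infeasible.
\item Otherwise, the exact solver (an integer program for maximum coverage) is called, and by the hypothesis ``every probe is certified'' it returns the true $n_{\mathrm{opt}}(r)$.
\end{itemize}
By monotonicity, a feasible probe lets us move $\ell_{\mathrm{hi}}$ down and an infeasible one lets us move $\ell_{\mathrm{lo}}$ up. Each probe strictly shrinks the finite bracket, so the loop terminates with $\ell_{\mathrm{hi}}=\ell_{\mathrm{lo}}+1=\ell^\star$.

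\textbf{Optimality of the returned subset.} The final step solves maximum coverage exactly at $r^\star$ and returns a maximizer $\mathcal S^\star_{\mathrm{acov},m}$. By definition of a maximizer, this gives the claimed inequality against every $k$-subset $\mathcal S$. Because $r^\star$ is feasible, the covered count is $n_{\mathrm{opt}}(r^\star)\ge n_{\mathrm{cov}}$.

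\textbf{Main obstacle.} I expect the delicate step to be the certificate logic rather than the spectrum reduction. One must check that the greedy bound is applied to the correct function: the greedy algorithm is run over $k$-subsets, the coverage function is normalized monotone submodular, and integrality justifies the floor. One must also make sure the algorithm's bracket updates never discard $\ell^\star$, which is exactly what the invariant argument above must verify line by line against Algorithm~\ref{alg:exact_coverage}.
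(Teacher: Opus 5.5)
Your proof is correct and follows essentially the same route as the paper's: termination via a strictly shrinking bracket on the finite spectrum, radius optimality from the two invariants (a feasible $\ell_{\mathrm{hi}}$ backed by a stored witness, and an $\ell_{\mathrm{lo}}$ that is either the sentinel or certified infeasible) together with the step-function monotonicity of Lemma~\ref{lem:candidate-radii}, and coverage optimality from the final certified ILP solve. The one update you leave implicit is the certificate-tightening jump of $\ell_{\mathrm{hi}}$ to the witness's own minimal radius, which the paper checks explicitly; it preserves your feasibility invariant because the witness is feasible at that spectrum value.
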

The full Integer Linear Program formulation, algorithmic pseudocode, and supporting lemmas and proofs are deferred to Appendix~\ref{app:coverage}.

On the other hand, the frontier principle prioritizes structural separation. The discrete max–min $p$-dispersion problem \citep{erkut1990discrete} maximizes the minimum dispersion (the smallest pairwise dissimilarity within a subset) over all size-$k$ subsets of the candidate pool. For $|\mathcal S|\geq2$, the minimum dispersion objective is defined as
\begin{equation}
    \ddisp{m}(\mathcal S) := \min_{\substack{p,p'\in\mathcal S\\p\neq p'}} \dper{m}(p,p').
    \label{eq:min-dispersion}
\end{equation}
Since finding a max--min diverse subset is generally NP-hard, our exact method again exploits the discrete topology of the solution space. Note that the optimal dispersion value must equal exactly one of the pairwise distances in the sorted spectrum $\Lambda_m(\mathcal{P})$, so we leverage the threshold-graph equivalence between max–min diversity and clique feasibility \citep{dellacroce2009clique}. The algorithm builds a graph by inserting edges in descending order of pairwise dissimilarity, transforming the optimization into a series of Boolean structural checks. After a new edge $\{p_a,p_b\}$ is added, the algorithm searches for a $(k-2)$-clique in their mutual neighborhood. If found, combining it with the endpoints yields a complete size-$k$ clique; if not, the algorithm checks for the next pair. Since the algorithm searches the descending discrete distance spectrum sequentially, the subset returned upon the first clique completion is guaranteed to be optimal.

\begin{theorem}[Exact Max--Min Dispersion]
For a finite candidate pool $\mathcal P$, $2\leq k\leq|\mathcal P|$, and fixed symmetric nonnegative dissimilarities $\dper{m}$, the incremental-clique algorithm (Algorithm~\ref{alg:exact_max_min_dispersion}) terminates and returns a size-$k$ subset $\mathcal{S}^\star_{\mathrm{disp},m}(\mathcal{P})$ satisfying
$$\ddisp{m}(\mathcal S^\star_{\mathrm{disp},m}(\mathcal P)) = \max_{\substack{\mathcal S\subseteq\mathcal P\\|\mathcal S|=k}} \ddisp{m}(\mathcal S).$$
\end{theorem}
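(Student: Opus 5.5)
The plan is to reduce the max--min problem to a monotone sequence of clique-feasibility questions on threshold graphs, and then show that the incremental algorithm answers these questions in exactly the right order.

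\textbf{Threshold graphs.} For a threshold $t$, let $G_t$ be the graph on $\mathcal P$ whose edges are the pairs $\{p_i,p_j\}$, $i\neq j$, with $\dper{m}(p_i,p_j)\ge t$. A size-$k$ subset $\mathcal S$ satisfies $\ddisp{m}(\mathcal S)\ge t$ if and only if $\mathcal S$ is a $k$-clique in $G_t$. The edge sets are nested: $G_t\subseteq G_{t'}$ whenever $t\ge t'$. Feasibility is therefore monotone in $t$.

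\textbf{The optimum lies on the spectrum.} Because $\mathcal P$ is finite and $k\ge2$, every size-$k$ subset has $\ddisp{m}(\mathcal S)$ equal to some pairwise value in $\Lambda_m(\mathcal P)$. Hence the optimum $t^\star$ exists and also lies in $\Lambda_m(\mathcal P)$. Such a subset exists since $k\le|\mathcal P|$, and at the smallest off-diagonal value the threshold graph is complete, so a $k$-clique always appears eventually. This gives termination: edges are inserted in descending order of dissimilarity, the list is finite, and after the last insertion the graph is complete.

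\textbf{Correctness of the stopping rule.} Suppose the algorithm stops right after inserting edge $\{p_a,p_b\}$ with value $t$. It returns $\{p_a,p_b\}\cup C$, where $C$ is a $(k-2)$-clique in the common neighbourhood of $p_a$ and $p_b$ in the current graph. This set is a $k$-clique in which every edge has value at least $t$, so $t^\star\ge t$.

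For the reverse inequality, take an optimal $\mathcal S^{\mathrm{opt}}$ and let $e$ be its edge of minimum value $t^\star$, choosing $e$ as the \emph{last-inserted} such edge in the fixed processing order. When $e$ is inserted, all other edges of $\mathcal S^{\mathrm{opt}}$ are already present: those with larger value came earlier, and ties were inserted earlier by the choice of $e$. So the clique check triggered at $e$ succeeds, and the algorithm stops at or before $e$. Since the order is descending, the stopping value satisfies $t\ge t^\star$. Combined with the previous paragraph, the returned set is optimal.

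\textbf{Main obstacle.} The delicate point is this tie handling among equal distances. It has to be argued carefully against the precise edge order and clique search in Algorithm~\ref{alg:exact_max_min_dispersion}, including the degenerate case of zero distances between duplicate personas. The remaining step is that the clique subroutine is exact. I would take this from the algorithm, which uses exhaustive or branch-and-bound enumeration in the mutual neighbourhood. For the case $k=2$, the $0$-clique is empty, so the algorithm returns the first inserted pair, which attains the maximum pairwise distance.
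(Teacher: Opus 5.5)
Your proof is correct and follows essentially the same route as the paper: you use the equivalence between $\ddisp{m}(\mathcal S)\ge t$ and $\mathcal S$ being a $k$-clique in the threshold graph, the fact that the optimum lies in the pairwise spectrum, and the observation that descending edge insertion makes the first completed clique optimal. Your choice of the last-inserted minimum edge of an optimal set is a slightly sharper, edge-level version of the paper's threshold-level argument, because it shows explicitly why the local mutual-neighbourhood check in Algorithm~\ref{alg:exact_max_min_dispersion} must succeed no later than that edge, even with tied distances.
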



\subsection{Uniform-Coverage MCMC Persona Generation}
\label{sec:mcmc}

To break the diversity ceiling imposed by a finite baseline pool $\mathcal P_0$, we transition to generation beyond $\mathcal P_0$. \emph{Uniform-Coverage MCMC} (UC-MCMC) applies the space-filling principle directly to the generation process by targeting equal probability mass across reachable cells of a fixed partition of equal-area directional cells. Within each cell, a frozen Large Language Model acts as the base law, ensuring the sampled personas remain linguistically plausible and structurally valid while the MCMC framework enforces the geometric uniformity.

Since the total number of reachable cells is unknown in advance, UC-MCMC employs a hindsight-spawning random scan that maintains a dynamically growing population of parallel Markov chains, with one chain for each discovered cell. Each iteration selects an active cell uniformly at random and proposes a new persona using the frozen LLM. If the generated persona lands in the currently selected cell, it undergoes a standard Metropolis–Hastings (MH) correction to preserve the within-cell target law. If the proposed persona lands in a previously empty, undiscovered cell, it serves as a hindsight seed that activates the new cell and initializes a new parallel chain.

\paragraph{Theoretical Guarantee.}
By consolidating the cell-kernel reversibility and eventual discovery properties, we establish three formal guarantees for the emitted sequence.

\begin{theorem}[Anytime Cell Uniformity and Target Convergence]
Fix a deterministic validity rule $\operatorname{val}$, a base generation law $b$, and a cell map $c:\Omega\to[M]$ assigning each persona to one of $M$ cells. The encoder, whitening transform, and partition defining $c$ are frozen. Assuming $b(p)>0$ for every valid persona, we define the valid state space and base mass of cell $j$ as
$$\Omega_j = \{p\in\Omega:\operatorname{val}(p)=1,\ c(p)=j\}, \qquad B_j=b(\Omega_j).$$
Let $\mathcal R=\{j:B_j>0\}$ be the set of reachable cells and $\pi_j=b(\,\cdot\mid\Omega_j)$ their within-cell target laws. Suppose Algorithm~\ref{alg:ucmcmc} is run from a fixed, nonempty active set $\mathcal I_0\subseteq\mathcal R$. With all proposal kernels and mixture weights frozen, the global proposal draws independently from $b$ with a fixed probability $\omega_0>0$. Let $\mathcal{I}_{t-1} \subseteq \mathcal{R}$ denote the active set after iteration $t$, $P_t^{\mathrm{emit}}$ the emitted persona, and $\mathcal F_{t-1}$ the complete history prior to iteration $t$. The following properties hold.

\begin{itemize}[noitemsep, leftmargin=*, topsep=0pt, partopsep=0pt, label={\tiny\raisebox{0.5ex}{$\blacktriangleright$}}]
    \item \textbf{Exact Anytime Cell Uniformity.} For every $t\geq1$ and $j\in[M]$, the spatial allocation of the emitted persona is strictly uniform over the currently active set
    $$\Pr\!\left(c(P_t^{\mathrm{emit}}) = j \mid \mathcal{F}_{t-1}\right) = \frac{\mathbbm{1}\{j \in \mathcal{I}_{t-1}\}}{|\mathcal{I}_{t-1}|}.$$
    \item \textbf{Finite-Time Discovery.} For every $T\ge1$, the probability that the active set has not yet expanded to cover the entire reachable set is bounded by
    $$\Pr(\mathcal{I}_T \neq \mathcal{R}) \leq \sum_{j \in \mathcal{R} \setminus \mathcal{I}_0} (1-\omega_0 B_j)^T.$$
    Consequently, all reachable cells are discovered in finite time almost surely.
    \item \textbf{Asymptotic Convergence.} As $t\to\infty$, the law of the emitted persona converges in total variation to the equal-cell mixture
    $$\left\|\mathcal L(P_t^{\mathrm{emit}})-\Pi_{\mathcal R}\right\|_{\mathrm{TV}}\longrightarrow 0, \qquad \Pi_{\mathcal R}=\frac{1}{|\mathcal R|}\sum_{j\in\mathcal R}\pi_j.$$
\end{itemize}
\end{theorem}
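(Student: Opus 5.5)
The plan is to model the algorithm as a family of per-cell Metropolis--Hastings chains driven by a random scan over a growing index set. Each part of the theorem will be proved from one structural fact. \emph{(i)} Each active cell $j$ carries its own chain, whose state lies in $\Omega_j$. \emph{(ii)} An iteration picks $J_t$ uniformly from $\mathcal I_{t-1}$, advances only chain $J_t$, and emits that chain's new state. \emph{(iii)} If the proposal lands in an inactive reachable cell, that cell is activated as a hindsight seed. The seed is stored for the new chain but is not emitted.

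Under these conventions, exact anytime uniformity is almost immediate. Conditional on $\mathcal F_{t-1}$, the selected cell $J_t$ is uniform on $\mathcal I_{t-1}$. The MH update of chain $J_t$ either accepts a proposal lying in $\Omega_{J_t}$ or keeps the current state, which already lies in $\Omega_{J_t}$. Hence $c(P_t^{\mathrm{emit}})=J_t$ almost surely, and summing over $J_t$ gives the stated conditional probability. The one subtle point is to check in Algorithm~\ref{alg:ucmcmc} that spawning never changes which persona is emitted at that iteration. I will state this explicitly as the convention that makes uniformity hold exactly rather than approximately.

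For finite-time discovery, fix $j\in\mathcal R\setminus\mathcal I_0$. At every iteration, independently of the past, the global component fires with probability $\omega_0$. It then draws from $b$, lands in $\Omega_j$ with probability $B_j$, and so activates $j$ if $j$ is still inactive. Therefore
\[
\Pr(j\notin\mathcal I_t\mid \mathcal F_{t-1},\ j\notin\mathcal I_{t-1})\le 1-\omega_0B_j .
\]
Iterating this bound gives $\Pr(j\notin\mathcal I_T)\le(1-\omega_0B_j)^T$. A union bound over $\mathcal R\setminus\mathcal I_0$ yields the displayed estimate, and because $\mathcal R$ is finite, Borel--Cantelli gives almost-sure discovery in finite time. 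Two facts are needed here, both from the validity rule: unreachable cells are never activated, so $\mathcal I_t\subseteq\mathcal R$, and cells are never deactivated.

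Asymptotic convergence is the main obstacle. I will decompose the law of the emission by conditioning on the discovery time $\tau=\inf\{t:\mathcal I_t=\mathcal R\}$. Since $\Pr(\tau>s)\to0$, it suffices to control the emission law on the event $\{\tau\le s\}$ with $s$ fixed and $t\to\infty$. On that event, after time $s$ the scan is uniform over $\mathcal R$. Each cell chain $j$ is advanced at the times of a Bernoulli$(1/|\mathcal R|)$ thinning, and the number of its updates $N_j(t)$ tends to infinity almost surely.

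Each chain's kernel $K_j$ is $\pi_j$-reversible: it is an MH correction of a mixture of frozen proposals targeting $b(\cdot\mid\Omega_j)$. It is also $\pi_j$-irreducible and aperiodic, because the global independence component with weight $\omega_0$ proposes from $b$. Restricted to $\Omega_j$, this is an independence sampler with proposal density proportional to $\pi_j$, which accepts with probability one when it lands in $\Omega_j$. This gives a minorization $K_j(x,\cdot)\ge\omega_0B_j\,\pi_j(\cdot)$, so $K_j$ is uniformly ergodic with
\[
\|K_j^n(x,\cdot)-\pi_j\|_{\mathrm{TV}}\le(1-\omega_0B_j)^n .
\]
This Doeblin bound is the step I expect to be the crux, and I will verify it first from the acceptance ratio.

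Conditional on $J_t=j$, the emitted state equals chain $j$'s state after $N_j(t)$ steps from its seed. The counts are independent of the chain moves, so the TV distance is at most $\mathbb E[(1-\omega_0B_j)^{N_j(t)}]$ plus $\Pr(\tau>s)$. The first term tends to zero by dominated convergence. Averaging over the uniform choice of $J_t$ then gives convergence to $\Pi_{\mathcal R}$, and letting $s\to\infty$ finishes the argument.
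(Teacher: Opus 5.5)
\paragraph{Parts 1 and 2.} Your treatment of exact anytime uniformity and finite-time discovery follows the paper's argument. $c(P_t^{\mathrm{emit}})=J_t$ almost surely, because the source chain either accepts a same-cell state or stays put. A hindsight spawn writes only to $P^{(j')}$ with $j'\notin\mathcal I_{t-1}\ni J_t$, so it never changes the emitted persona. Each $j\notin\mathcal I_0$ is hit by the global proposal with probability $\omega_0B_j$ per iteration regardless of the source cell, and a union bound finishes.

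\paragraph{Part 3: a different route.} The paper works at the population level. After $\tau$ it couples the whole product chain $\mathsf K_{\mathcal R}$ with a stationary copy that uses the same scan, proposal and accept--reject randomness. Coordinate $j$ coalesces at its first exact refresh, which has probability $\omega_0B_j/M_{\mathcal R}$ per iteration. A union bound over coordinates gives $d(s)=\min\{1,\sum_{j}(1-\omega_0B_j/M_{\mathcal R})^s\}$. The paper then pushes this through the emission kernel using $\Gamma_{\mathcal R}Q_{\mathrm{emit}}=\Pi_{\mathcal R}$, splitting at $r=\lfloor t/2\rfloor$.

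You instead work with the marginal directly. Conditional on $J_t=j$, you apply the single-cell Doeblin bound along the thinned update times of chain $j$, then average over the uniform $J_t$. This needs only the per-cell minorization. It also gives a sharper marginal bound, an average rather than a sum over cells: on $\{\tau\le s\}$ it is $\frac{1}{|\mathcal R|}\sum_{j\in\mathcal R}(1-\omega_0B_j)(1-\omega_0B_j/|\mathcal R|)^{t-s-1}$. Taking $s=\lfloor t/2\rfloor$ instead of your double limit would make the rate explicit. The paper's route buys a stronger statement: the joint population law converges to $\Gamma_{\mathcal R}$, which matters if several chain states are used together.

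\paragraph{One step to tighten.} As written, you claim chain $j$ has made $N_j(t)$ steps from its seed and that these counts are independent of the chain moves. That is false for the full count. Before $\tau$, the active set depends on where local proposals from chain states land, and with it the scan probabilities, the discovery time $\tau_j$ and the seed itself. What you need, and what your setup supports, is the post-$s$ count:
\begin{itemize}
\item On $\{\tau\le s\}$ and given $\mathcal F_s$, the indices $J_{s+1},\dots,J_t$ are i.i.d.\ uniform on $\mathcal R$ and independent of the proposal and acceptance randomness.
\item No further spawning can occur after $\tau$.
\item The bound $(1-\omega_0B_j)^n$ is uniform in the starting state, so the history-dependent state of chain $j$ at time $s$ does not matter.
\item Given $J_t=j$, the number of post-$s$ updates is $1+\mathrm{Bin}(t-s-1,1/|\mathcal R|)$.
\end{itemize}

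A minor point: $K_j$ is a random mixture of separately corrected kernels $K_{j,\ell}$, not an MH correction of the mixture proposal. Invariance of $\pi_j$ still holds because each $K_{j,\ell}$ is $\pi_j$-reversible.
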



\subsection{Evolutionary TextGrad Persona Generation}
\label{sec:evolution}

To actively push personas toward the extreme, unexplored regions of the semantic space, we formulate generation as an evolutionary constrained optimization problem. Unlike UC-MCMC’s distributional target, this is an empirical search that iteratively expands the population $\mathcal{P}_t$ toward valid, low-density semantic frontiers. We evaluate each persona across two geometric axes: \emph{Isolation} (gap to its nearest neighbor) and \emph{Sparsity} (unnormalized von Mises-Fisher kernel-density score).

Since personas are discrete text rather than continuous vectors, we employ \emph{TextGrad} \citep{yuksekgonul2025textgrad} to guide offspring generation with natural-language feedback derived from the geometric fitness scores. At each iteration, the algorithm executes three phases:
\begin{enumerate}[noitemsep, leftmargin=*, topsep=0pt, partopsep=0pt]
    \item \textbf{Parent Selection.} A tournament selects elite incumbents as parents that maximize isolation or minimize density.
    \item \textbf{Mutation and Admission.} The frozen LLM mutates the parents using the textual gradient on the composite fitness function. Then the offspring is admitted according to the same fitness function and thresholds after the validity checks.
    \item \textbf{Sibling Deduplication.} Admitted offspring in the same iteration are mutually deduplicated to remove near-duplicate siblings.
\end{enumerate}


\begin{figure*}[t]
\centering

\begin{minipage}[t]{0.5\textwidth}
\vspace{0pt}
\centering
\setlength{\tabcolsep}{4.5pt}
\resizebox{\linewidth}{!}{%
\begin{tabular}{@{}llcccc@{}}
\toprule
& \multicolumn{1}{c}{} &
\multicolumn{2}{c}{\makecell{\textbf{Population}}} &
\multicolumn{2}{c}{\makecell{\textbf{Final Subset}}} \\
\cmidrule(lr){3-4}\cmidrule(lr){5-6}
\textbf{Category} &
\makecell[c]{\textbf{Principle}\\[-1pt]\textbf{(method)}} &
\makecell[c]{\textbf{Cell-}\\[-1pt]\textbf{Cov.}} &
\textbf{Hull} &
\makecell[c]{\textbf{Cov.}\\[-1pt]\textbf{@ $r_0^\star$}} &
\textbf{Disp.} \\
\midrule
Reference & \makecell{PersonaMem\\(\textsc{Random})}
& $62.3\%$ & $4.45$ & $0.589$ & $13.67$ \\
\midrule
\multirow[c]{2}{*}[-6pt]{Select} & \makecell{Space filling\\(\textsc{Coverage})}
& $62.3\%$ & $4.45$ & $0.901$ & $10.52$ \\
\cmidrule(l{6pt}){2-6}
 & \makecell{Frontier\\(\textsc{Dispersion})}
& $62.3\%$ & $4.45$ & $0.005$ & $20.98$ \\
\midrule
\multirow[c]{2}{*}[-6pt]{Generate} & \makecell{Space filling\\(\textsc{MCMC})}
& $\mathbf{96.5\%}$ & $6.53$ & $\mathbf{0.905}$ & $7.91$ \\
\cmidrule(l{6pt}){2-6}
 & \makecell{Frontier\\(\textsc{Evolution})}
& $76.8\%$ & $\mathbf{9.47}$ & $0.002$ & $\mathbf{24.42}$ \\
\bottomrule
\end{tabular}%
}
\vspace{-1mm}
\captionof{table}{\textbf{Persona geometry across persona diversification methods.} \emph{Population} metrics evaluate the candidate pool: cell coverage, which is the occupancy of a fixed $1{,}024$-cell grid, and Mahalanobis hull extent. \emph{Subset} metrics evaluate the extracted personas used for response generation: \emph{Cov.@$r_0^\star$}, which is the coverage objective (\cref{eq:coverage}) with $r_0^\star$ defined as \cref{eq:adaptive-radius} ($c=0.9$ of reference population $\mathcal{P}_0$), and dispersion, which is the minimum pairwise Mahalanobis distance. Selection methods share the fixed base pool, while generation methods expand it.}
\label{tab:persona_diversity}
\end{minipage}
\hfill
\begin{minipage}[t]{0.47\textwidth}
\vspace{0pt}
\centering
\includegraphics[width=\linewidth]{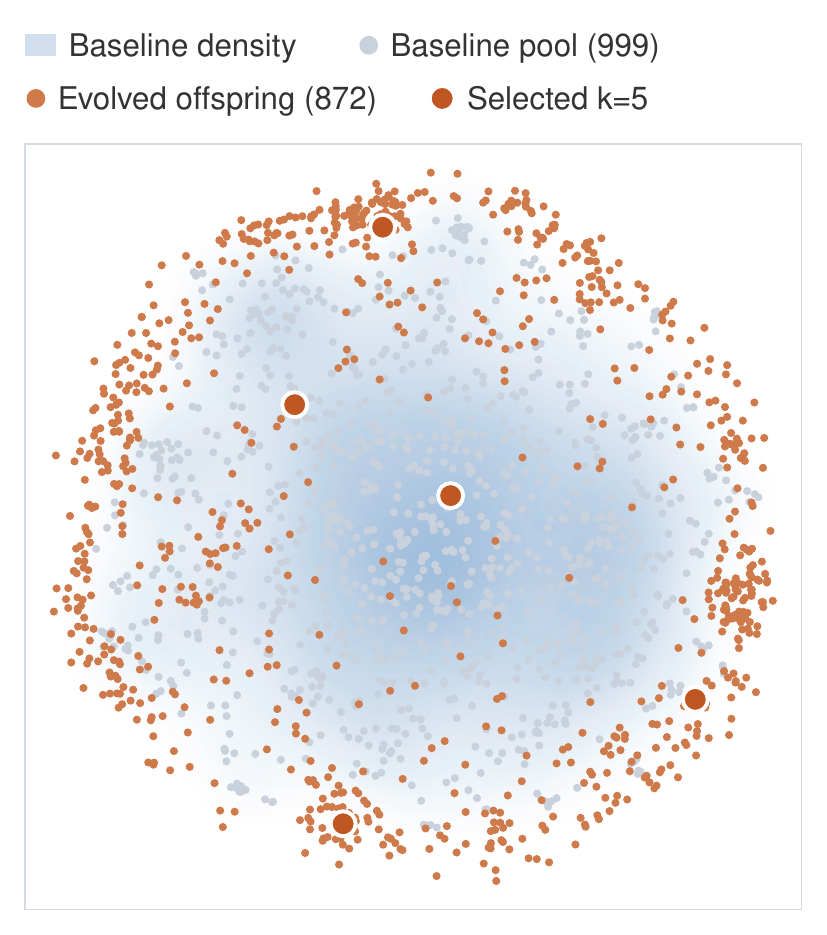}
\vspace{-3mm}
\captionof{figure}{\textbf{2D projection of persona embeddings illustrating evolutionary frontier generation.} The baseline pool (grey) is densely clustered in the center (blue shading), while evolved offspring (orange) populate low-density regions. The downstream selector extracts a $k=5$ subset (large outlined dots).}
\label{fig:persona_embedding}
\end{minipage}

\vspace{-3mm}
\end{figure*}

\section{Evaluation}
\label{sec:experiment}

We evaluate our framework on the Alternative Uses Task (AUT), Infinity-Chat (IC), and the Divergent Association Task (DAT). Responses are generated by Gemma-4-31B-it, with semantic embeddings provided by EmbeddingGemma and automated judgments by Qwen3.6-27B. To evaluate transfer without task-specific persona optimization, we hold each task-agnostic persona set fixed across benchmarks.

Original task-only prompts are denoted as \textsc{Alternative-Use} on AUT and \textsc{Baseline Task} on IC and DAT. For persona conditioning, \textsc{Random} samples uniformly from the PersonaMem-v2 \citep{jiang2025know} pool (itself a random subset of PersonaHub \citep{ge2024scaling}), while \textsc{Coverage} and \textsc{Dispersion} optimize selections from that pool. \textsc{MCMC} and \textsc{Evolution} expand the candidate pool before applying \textsc{Coverage} and \textsc{Dispersion} selectors, respectively. Prompt baselines include \textsc{Creativity-enhanced} \citep{goes2023pushing} on AUT, \textsc{Creative} \citep{schapiro2026creativityneuro} on DAT, and reasoning scaffold of DMAD (Diverse Multi-Agent Debate) \citep{liu2025breaking}. More prompt baselines are compared in Appendix~\ref{app:results}, but three top-performing baselines are reproduced in \Cref{sec:experiment}. Combined conditions add the selected personas with the corresponding prompt or reasoning scaffold.

Before filtering for validity, each condition generates 25 uses per AUT object, 50 responses per IC query, and 35 ten-noun lists for the DAT. We assess response diversity and creativity alongside validity and utility to identify tradeoffs. We employ a suite of metrics spanning human-derived categorical clustering, LLM-as-a-judge ratings, and semantic embedding metrics. Full prompts, additional baselines and controls, metric definitions, and results are provided in Appendix~\ref{app:experiment_details}--\ref{app:prompt}.


\paragraph{RQ1: What persona geometries do the two diversity principles produce at the selection and generation levels?}

As detailed in \Cref{tab:persona_diversity}, the framework effectively operationalizes its underlying principles, highlighting a fundamental tension between representation and separation. To contextualize these structural gains, we benchmark against \textsc{Random} as a standard, geometrically blind persona-conditioning pipeline.
At the selection level, \textsc{Coverage} covers $90.1\%$ of the baseline population, whereas \textsc{Dispersion} increases minimum pairwise distance by $53.5\%$ over \textsc{Random} ($13.67\to20.98$).
At the generation level, our methods break the fixed pool's structural boundaries. \textsc{MCMC} expands the population's cell coverage from $62.3\%$ to $96.5\%$, and its \textsc{Coverage}-selected subset covers $90.5\%$ of the population at $r^\star$.
In contrast, \textsc{Evolution} more than doubles the pool's Mahalanobis hull extent from $4.45$ to $9.47$. \Cref{fig:persona_embedding} visualizes this expansion, where the baseline pool remains densely clustered in the center and the evolved offspring deliberately push outward to establish a novel, low-density semantic boundary. Applying \textsc{Dispersion} to this expanded pool yields the highest minimum pairwise distance of $24.42$, a $78.6\%$ increase over \textsc{Random}.

Pool expansion alone can increase the metrics in \Cref{tab:persona_diversity} including, coverage, hull extend and dispersion. RQ1 therefore emphasizes the contrastic geometries achieves by two principles: \textsc{MCMC} achives greater cell coverage, whereas \textsc{Evolution} yields greater hull extent despite smaller pool than \textsc{MCMC}. These geometric gains do not guarantee improved response creativity. Therefore, RQ2--RQ4 evaluate whether persona sets optimized without downstream task induce more diverse and creative responses.

\paragraph{RQ2: How does persona-set dispersion relate to downstream response diversity?}

As shown in \Cref{fig:persona-response-dispersion}, across 14 configurations per benchmark, greater realized persona-set separation positively correlates with greater mean response dispersion on AUT ($r=.40$, descriptive 90\% CI $[-.19,.78]$), and Infinity-Chat ($r=.55$, $[.12,.81]$). Despite the variance, intervention-aligned comparisons reveal a robust directional trend: all six matched \textsc{Coverage}$\rightarrow$\textsc{Dispersion} escalations increased task-averaged response dispersion across all 12 contrasts. Additional analyses using Vendi score and convex-hull, showing even stronger correlations between persona-set dispersion and response diversity, appear in Appendix~\ref{app:persona_diversity}.

\begin{figure*}[t]
    \centering
    \includegraphics[width=\textwidth]{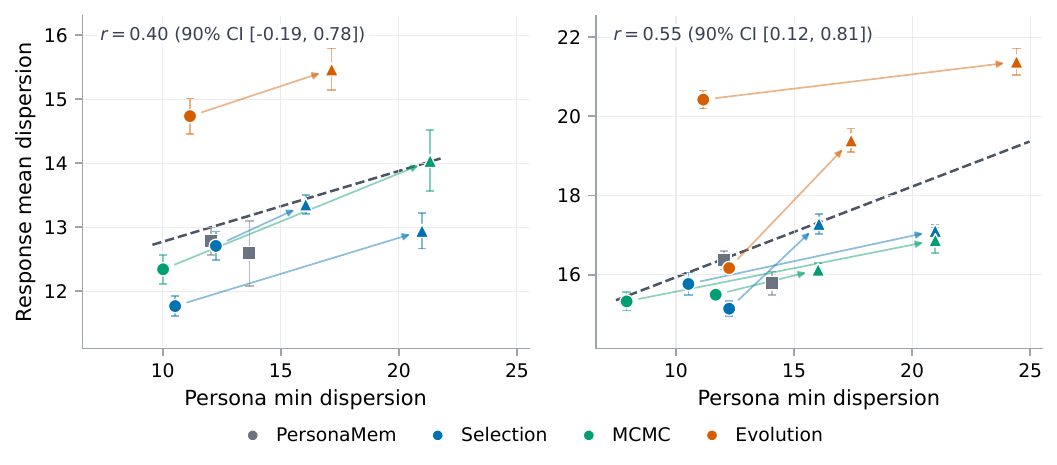}
    \vspace{-8mm}
    \caption{\textbf{Persona set and response dispersion on AUT (left) and Infinity-Chat (right).} Every points represent different configurations. Arrows connect six matched \textsc{Coverage}$\rightarrow$\textsc{Dispersion} contrasts while holding candidate-pool and distance metric fixed. Dashed lines show the linear fits and annotations report configuration-level Pearson correlations and 90\% confidence intervals.}
    \label{fig:persona-response-dispersion}
    \vspace{-4mm}
\end{figure*}

\paragraph{RQ3a: Do diversified task-agnostic personas improve meaningful creativity, and at what cost?} 

Yes. Task-agnostic diversification yields substantial creativity gains with minimal trade-offs. As shown in the top half of \Cref{tab:aut_creativity_summary}, \textsc{Random} persona conditioning improves diversity by $8.7\%$ ($1.42\to1.54$), originality by $10.2\%$ ($10.79\to11.89$) and creativity by $8.9\%$ ($2.72\to2.96)$ while sacrificing virtually no utility or validity. Our \textsc{Dispersion} selector further improves diversity, originality and flexibility, while our evolutionarily generated personas achieve the highest values across all four target metrics among the standard-prompt conditions. Relative to \textsc{Alternative-Use}, \textsc{Evolution} improves diversity by $78.8\%$ ($1.42\to2.53$), originality by $26.1\%$ ($10.79\to13.61$), flexibility by $49.5\%$ ($.288\to.431$) and creativity score by $13.9\%$ ($2.72\to3.10$). Compared to \textsc{Random}, \textsc{Evolution} yields relative improvements ranging from $4.7\%$ in creativity up to $\mathbf{64.3\%}$ in diversity. The cost of these leaps is modest: utility is reduced by just $0.22$ ($-5.1\%$) from the original task (\textsc{Alternative-Use}), while still producing valid responses $98.5\%$ of the time.

\paragraph{RQ3b: Do the creativity gains compose with prompt engineering?}

Yes. Persona diversification composes with, rather than competes against, advanced prompt engineering. As demonstrated in the bottom half of \Cref{tab:aut_creativity_summary}, applying our evolutionarily generated personas to the creativity-optimized prompt of \citet{goes2023pushing} amplifies its effects, yielding improvements of $+18.6\%$ in diversity, $+2.9\%$ in originality, $+15.3\%$ in flexibility, and $+6.3\%$ in creativity, all while maintaining utility ($3.38$) and improving validity ($.968\to.979$). It also shows competitive results against advanced reasoning frameworks, including \textsc{DMAD} \citep{liu2025breaking}, a state-of-the-art baseline that diversifies reasoning paths rather than personas.
Because the personas are generated without optimizing downstream task performance, they are particularly valuable for everyday, zero-shot user queries where elaborate prompt engineering is impractical.

\begin{table*}[t]
\centering\small
\setlength{\tabcolsep}{3pt}
\caption{\textbf{Response creativity evaluation on AUT.}
\emph{Diversity} is the hull extent in a 5-D PCA projection of Mahalanobis-whitened response embeddings, \emph{Originality} is the mean Mahalanobis distance to the common-use reference set, \emph{Flexibility} is the number of distinct categories per valid use, and \emph{Creativity} is the rank-normalized holistic score \citep{goes2023pushing} from an independent LLM judge. Subscripts give 95\% confidence-interval half-widths across five random seeds. 
Utility \citep{stevenson2022putting} and validity are \emph{controls} and are not bolded. \textbf{Bold} = best results within each group.}
\label{tab:aut_creativity_summary}
\vspace{-2mm}
\resizebox{\textwidth}{!}{%
\begin{tabular}{@{}lcccccc@{}}
\toprule
Variant & Diversity & Originality & Flexibility
& Creativity & Utility & Validity \\
\midrule
\textsc{Alternative-Use} & $1.42_{\pm 0.09}$ & $10.79_{\pm 0.32}$ & $.288_{\pm .022}$ & $2.72_{\pm 0.09}$ & $4.30_{\pm 0.05}$ & $.984_{\pm .014}$ \\
\addlinespace
\textsc{Random} & $1.54_{\pm 0.12}$ & $11.89_{\pm 0.48}$ & $.331_{\pm .024}$ & $2.96_{\pm 0.04}$ & $4.28_{\pm 0.02}$ & $.993_{\pm .006}$ \\
\addlinespace
\textsc{Dispersion} (selection) & $1.96_{\pm 0.17}$ & $12.00_{\pm 0.28}$ & $.357_{\pm .048}$ & $2.92_{\pm 0.04}$ & $4.23_{\pm 0.09}$ & $.986_{\pm .010}$ \\
\textsc{Evolution} (generation) & $\mathbf{2.53}_{\pm 0.20}$ & $\mathbf{13.61}_{\pm 0.17}$ & $\mathbf{.431}_{\pm .045}$ & $\mathbf{3.10}_{\pm 0.03}$ & $4.08_{\pm 0.06}$ & $.985_{\pm .012}$ \\
\midrule
\textsc{Creativity-enhanced} & $5.15_{\pm 0.16}$ & $17.15_{\pm 0.40}$ & $.503_{\pm .056}$ &  $3.19_{\pm 0.09}$ & $3.38_{\pm 0.07}$ & $.968_{\pm .011}$ \\
\textsc{DMAD} & $2.79_{\pm 0.22}$ & $13.90_{\pm 0.49}$ & $.372_{\pm .025}$ & $2.89_{\pm 0.07}$ & $4.17_{\pm 0.11}$ & $.977_{\pm .009}$ \\
\addlinespace
\textsc{Evolution + creativity} & $\mathbf{6.11}_{\pm 0.22}$ & $\mathbf{17.65}_{\pm 0.14}$ & $\mathbf{.580}_{\pm .046}$ & $\mathbf{3.39}_{\pm 0.07}$ & $3.38_{\pm 0.16}$ & $.979_{\pm .020}$ \\
\bottomrule
\end{tabular}%
}
\vspace{-2mm}
\end{table*}

\paragraph{RQ4: Do the task-agnostic gains transfer across tasks and compose with prompt engineering?}

Yes. Task-agnostic persona diversification transfers robustly across both open-ended and constrained task formats. On Infinity-Chat, \textsc{Dispersion} improves upon \textsc{Random} in all three diversity measures, while \textsc{Evolution} extends these gains further (\Cref{tab:crossbench_creativity_summary}). Relative to standard task prompt, \textsc{Evolution} reduces homogeneity ($0.946\to0.798$) and more than doubles flexibility ($1.39\to2.81$). Its between-persona separation gain also rises from $0.068$ of \textsc{Random} to $0.126$, indicating significantly more distinct responses across personas relative to variation within a single persona. Similarly, DAT exhibits the same ordering of escalation across the task-only \textsc{Baseline Task}, \textsc{Random}, \textsc{Dispersion} and \textsc{Evolution} conditions for both divergence and flexibility. On DAT, task-agnostic \textsc{Evolution} also exceeds \textsc{Creative} on these measures. Crucially, despite improved creativity, \textsc{Evolution} maintains near-perfect validity on both benchmarks ($99.2\%$ on Infinity-Chat and $99.6\%$ on DAT), proving that task adherence is not sacrificed.
Furthermore, these task-agnostic gains compose seamlessly with advanced reasoning scaffolds. Adding \textsc{Evolution} personas to \textsc{DMAD} further reduces Infinity-Chat homogeneity ($0.929\to0.900$) and raises both Infinity-Chat flexibility ($1.56\to1.81$) and DAT score ($90.62\to90.93$), while maintaining validity at $100\%$ and $99.7\%$, respectively, demonstrating the robust compositional benefits.

\begin{table*}[t]
\centering\small
\setlength{\tabcolsep}{3pt}
\caption{\textbf{Response creativity evaluation on Infinity-Chat (IC) and DAT.}
\emph{Homogeneity} is mean pairwise cosine similarity, \emph{Separation} is mean between-persona distance minus mean within-persona distance, \emph{Flexibility} is the Vendi score, measuring effective diversity under the embedding similarity kernel, and \emph{Divergence} is the DAT semantic score. Subscripts give 95\% confidence-interval half-widths across five random seeds.
\textbf{Bold} = best results within each group. Control (validity) and undefined combinations (--) are not bolded.}
\label{tab:crossbench_creativity_summary}
\vspace{-2mm}
\resizebox{\textwidth}{!}{%
\begin{tabular}{@{}lccccccc@{}}
\toprule
& \multicolumn{4}{c}{Infinity-Chat}
& \multicolumn{3}{c}{DAT} \\
\cmidrule(lr){2-5} \cmidrule(lr){6-8}
Variant & Homog. $\downarrow$ & Sep. $\uparrow$ & Flex. $\uparrow$ & Val. & Div. $\uparrow$ & Flex. $\uparrow$ & Val. \\
\midrule
\textsc{Baseline Task} & $0.946_{\pm 0.002}$ & -- & $1.39_{\pm 0.02}$ & $.950_{\pm .015}$ & $89.13_{\pm 0.49}$ & $1.79_{\pm 0.01}$ & $.996_{\pm .005}$ \\
\addlinespace
\textsc{Random} & $0.864_{\pm 0.005}$ & $0.068_{\pm 0.003}$ & $2.17_{\pm 0.05}$ & $1.000_{\pm .000}$ & $89.15_{\pm 1.24}$ & $1.83_{\pm 0.04}$ & $.993_{\pm .011}$ \\
\addlinespace
\textsc{Dispersion} (selection) & $0.840_{\pm 0.004}$ & $0.085_{\pm 0.006}$ & $2.40_{\pm 0.05}$ & $.986_{\pm .007}$ & $89.39_{\pm 0.26}$ & $1.84_{\pm 0.01}$ & $.995_{\pm .006}$ \\
\textsc{Evolution} (generation) & $\mathbf{0.798}_{\pm 0.006}$ & $\mathbf{0.126}_{\pm 0.006}$ & $\mathbf{2.81}_{\pm 0.08}$ & $.992_{\pm .005}$ & $\mathbf{89.48}_{\pm 0.32}$ & $\mathbf{1.85}_{\pm 0.01}$ & $.996_{\pm .004}$ \\
\midrule
\textsc{Creative} & -- & -- & -- & -- & $86.05_{\pm 0.43}$ & $1.69_{\pm 0.01}$ & $.977_{\pm .010}$ \\
\textsc{DMAD} & $0.929_{\pm 0.002}$ & -- & $1.56_{\pm 0.01}$ & $1.000_{\pm .000}$ & $90.62_{\pm 0.25}$ & $1.89_{\pm 0.01}$ & $.994_{\pm .006}$ \\
\addlinespace
\textsc{Evolution + DMAD} & $\mathbf{0.900}_{\pm 0.001}$ & $0.021_{\pm 0.003}$ & $\mathbf{1.81}_{\pm 0.01}$ & $1.000_{\pm .000}$ & $\mathbf{90.93}_{\pm 0.29}$ & $1.89_{\pm 0.01}$ & $.997_{\pm .008}$ \\
\bottomrule
\end{tabular}%
}
\vspace{-5mm}
\end{table*}

\vspace{-2mm}
\section{Conclusion}
\vspace{-2mm}
This work framed persona-diversity induction as a set-level design problem, distinguishing space-filling from frontier-seeking diversity and selection from generation. The resulting framework provides principled ways to control persona-set geometry and shows that representational diversity and ideational diversity are not the same objective: the persona set that best represents a population need not be the one that elicits the widest range of ideas. More broadly, persona geometry offers a task-agnostic mechanism for shaping the range of LLM outputs, complementary to prompt optimization and inference-time scaffolding, and points toward systems that can deliberately control not only the quality of an answer, but the diversity of perspectives from which answers are generated.

\subsection*{AI use statement}

In this work, we used generative AI tools to 
help develop theoretical models or conceptual frameworks, formulate mathematical claims, provide critical ingredients for proving mathematical claims, assist in the writing of proofs, design or provide feedback on research methodology or experiments, implement methods, support qualitative and thematic data analysis, and interpret results.
We have not used generative AI tools to
propose or refine hypotheses
and 
generate synthetic data sets, assist with translation, and clean and reformat dataset
are not applicable to this work.
Additionally, we used generative AI tools to 
create or modify scientific figures or images, create or edit software code, creation of artifacts, summarize or analyse existing literature, discover research topics or identify gaps, brainstorming, sourcing/searching for information, edit a research paper to improve readability, identify relevant literature, format references, and propose a title or keywords for a research paper.
We have reviewed all AI-assisted work. We reviewed and tested AI-generated experiment code to run as intended, and all conversations with AI are verified before applied to the paper.
We take responsibility for the final content of this work, including text, claims or artifacts produced with the aid of generative AI.

\subsection*{Reproducibility Statement}

Sections~\ref{sec:problem} and~\ref{sec:method} define the persona representations, diversification objectives, and algorithms. Appendices~\ref{app:selection} and~\ref{app:generation} provide detailed algorithmic procedures, the assumptions underlying the theoretical guarantees, and their proofs. Appendix~\ref{app:experiment_details} documents the benchmark data sources, baseline construction, generation and preprocessing protocols, evaluation metrics, annotation procedures, and LLM-judge validation. Appendix~\ref{app:results} provides the full experimental comparisons, with performance estimates and confidence intervals across five generation seeds. Extensive examples of representative personas and prompt templates for persona construction, response generation, and evaluation are provided in Appendices~\ref{app:persona} and~\ref{app:prompt}, respectively.

\bibliography{iclr2027_conference}
\bibliographystyle{iclr2027_conference}

\clearpage
\appendix
\etocdepthtag.toc{appendices}

\begingroup
\small
\hypersetup{hidelinks}
\etocsettagdepth{mainpaper}{none}
\etocsettagdepth{appendices}{subsection}
\etocsetnexttocdepth{subsection}
\etocsettocstyle{\section*{Appendix Contents}}{}
\tableofcontents
\endgroup

\clearpage

\section{Related Works}

\paragraph{Persona-conditioned and personalized language modeling.}
Role prompting is a widely used interface for steering large language models, but recent work suggests that a persona can be treated as a more structured object than a generic expert role. PersonaHub scales persona-conditioned data synthesis to a billion synthetic personas, showing that persona descriptions can serve as reusable carriers of perspective and task variation \citep{ge2024scaling}. In parallel, work on personalized response generation has emphasized that user profiles are dynamic, sparse, and context-dependent: PersonaMem benchmarks LLMs on maintaining and using evolving user profiles over long interaction histories \citep{jiang2025know}. These efforts establish personas as useful conditioning variables, but they primarily focus on sampling, profiling, or evaluating personalization. Other work cautions that LLM-generated personas may encode systematic biases or brittle assumptions, especially when used for downstream simulation or evaluation \citep{li2026llm}. In contrast, our work studies personas as \emph{optimizable} latent controls: rather than relying on generic roles or fixed persona pools, we extract, adapt, select, and contrast personas to improve the novelty and diversity of model generations.

\paragraph{Diversity-aware generation and optimization.}
A long line of work has studied diversity in generation and subset selection. Determinantal point processes provide a principled probabilistic model for selecting diverse subsets while balancing quality and diversity \citep{kulesza2012dpp}. Submodular facility-location objectives similarly formalize coverage and redundancy reduction, with classical greedy algorithms providing approximation guarantees for monotone submodular maximization under cardinality constraints \citep{nemhauser1978analysis}. In evolutionary computation, quality-diversity methods such as MAP-Elites search for collections of high-performing but behaviorally distinct solutions, rather than optimizing a single best point \citep{mouret2015mapelites}. Recent LLM work has brought related ideas into prompt and decoding optimization: Promptbreeder evolves task prompts through population-based self-improvement \citep{fernando2024promptbreeder}, Plug-and-Play Language Models steer generation through gradient-based control without retraining the base model \citep{dathathri2020pplm}, and Contrastive Search improves open-ended generation by discouraging degeneration while preserving coherence \citep{su2023contrastivesearch}. Our approach builds on these ideas but changes the optimization target: we optimize \emph{persona populations} rather than only prompts, logits, or samples, with the goal of producing useful creative diversity rather than only improving task accuracy or surface-level lexical variation.

\paragraph{Pluralism, coverage, and model homogeneity.}
The motivation for optimizing persona diversity is closely related to recent concerns about output-space collapse. Work on pluralistic alignment argues that many tasks admit a spectrum of valid responses, and that alignment should preserve calibrated diversity rather than collapse toward a single normative answer \citep{sorensen2024pluralism}. Spectrum Tuning further operationalizes this idea through distributional coverage and in-context steerability, showing that post-training can affect how well models cover diverse valid responses \citep{sorensen2025spectrum}. Related evidence on artificial hiveminds suggests that LLMs can exhibit substantial homogeneity on open-ended tasks, even when many distinct responses would be reasonable \citep{jiang2026artificial}. These findings motivate our central hypothesis: increasing structured diversity in the persona or prompt space can increase diversity in the response space, provided that the personas remain task-relevant and high quality.

\paragraph{Creativity benchmarks and metrics.}
Evaluating creativity in LLMs is challenging because novelty, usefulness, surprise, fluency, and diversity can diverge. Classic divergent-thinking paradigms such as the Alternative Uses Test have been adapted to evaluate whether LLMs can produce semantically distant but plausible uses for everyday objects \citep{stevenson2022putting}, while the Divergent Association Task probes whether models can generate remote semantic associations \citep{chen2023probing}. For creative writing, TTCW shows that LLM-generated stories often fall short of professional human writing under expert evaluation \citep{chakrabarty2024art}, and CS4 studies creativity under increasing numbers of story-writing constraints, highlighting trade-offs between originality, coherence, and instruction following \citep{atmakuru2024cs4}. Complementary metric work such as the Creativity Index quantifies linguistic novelty through attribution against web text \citep{lu2025ai}, while CreativityPrism evaluates creativity holistically across tasks, domains, and metrics \citep{hou2026creativityprism}. These benchmarks motivate our evaluation design: to show that persona-conditioned generation improves creativity, it is not sufficient to produce stylistically different outputs; the outputs must be measurably more novel, diverse, coherent, and useful across multiple creativity settings.

\section{Selection Algorithms and Proofs}
\label{app:selection}
\subsection{Adaptive Thresholded Coverage}
\label{app:coverage}
\subsubsection{Exact ILP formulation of the coverage objective}
As outlined in \Cref{sec:selection}, we formulate persona selection as a maximal-covering location problem rather than an additive maximum-similarity objective. While an additive model maximizes the summed similarity of every persona to its nearest exemplar \citep{wei2015submodularity}, its marginal gain weights every demand point's similarity improvement, allowing dense regions of the persona space to disproportionately dominate the objective. By contrast, under our thresholded coverage model, a candidate exemplar's marginal gain is strictly the number of previously uncovered demand personas inside its radius-$r$ neighborhood; already-covered points contribute zero additional gain. (Under cosine dissimilarity, this effectively thresholds the cosine similarity at $\tau_{\mathrm{cov}}=1-r$, yielding a $\tau_{\mathrm{cov}}$-coverage model).

Recall the coverage objective for a selected subset $\mathcal{S}$ of cardinality $k$ at a fixed radius $r$: $F_{\mathrm{cov},m}(\mathcal{S};\mathcal{P},r)$ (\eqref{eq:coverage}). We adopt the convention $\min\varnothing=+\infty$, ensuring that no persona is covered by the empty set and $F_{\mathrm{cov},m}(\varnothing;\mathcal{P},r)=0$.

At a fixed radius $r$, the maximum-coverage persona-selection problem seeks a subset $\mathcal S^\star_{\mathrm{cov},m}(\mathcal P;r)$ of cardinality $k$ that leaves as little of the candidate population uncovered as possible
\begin{equation}
    \mathcal S^\star_{\mathrm{cov},m}(\mathcal P;r)
    \in
    \arg\max_{\substack{\mathcal S\subseteq\mathcal P\\|\mathcal S|=k}} \dcov{m}(\mathcal S;\mathcal P,r).
    \label{eq:coverage-argmax}
\end{equation}

Because the inner $\min$ operator and the indicator function render this objective (\eqref{eq:coverage}) non-linear, we linearize it exactly. First, we define the coverage neighborhood of persona $p_i$, which always contains $i$ itself
\begin{equation}
    \mathcal J^{\mathrm{cov}}_m(i;r)
    =
    \bigl\{j\in[N_{\mathcal P}]:\dper{m}(p_i,p_j)\leq r\bigr\},
    \label{eq:coverage-neighborhood}
\end{equation}

We then introduce two sets of binary decision variables
\begin{itemize}
    \item $\chi_j\in\{0,1\}$: selection indicator for exemplar $p_j$;
    \item $z_i^{\mathrm{cov}}\in\{0,1\}$: covered indicator for candidate persona
          $p_i$.
\end{itemize}

Writing $\boldsymbol{\chi}=(\chi_1,\ldots,\chi_{N_{\mathcal{P}}})^\top$ and $\mathbf{z}^{\mathrm{cov}} =(z_1^{\mathrm{cov}},\ldots,z_{N_{\mathcal{P}}}^{\mathrm{cov}})^\top$, our goal is to select exactly $k$ exemplars to maximize the unnormalized covered count. Structuring the objective to be integer-valued allows us to pose the exact maximal covering location model as the following Integer Linear Program (ILP)
\begin{align}
    \underset{\boldsymbol\chi,\mathbf z^{\mathrm{cov}}}{\text{maximize}}
    \quad &
    \sum_{i=1}^{N_{\mathcal P}}z_i^{\mathrm{cov}}
    \label{eq:coverage-mip-objective}\\
    \text{subject to}\quad
    & z_i^{\mathrm{cov}}
      \leq\sum_{j\in\mathcal J^{\mathrm{cov}}_m(i;r)}\chi_j
      && \forall i\in[N_{\mathcal P}],
      \label{eq:coverage-mip-cover}\\
    & \sum_{j=1}^{N_{\mathcal P}} \chi_j=k,
      \label{eq:coverage-mip-cardinality}\\
    & \chi_j\in\{0,1\},\quad z_i^{\mathrm{cov}}\in\{0,1\}
      && \forall i,j\in[N_{\mathcal P}].
      \label{eq:coverage-mip-domains}
\end{align}
Dividing the optimum of this solver objective by $N_{\mathcal{P}}$ perfectly recovers $F_{\mathrm{cov},m}$ (\eqref{eq:coverage}).

\begin{remark}[Model Size]
This program requires only $2N_{\mathcal P}$ binary variables and $N_{\mathcal P}+1$ constraints. This is highly efficient compared to the $N_{\mathcal P}^2$ continuous assignment variables and $N_{\mathcal P}^2$ linking constraints required by the strong $k$-median formulation of the additive objective. Furthermore, the dissimilarities enter the coverage constraint (\eqref{eq:coverage-mip-cover}) only through the Boolean level sets $\mathcal{J}^{\mathrm{cov}}_m(i;r)$ (\eqref{eq:coverage-neighborhood}), meaning no similarity normalization, frozen reference scale, or integer rescaling of distances is needed to pose the model.
\end{remark}

\subsubsection{Formulation properties}
The thresholded coverage objective avoids the complex scale bookkeeping required by additive formulations. Its key structural properties stem from the fact that the optimization depends on the dissimilarity metric strictly through its Boolean level sets.

\begin{theorem}[Ordinal Invariance]
\label{thm:coverage-ordinal}
Let $g:\mathbb R_{\geq0}\rightarrow\mathbb R_{\geq0}$ be strictly increasing function with $g(0)=0$. Replacing the dissimilarity metric $\dper{m}$ with $g\circ\dper{m}$ and the radius $r$ by $g(r)$ leaves every coverage neighborhood $\mathcal{J}^{\mathrm{cov}}_m(i;r)$ (\eqref{eq:coverage-neighborhood}) unchanged. Consequently, the feasible set, the objective function, and the optimizers of the ILP formulation (\eqref{eq:coverage-mip-objective}--\eqref{eq:coverage-mip-domains}) remain strictly identical.
\end{theorem}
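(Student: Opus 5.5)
The plan is to show that the neighborhoods do not change, and then observe that the ILP depends on the dissimilarity only through those neighborhoods. Fix $i\in[N_{\mathcal P}]$ and a radius $r\ge 0$. For any $j$, write $d=\dper{m}(p_i,p_j)\ge 0$. Since $g$ is strictly increasing on $\mathbb R_{\ge0}$, we have $d\le r$ if and only if $g(d)\le g(r)$.

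For the forward direction, monotonicity gives $g(d)\le g(r)$ whenever $d\le r$. For the converse, suppose $d>r$. Strict monotonicity then forces $g(d)>g(r)$, which is the contrapositive we need. Hence, for every $i$,
$$\{j:\ g(\dper{m}(p_i,p_j))\le g(r)\}=\mathcal J^{\mathrm{cov}}_m(i;r).$$
The hypothesis $g(0)=0$ is not needed for this equivalence. It only ensures that $g\circ\dper{m}$ is still a nonnegative dissimilarity vanishing on the diagonal, so the transformed instance stays in the same class and $i\in\mathcal J^{\mathrm{cov}}_m(i;r)$ is preserved. I would state this explicitly.

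Next I would inspect the ILP \eqref{eq:coverage-mip-objective}--\eqref{eq:coverage-mip-domains} term by term.
\begin{itemize}
\item The objective $\sum_i z_i^{\mathrm{cov}}$ does not involve distances.
\item The cardinality constraint \eqref{eq:coverage-mip-cardinality} and the binary domains \eqref{eq:coverage-mip-domains} do not involve distances.
\item The covering constraint \eqref{eq:coverage-mip-cover} involves the dissimilarity only through the index set $\mathcal J^{\mathrm{cov}}_m(i;r)$.
\end{itemize}
So the transformed program has literally the same variables, constraints and objective coefficients. Its feasible set, its objective function on that set, its optimal value and its set of optimizers therefore all coincide with those of the original program.

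I could optionally add that the indicator in \eqref{eq:coverage} is unchanged for every $\mathcal S$. The reason is that $g$ strictly increasing commutes with the minimum, so $\min_j g(d_{ij})=g(\min_j d_{ij})$, and $g(\min_j d_{ij})\le g(r)$ holds iff $\min_j d_{ij}\le r$. This gives invariance of $\dcov{m}$ itself, not just of its ILP encoding.

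There is no real obstacle here. The only point needing care is the converse implication, which uses strictness, since a merely nondecreasing $g$ could merge the level sets. Some care is also needed to keep the statement at a fixed radius. Invariance of the calibrated radius $r^\star_m$, which would transform to $g(r^\star_m)$, is a separate corollary: it follows because $g$ maps the spectrum $\Lambda_m(\mathcal P)$ order-preservingly onto the new spectrum. I would mention it but not rely on it.
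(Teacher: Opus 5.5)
Your proposal is correct and follows essentially the same route as the paper: you show that strict monotonicity gives $\dper{m}(p_i,p_j)\le r \iff g(\dper{m}(p_i,p_j))\le g(r)$, so every neighborhood $\mathcal J^{\mathrm{cov}}_m(i;r)$ is preserved, and the ILP depends on the dissimilarity only through those sets. Your additions are sound but go beyond what the paper needs: the explicit converse direction via strictness, the remark that $g(0)=0$ is inessential, invariance of $\dcov{m}$ itself, and the $r^\star_m \mapsto g(r^\star_m)$ corollary.
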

\begin{proof}
The program depends on the dissimilarity metric solely through the sets $\mathcal J^{\mathrm{cov}}_m(i;r)$. Because $g$ is strictly increasing, the inequality $\dper{m}(p_i,p_j)\leq r$ holds if and only if $g(\dper{m}(p_i,p_j))\leq g(r)$. Therefore, every level set, and the entire optimization program by extension, is perfectly preserved.
\end{proof}

In practice, this means that formulating the cosine variant as a similarity threshold $\tau_{\mathrm{cov}}$ or as a distance radius $r=1-\tau_{\mathrm{cov}}$ selects the exact same personas. No frozen reference scale is required to ensure the objective is non-negative or strictly comparable across different candidate populations.

\begin{theorem}[Integrality of Relaxed Coverage Indicators]
If only the covered indicators are relaxed to continuous bounds $z_i^{\mathrm{cov}}\in[0,1]$, then for any fixed, feasible binary selection vector $\boldsymbol\chi\in\{0,1\}^{N_{\mathcal P}}$, the relaxed program is maximized by
$$(z_i^{\mathrm{cov}})^\star = \min\Bigl\{1,\sum_{j\in\mathcal J^{\mathrm{cov}}_m(i;r)}\chi_j\Bigr\}\in\{0,1\}.$$
Thus, this relaxation is exact: integrality is inherently guaranteed for the covered-indicator block whenever $\boldsymbol\chi$ is binary. (Note: This does not assert integrality of the full LP relaxation where $\boldsymbol\chi$ is also relaxed.)
\end{theorem}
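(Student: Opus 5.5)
The plan is to fix a binary selection vector $\boldsymbol\chi\in\{0,1\}^{N_{\mathcal P}}$ satisfying the cardinality constraint \eqref{eq:coverage-mip-cardinality}. Once $\boldsymbol\chi$ is fixed, the relaxed program decouples completely across $i$. The objective \eqref{eq:coverage-mip-objective} is the separable sum $\sum_i z_i^{\mathrm{cov}}$. Each $z_i^{\mathrm{cov}}$ appears in exactly one coupling constraint, \eqref{eq:coverage-mip-cover}, together with its own box bound $0\le z_i^{\mathrm{cov}}\le 1$. So maximizing the sum is equivalent to maximizing each $z_i^{\mathrm{cov}}$ independently over the interval
$$\Bigl[0,\ \min\Bigl\{1,\textstyle\sum_{j\in\mathcal J^{\mathrm{cov}}_m(i;r)}\chi_j\Bigr\}\Bigr].$$
This interval is nonempty because the sum of binary variables is nonnegative. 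The objective is increasing in $z_i^{\mathrm{cov}}$, so the maximizer is the right endpoint. I would also note that any other feasible choice gives a weakly smaller objective, so this is an optimum, and it is the unique optimum in the $z$-block.

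Next I would verify integrality. Let $s_i=\sum_{j\in\mathcal J^{\mathrm{cov}}_m(i;r)}\chi_j$. Since $\boldsymbol\chi$ is binary, $s_i$ is a nonnegative integer. Hence $\min\{1,s_i\}$ equals $0$ when $s_i=0$ and equals $1$ when $s_i\ge1$, so it lies in $\{0,1\}$. In words, $(z_i^{\mathrm{cov}})^\star=\mathbbm 1\{\exists j\in\mathcal J^{\mathrm{cov}}_m(i;r):\chi_j=1\}$. By the definition of the neighborhood \eqref{eq:coverage-neighborhood}, this is exactly the indicator $\mathbbm 1\{\min_{p_j\in\mathcal S}\dper{m}(p_i,p_j)\le r\}$ with $\mathcal S=\{p_j:\chi_j=1\}$. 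So the relaxed optimal value coincides with $N_{\mathcal P}\dcov{m}(\mathcal S;\mathcal P,r)$, which matches the binary program.

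Finally, to conclude exactness of the relaxation, I would argue as follows. For every binary $\boldsymbol\chi$, the relaxed inner maximum is attained at an integral $\mathbf z$. Therefore the feasible set of the relaxed mixed program and that of the original ILP have the same optimal value for each $\boldsymbol\chi$. Taking the maximum over binary $\boldsymbol\chi$, the two programs share their optimal value and their optimizers can be taken integral. I would close with the caveat already stated in the theorem: if $\boldsymbol\chi$ is fractional, $s_i$ may be fractional, so no integrality claim is made for the full LP relaxation.

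There is no real obstacle here. The only point needing care is the decoupling claim, namely that no constraint links different $z_i$. This is immediate from inspecting \eqref{eq:coverage-mip-cover}--\eqref{eq:coverage-mip-domains}.
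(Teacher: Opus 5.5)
Your proof is correct and takes the same route as the paper's: with $\boldsymbol\chi$ fixed and binary, the $z$-rows decouple, the monotone objective pushes each $z_i^{\mathrm{cov}}$ up to $\min\{1,\sum_{j\in\mathcal J^{\mathrm{cov}}_m(i;r)}\chi_j\}$, and the minimum of $1$ and a nonnegative integer is binary. Your extra points are valid elaborations that the paper leaves implicit: uniqueness of the $z$-block optimum, identification of that optimum with the coverage indicator, and equality of the optimal values after maximizing over binary $\boldsymbol\chi$.
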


\begin{proof}
The objective is non-decreasing in each $z_i^{\mathrm{cov}}$, and distinct rows are coupled only through $\boldsymbol\chi$. Therefore, the solver independently raises each $z_i^{\mathrm{cov}}$ to the largest value permitted by the coverage constraint (\eqref{eq:coverage-mip-cover}) and the upper bound $z_i^{\mathrm{cov}}\leq1$. That value is the minimum of $1$ and a non-negative integer, which is inherently binary.
\end{proof}

\begin{remark}[Demand Aggregation]
Personas sharing identical coverage neighborhoods at radius $r$ can be merged into a single demand row with an integer weight without altering the optimum. While near-duplicate personas may share their neighborhoods and can thus be aggregated, spatial proximity alone does not guarantee identical neighborhood rows.
\end{remark}

\begin{lemma}[Monotone Submodularity]
\label{lem:coverage-submodular}
The unnormalized covered count $\mathcal S\mapsto N_{\mathcal P}\,\dcov{m}(\mathcal S;\mathcal P,r)$ is normalized, monotone, and submodular. Therefore, a standard greedy algorithm
attains at least a $(1-1/e)$ fraction of the optimal covered count \citep{nemhauser1978analysis}.
\end{lemma}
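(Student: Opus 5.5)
The plan is to express the covered count as the cardinality of a union of fixed neighborhoods, and then obtain all three properties from elementary facts about unions. For each candidate exemplar $p_j$, define its \emph{coverage ball}
$$C_j(r)=\bigl\{i\in[N_{\mathcal P}]:\dper{m}(p_i,p_j)\leq r\bigr\}.$$
By the definition in \eqref{eq:coverage}, persona $p_i$ is covered by $\mathcal S$ exactly when $\min_{p_j\in\mathcal S}\dper{m}(p_i,p_j)\leq r$. This holds if and only if $i\in C_j(r)$ for some $p_j\in\mathcal S$. Hence
$$f(\mathcal S):=N_{\mathcal P}\,\dcov{m}(\mathcal S;\mathcal P,r)=\Bigl|\bigcup_{p_j\in\mathcal S}C_j(r)\Bigr|.$$
The first step is to verify this identity carefully. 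It needs the convention $\min\varnothing=+\infty$, so that the empty set yields the empty union. It also uses symmetry of $\dper{m}$ only in that we index the balls by exemplars, so no metric axioms beyond the definition are required.

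With the identity in hand, the three properties follow directly. \emph{Normalization}: $f(\varnothing)=|\varnothing|=0$. \emph{Monotonicity}: if $\mathcal A\subseteq\mathcal B$, the union over $\mathcal A$ is contained in the union over $\mathcal B$, so $f(\mathcal A)\leq f(\mathcal B)$. \emph{Submodularity}: we verify it through diminishing marginal gains. For $\mathcal A\subseteq\mathcal B$ and $p\notin\mathcal B$, write $U(\mathcal A)$ for the union of the balls indexed by $\mathcal A$. Then
$$f(\mathcal A\cup\{p\})-f(\mathcal A)=\bigl|C_p(r)\setminus U(\mathcal A)\bigr|\geq\bigl|C_p(r)\setminus U(\mathcal B)\bigr|=f(\mathcal B\cup\{p\})-f(\mathcal B).$$
The inequality holds because $U(\mathcal A)\subseteq U(\mathcal B)$. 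This is the standard fact that weighted set-cover functions are submodular. The marginal gain here is exactly the count of previously uncovered demand personas described in the ILP discussion.

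The greedy guarantee then follows by invoking the theorem of \citet{nemhauser1978analysis}. For a normalized, monotone, submodular set function maximized under the cardinality constraint $|\mathcal S|\le k$, greedy attains at least $(1-(1-1/k)^k)\geq 1-1/e$ of the optimum.

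The only subtle point is a mismatch in constraints: the paper's problem \eqref{eq:coverage-argmax} imposes $|\mathcal S|=k$, whereas Nemhauser's theorem uses $|\mathcal S|\leq k$. Monotonicity resolves this. Any set of size less than $k$ can be padded to size exactly $k$ without decreasing $f$, because $k\leq N_{\mathcal P}$. Therefore the two optima coincide, and the greedy output can likewise be padded (or already has size $k$). I expect this reconciliation, together with the careful statement of the union identity, to be the only step requiring attention; everything else is routine.
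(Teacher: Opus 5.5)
Your proof is correct and follows the same route as the paper: both write the covered count as the cardinality of the union of the exemplar neighborhoods $\{i : j \in \mathcal J^{\mathrm{cov}}_m(i;r)\}$ (your $C_j(r)$), and then use the fact that coverage functions are normalized, monotone, and submodular. You also spell out two steps the paper leaves implicit: the diminishing-returns computation, and the monotone-padding argument that reconciles the $|\mathcal S|=k$ constraint with the $|\mathcal S|\le k$ setting of Nemhauser's theorem.
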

\begin{proof}
The covered count evaluates to $\bigl|\bigcup_{p_j\in\mathcal S}\{i\in[N_{\mathcal P}]:j\in\mathcal J^{\mathrm{cov}}_m(i;r)\}\bigr|$, which is the cardinality of a union of finite sets indexed by the selected exemplars. This constitutes a classical coverage function, and all coverage functions are normalized, monotone, and submodular.
\end{proof}
As previewed in \Cref{sec:selection}, we utilize Lemma~\ref{lem:coverage-submodular} not as an approximation guarantee (since our final selection algorithm below is exact), but as the mathematical engine for the cheap, one-sided greedy certificates that keep the exact solver out of most of the vast majority of the radius search space.

\subsubsection{Adaptive radius calibration}
As outlined in \Cref{sec:selection}, manually fixing the radius $r$ would reintroduce exactly the dissimilarity-specific scaling that Theorem~\ref{thm:coverage-ordinal} explicitly eliminates. We therefore calibrate it from a scale-free parameter: a target coverage level $c_{\mathrm{cov}}\in(0,1]$, mapping to an absolute covered-count target $n_{\mathrm{cov}}=\lceil c_{\mathrm{cov}}N_{\mathcal P}\rceil$. 

The calibrated radius is defined as the smallest possible radius at which some size-$k$ subset successfully meets the target
\begin{equation}
    r^\star_m(\mathcal P;c_{\mathrm{cov}}) =
    \min\Bigl\{r\geq0: \max_{\substack{\mathcal S\subseteq\mathcal P\\|\mathcal S|=k}} N_{\mathcal P}\,\dcov{m}(\mathcal S;\mathcal P,r) \geq n_{\mathrm{cov}}\Bigr\},
    \label{eq:adaptive-radius}
\end{equation}
In a metric space, this formulation specialized to the discrete generalized $k$-center-with-outliers objective \citep{charikar2001algorithms}, accommodating $N_{\mathcal P}-n_{\mathrm{cov}}$ outliers. (Note: Our exact finite search operates strictly on pairwise dissimilarities and does not assume a triangle inequality.) The final selection algorithm is lexicographic: it first discovers the minimal radius $r^\star_m(\mathcal P;c_{\mathrm{cov}})$ (\eqref{eq:adaptive-radius}), and then break ties among
radius-optimal subsets by maximizing the covered count (\eqref{eq:coverage-mip-objective}) at that exact radius. We denote this calibrated selector by
\begin{equation}
    \mathcal S^\star_{\mathrm{acov},m}(\mathcal P;c_{\mathrm{cov}})
    \in
    \arg\max_{\substack{\mathcal S\subseteq\mathcal P\\|\mathcal S|=k}} \dcov{m}\!\left(\mathcal S;\mathcal P, r^\star_m(\mathcal P;c_{\mathrm{cov}})\right).
    \label{eq:adaptive-coverage-selector}
\end{equation}

\begin{lemma}[Finite Candidate Radii]
\label{lem:candidate-radii}
For the finite pairwise spectrum $\Lambda_m(\mathcal P)$ defined in \eqref{eq:persona-pairwise-spectrum}, every fixed subset $\mathcal S$ has a covered count that is non-decreasing in $r$ and constant between consecutive values of $\Lambda_m(\mathcal P)$. Consequently, the same holds for its maximum over all size-$k$ subsets. The feasible set of the adaptive radius objective (\eqref{eq:adaptive-radius}) is therefore the half-line $[r^\star_m(\mathcal P;c_{\mathrm{cov}}),\infty)$, and the exact minimum $r^\star_m(\mathcal P;c_{\mathrm{cov}})
\in\Lambda_m(\mathcal P)$.
\end{lemma}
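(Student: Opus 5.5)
The plan is to argue at the level of a single fixed subset first, then lift to the maximum over subsets, and finally read off the structure of the feasible set in \eqref{eq:adaptive-radius}.

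\emph{Monotonicity and step structure for fixed $\mathcal S$.} Fix $\mathcal S\subseteq\mathcal P$ with $|\mathcal S|=k\geq1$. For each $i$, the quantity $\delta_i(\mathcal S)=\min_{p_j\in\mathcal S}\dper{m}(p_i,p_j)$ is a minimum over finitely many entries of $\mathbf D_m^{\mathrm P}$, so $\delta_i(\mathcal S)\in\Lambda_m(\mathcal P)$. The covered count is
$$N_{\mathcal P}\,\dcov{m}(\mathcal S;\mathcal P,r)=\sum_{i=1}^{N_{\mathcal P}}\mathbbm 1\{\delta_i(\mathcal S)\leq r\},$$
a finite sum of right-continuous step functions of $r$. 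Each indicator is non-decreasing in $r$ and jumps only at $r=\delta_i(\mathcal S)\in\Lambda_m(\mathcal P)$. Hence the count is non-decreasing and constant on each interval $[r_{m,(\ell)},r_{m,(\ell+1)})$, and on $[r_{m,(L_m)},\infty)$.

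\emph{Lifting to the maximum.} Let $M(r)=\max_{|\mathcal S|=k}N_{\mathcal P}\dcov{m}(\mathcal S;\mathcal P,r)$. This is a pointwise maximum over finitely many functions, each non-decreasing and constant on the same intervals. Therefore $M$ inherits both properties: for $r\le r'$, each term satisfies $f_{\mathcal S}(r)\le f_{\mathcal S}(r')\le M(r')$. Likewise, on each interval every term is constant, so their maximum is constant as well.

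\emph{Feasible set and attainment.} Let $R=\{r\geq0:M(r)\geq n_{\mathrm{cov}}\}$. Because $M$ is monotone, $R$ is upward closed. It is also nonempty. At $r=r_{m,(L_m)}=\max\Lambda_m(\mathcal P)$ every $\delta_i(\mathcal S)\le r$, so $M(r)=N_{\mathcal P}\ge n_{\mathrm{cov}}$, using $c_{\mathrm{cov}}\le1$. (Note $0\in\Lambda_m$ via the diagonal, so the spectrum is nonempty.) The key step is to show that the infimum of $R$ is attained and lies in the spectrum, and I do this by a right-continuity argument. Suppose $r\in R$, and let $r_{m,(\ell)}$ be the largest spectrum value that is at most $r$; this exists since $0\in\Lambda_m$ and $r\ge0$. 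By constancy on $[r_{m,(\ell)},r_{m,(\ell+1)})$, we get $M(r_{m,(\ell)})=M(r)\geq n_{\mathrm{cov}}$, so $r_{m,(\ell)}\in R$. Hence $\inf R$ equals the smallest spectrum value in $R$, a minimum over a finite nonempty set. This gives $r^\star_m\in\Lambda_m(\mathcal P)$ and $R=[r^\star_m,\infty)$.

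The only delicate point is the convention that coverage uses the non-strict inequality $\le r$. This is what makes each step function right-continuous, so that the minimum is attained exactly at a spectrum value rather than approached from above. I will state this explicitly in the proof.
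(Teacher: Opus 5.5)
Your proof is correct and follows essentially the same route as the paper's: each covered count is a non-decreasing step function with jumps only at values of $\Lambda_m(\mathcal P)$, the pointwise maximum over the finitely many size-$k$ subsets inherits this, and so the minimal feasible radius is attained at a spectrum value. You also make explicit three points the paper leaves implicit: the feasible set is nonempty (because $c_{\mathrm{cov}}\le 1$, every persona is covered at $r=r_{m,(L_m)}$); the minimum is attained because the non-strict threshold makes each step function right-continuous; and $0\in\Lambda_m(\mathcal P)$, which handles the case $r^\star_m=0$.
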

\begin{proof}
As $r$ grows, the indicator function in the coverage objective (\eqref{eq:coverage}) can change only when $r$ crosses a realized dissimilarity in the matrix, as the coverage events
$\{\min_{p_j\in\mathcal S}\dper{m}(p_i,p_j)\leq r\}$ are nested in $r$. Hence, every covered count is a non-decreasing step function whose jumps lie exactly in $\Lambda_m(\mathcal P)$. Since the pointwise maximum over the finitely many size-$k$ subsets is also a step function with jumps in $\Lambda_m(\mathcal P)$, the smallest feasible radius is inherently attained at a jump point.
\end{proof}

By Lemma~\ref{lem:candidate-radii}, calibration reduces to an exact discrete search over the sorted distinct values of $\Lambda_m(\mathcal P)$ \citep{hochbaum1986unified}. To avoid invoking the computationally heavy ILP solver for every probe, we evaluate the monotone decision problem of whether $k$ exemplars can cover at least $n_{\mathrm{cov}}$ personas at radius $r$, using a standard size-$k$ greedy maximum-coverage construction. Let $G_m(r)$ denote the covered count returned by this greedy heuristic. Three certificate mechanisms resolve the vast majority of search probes instantly:
\begin{enumerate}[noitemsep, leftmargin=*, topsep=0pt, partopsep=0pt]
    \item \emph{Greedy feasibility certificate.}  If $G_m(r)\geq n_{\mathrm{cov}}$, the probed radius is immediately certified as feasible.
    \item \emph{Greedy infeasibility certificate.}  By the monotone submodularity established in Lemma~\ref{lem:coverage-submodular}, the optimal covered count is strictly bounded above by $\lfloor G_m(r)/(1-1/e)\rfloor$. If this proven upper bound falls below $n_{\mathrm{cov}}$, the probed radius is mathematically certified as infeasible.
    \item \emph{Certificate tightening.}  Any feasible selection $\mathcal S$ remains feasible down to its own minimal radius (the $n_{\mathrm{cov}}$-th smallest nearest-exemplar dissimilarity under $\mathcal S$). This allows the feasible bracket to jump far below the initially probed radius in a single computational step.
\end{enumerate}

Probes falling into the narrow residual window are resolved via bounded optimization. The exact ILP solver attempts to maximize the covered count but is instructed to terminate early the moment an incumbent reaches $n_{\mathrm{cov}}$ (a feasibility certificate) or its proven upper bound drops below $n_{\mathrm{cov}}$ (an infeasibility certificate that remains mathematically valid even if cut off by a time limit, because the objective is strictly integer-valued). By monotonicity, the search terminates as soon as it certifies that the immediate candidate predecessor of $r^\star_m(\mathcal P;c_{\mathrm{cov}})$ is infeasible, whenever such a predecessor exists. That final certificate may come from the greedy upper bound or the exact solver; no lower-candidate certificate is needed when the optimum is $r_{m,(1)}$, the absolute minimum dissimilarity in the spectrum.

\subsubsection{Exact algorithm}
Maximum coverage is NP-hard \citep{feige1998threshold} when the selection budget is part of the input, and exact optimization can be computationally expensive in the worst-case. However, our compact ILP formulation (\eqref{eq:coverage-mip-objective}--\eqref{eq:coverage-mip-domains}) is highly tractable in practice. To resolve the exact probes during discrete search, the formulation is warm-started with the greedy incumbent and evaluated by an exact integer or constraint solver (e.g., CP-SAT). 

We accept a final solution as exact only when it achieves certified optimal status. Because the unnormalized covered count is integral, an incumbent solution is automatically certified as optimal once its proven mathematical upper bound drops less than one unit above it.

\begin{algorithm}[htbp]
\caption{Exact Adaptive Thresholded-Coverage Selection}
\label{alg:exact_coverage}
\begin{algorithmic}[1]
    \State \textbf{Input:} Candidate population $\mathcal P$ of size $N_{\mathcal P}$, dissimilarity index $m$, target size $k$, target coverage $c_{\mathrm{cov}}$
    \State Embed each $p_i\in\mathcal P$
        \Comment{$\mathcal{O}(N_{\mathcal P})$ encoder calls}
    \State Compute $D^{\mathrm P}_{m,ij}\gets\dper{m}(p_i,p_j)$ and $\mathbf D_m^{\mathrm P}\gets(D^{\mathrm P}_{m,ij})_{i,j=1}^{N_{\mathcal P}}$
        \Comment{$\mathcal{O}(N_{\mathcal P}^2)$ pair evaluations}
    \State $n_{\mathrm{cov}}\gets\lceil c_{\mathrm{cov}}N_{\mathcal P}\rceil$
        \Comment{$\mathcal{O}(1)$}
    \State Form $\Lambda_m(\mathcal P)$ and sort it as $r_{m,(1)}<\cdots<r_{m,(L_m)}$
        \Comment{naively $\mathcal{O}(N_{\mathcal P}^2\log N_{\mathcal P})$}
    \State $\ell_{\mathrm{hi}}\gets L_m$ with any size-$k$ witness feasible at $r_{m,(L_m)}$; $\ell_{\mathrm{lo}}\gets0$
        \Comment{$0$ is a sentinel, never a radius index}
    \State Apply greedy certificates and witness tightening to the bracket, preserving feasible $\ell_{\mathrm{hi}}$ and either $\ell_{\mathrm{lo}}=0$ or certified-infeasible $\ell_{\mathrm{lo}}$
    \While{$\ell_{\mathrm{lo}}<\ell_{\mathrm{hi}}-1$}
        \State Probe $\ell\gets\ell_{\mathrm{hi}}-1$ at radius $r_{m,(\ell)}$: greedy certificates, else the bounded-optimization oracle
        \If{feasible, with witness cover $\mathcal S$}
            \State $\ell_{\mathrm{hi}}\gets$ index of the minimal radius of $\mathcal S$
                \Comment{certificate tightening}
        \Else
            \State $\ell_{\mathrm{lo}}\gets\ell$
                \Comment{certified infeasible}
        \EndIf
    \EndWhile
    \State $r^\star\gets r_{m,(\ell_{\mathrm{hi}})}$
    \State Solve \eqref{eq:coverage-mip-objective}--\eqref{eq:coverage-mip-domains} at $r=r^\star$ to certified optimality, warm-started with the incumbent cover
    \State \textbf{Output:} Globally optimal coverage subset $\mathcal S^\star_{\mathrm{acov},m}(\mathcal P;c_{\mathrm{cov}})$ and calibrated radius $r^\star_m(\mathcal P;c_{\mathrm{cov}})=r^\star$
\end{algorithmic}
\end{algorithm}

\begin{remark}[Probe Schedule and Time Limits]
Any probe schedule that terminates with the tight bracket $\ell_{\mathrm{lo}}=\ell_{\mathrm{hi}}-1$ mathematically certifies the same minimal radius. 
Alternative schedules can change both the number and difficulty of radius decisions. For example, sequential probing can require $O(L_m)$ decisions, whereas binary search requires $O(\log L_m)$ certified decisions.
If a bounded-optimization probe is cut off by an imposed solver time limit and is therefore conservatively treated as infeasible, the returned selection is still a valid optimal cover at the returned radius. However, that radius acts only as an upper bound on the true minimal radius $r^\star_m(\mathcal P;c_{\mathrm{cov}})$. The exact optimality guarantee established in \Cref{thm:exact_coverage} applies only when every decisive infeasibility result is formally certified.
\end{remark}

\subsubsection{Optimality guarantee}
\begin{proof}
The proof of global optimality relies on three sequential guarantees:

\emph{Termination.} The discrete distance spectrum $\Lambda_m(\mathcal P)$ is finite. Since every iteration of the while-loop in \Cref{alg:exact_coverage} must either strictly lower the upper bound $\ell_{\mathrm{hi}}$ by at least one index or raise the lower bound $\ell_{\mathrm{lo}}$ to $\ell_{\mathrm{hi}}-1$, the search space strictly shrinks at every step. Therefore, the algorithm is guaranteed to terminate in finite time.

\emph{Radius optimality.}  Two invariants are maintained throughout the execution of the search loop. First, the radius $r_{m,(\ell_{\mathrm{hi}})}$ is always feasible, guaranteed by a stored size-$k$ witness cover. (Certificate tightening preserves this invariant because any cover is naturally feasible down to its own internal minimal radius by construction.) Second, either $\ell_{\mathrm{lo}}=0$ (acting as the initialized sentinel) or $r_{m,(\ell_{\mathrm{lo}})}$ is certified infeasible. This refutation is valid whether it comes from the greedy certificate (validated by the submodularity established in Lemma~\ref{lem:coverage-submodular}) or from the exact solver (whose proven bound acts as a strict upper bound on the true integral optimum.)

Upon termination, the bracket perfectly tightens. Either $\ell_{\mathrm{hi}}=1$, meaning the smallest candidate radius is in the spectrum is feasible, or $\ell_{\mathrm{lo}}=\ell_{\mathrm{hi}}-1\geq1$, meaning the immediate predecessor to the upper bound is infeasible. By monotonicity and the discrete step-function property established in Lemma~\ref{lem:candidate-radii}, this guarantees that the returned radius is the global minimum
$$r^\star=r_{m,(\ell_{\mathrm{hi}})}=r^\star_m(\mathcal P;c_{\mathrm{cov}}).$$

\emph{Coverage optimality at $r^\star$.}  The final step of the algorithm executes a bounded optimization over the finite set of selection vectors $\boldsymbol\chi\in\{0,1\}^{N_{\mathcal P}}$ subject to the exact cardinality constraint $\sum_j\chi_j=k$. The ILP solver evaluates this space and yields a feasible incumbent cover alongside a proven upper bound on the maximum possible covered count. Achieving a certified optimal status guarantees that these two values are equal. Furthermore, because the unnormalized covered count is integer-valued, proving an absolute gap of less than one is sufficient to rule out the existence of a better feasible selection.
\end{proof}

\subsection{Max--Min Dispersion Persona Selection}
\label{app:dispersion}
\subsubsection{Structural properties and thresholded equivalence}
Unlike greedy heuristics, the exact incremental-clique method relies on the discrete topology of the solution space rather than the continous geometric properties of the embedding space. 

\begin{assumption}[Pairwise dissimilarity]
The chosen dissimilarity function $\dper{m}(\cdot,\cdot)$ is symmetric and non-negative. However, a triangle inequality is not required for the exact clique reduction to hold.
\end{assumption}

The theoretical guarantee of this exact search is grounded in the finite, discrete nature of the candidate pool:
\begin{lemma}[Bottleneck Distance, \citep{erkut1990discrete}]
Because the candidate population $\mathcal P$ is finite, the optimal value $\ddisp{m}(\mathcal S^\star_{\mathrm{disp},m}(\mathcal P))$ equals the weight of at least one edge in the complete pairwise-dissimilarity graph on constructed on $\mathcal P$.
\label{lem:bottleneck}
\end{lemma}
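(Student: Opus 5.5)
The claim is that the optimal max--min dispersion value over size-$k$ subsets is attained as the weight of some edge in the complete dissimilarity graph on $\mathcal P$. The plan is to argue directly from the definition of $\ddisp{m}$ in \eqref{eq:min-dispersion} as a minimum over a finite, nonempty index set. Existence of an optimizer is the first step, and it is immediate. Since $\mathcal P$ is finite and $2\le k\le|\mathcal P|$, the family of size-$k$ subsets is finite and nonempty. Hence the maximum of $\ddisp{m}$ over this family is attained by some $\mathcal S^\star=\mathcal S^\star_{\mathrm{disp},m}(\mathcal P)$, and the optimal value is well defined.

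The second step shows that the value at this optimizer is itself an edge weight. For $\mathcal S^\star$ with $|\mathcal S^\star|=k\ge2$, the set of unordered pairs $\{p,p'\}\subseteq\mathcal S^\star$ with $p\neq p'$ is finite and nonempty. So the minimum in \eqref{eq:min-dispersion} is attained at some pair $(p_a,p_b)$, giving $\ddisp{m}(\mathcal S^\star)=\dper{m}(p_a,p_b)=D^{\mathrm P}_{m,ab}$. This is precisely the weight of the edge $\{p_a,p_b\}$ in the complete graph on $\mathcal P$, and it therefore also lies in the spectrum $\Lambda_m(\mathcal P)$ of \eqref{eq:persona-pairwise-spectrum}.

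The final step is a remark that this needs only the stated Assumption. Symmetry makes the edge weight independent of orientation, so $D^{\mathrm P}_{m,ab}=D^{\mathrm P}_{m,ba}$ and the graph is undirected. No triangle inequality is used. I will also note the consequence needed later for the clique algorithm: the optimum lies among the $L_m$ sorted values $r_{m,(\ell)}$, so it can be located by a descending scan over thresholds.

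There is no real obstacle in this argument. The only care point is distinct personas sharing identical embeddings, which gives a zero-weight edge. The argument still goes through because the minimum in \eqref{eq:min-dispersion} ranges over $p\neq p'$ as elements of $\mathcal P$, not over distinct embeddings, and $0\in\Lambda_m(\mathcal P)$ by definition.
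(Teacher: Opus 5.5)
Your proposal is correct and follows essentially the same reasoning as the paper, which gives no separate proof beyond citing Erkut and appealing to finiteness. Your two attainment steps make that argument explicit: the maximum over the finitely many size-$k$ subsets is attained, and the minimum over the finitely many distinct pairs of the optimizer is attained at an actual pair $\{p_a,p_b\}$, whose dissimilarity is an edge weight lying in $\Lambda_m(\mathcal P)$.
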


\subsubsection{Exact incremental-clique algorithm}
\begin{algorithm}[htbp]
\caption{Exact Incremental-Clique Max--Min Dispersion}
\label{alg:exact_max_min_dispersion}
\begin{algorithmic}[1]
    \State \textbf{Input:} Candidate population $\mathcal P$, target size $k$, persona dissimilarity $\dper{m}$
    \State Compute the distinct pairwise-dissimilarity levels \(\Lambda_m(\mathcal P)\) and order them as \(r_{m,(1)}<\cdots<r_{m,(L_m)}\).
    \State Initialize $\mathcal G=(\mathcal P,\mathcal E)$ with $\mathcal E\gets\emptyset$
    \For{$\ell=L_m,L_m-1,\ldots,1$}
        \State $\mathcal E_\ell\gets \{\{p_a,p_b\}:a<b,\ \dper{m}(p_a,p_b)=r_{m,(\ell)}\}$
        \For{each edge $\{p_a,p_b\}\in\mathcal E_\ell$}
            \State $\mathcal E\gets \mathcal E\cup\{\{p_a,p_b\}\}$
            \State $\mathcal N_{a,b}\gets \{p_c\in\mathcal P: \{p_a,p_c\}\in\mathcal E \land\{p_b,p_c\}\in\mathcal E\}$
            \If{$|\mathcal N_{a,b}|\geq k-2$}
                \State Test whether $\mathcal G[\mathcal N_{a,b}]$ contains a $(k-2)$-clique
                \If{a $(k-2)$-clique $\mathcal Q$ exists}
                    \State \textbf{Return}
                        $\mathcal S^\star_{\mathrm{disp},m}(\mathcal P) \gets\mathcal Q\cup\{p_a,p_b\}$
                \EndIf
            \EndIf
        \EndFor
    \EndFor
\end{algorithmic}
\end{algorithm}

\subsubsection{Optimality guarantee}
\begin{proof}
Let $\delta^\star$ denote the optimal max--min dispersion. By Lemma~\ref{lem:bottleneck} (the bottleneck lemma), $\delta^\star=r_{m,(\ell^\star)}$ for some threshold in the sorted distance spectrum.  After all edges at threshold $r_{m,(\ell)}$ have been inserted, \Cref{alg:exact_max_min_dispersion} has effectively constructed the threshold graph
$$\mathcal G_\ell = \bigl(\mathcal P, \{\{p,p'\}:p\neq p',\ \dper{m}(p,p')\geq r_{m,(\ell)}\}\bigr).$$
A size-$k$ subset has a minimum internal dispersion of at least $r_{m,(\ell)}$ if and only if it forms a full $k$-clique in $\mathcal G_\ell$. The algorithm processes thresholds from largest to smallest and returns as soon as the first $k$-clique is completed. If a subset with a strictly larger dispersion existed, its corresponding clique would have been fully connected at an earlier (larger) threshold and the algorithm would have already terminated. Hence the returned clique inherently possesses the bottleneck distance $r_{m,(\ell^\star)}=\delta^\star$, proving it is globally optimal.
\end{proof}

\begin{remark}[Computational Efficiency via Sparsity]
Although finding a size-$k$ clique is NP-hard in general, the threshold graphs evaluated during the earliest stages of the algorithm (the largest distances) are sparse. For our experiment setting of $k=5$, each local feasibility test reduces to triangle detection in the mutual neighborhood $\mathcal N_{a,b}$. A direct implementation has a coarse polynomial bound of $O(N_{\mathcal{P}}^5)$, while this sparsity reduce actual search costs.
\end{remark}

\section{Generation Algorithms and Proofs}
\label{app:generation}
\subsection{Uniform-Coverage MCMC Persona Generation}
\label{app:mcmc}

We propose \emph{Uniform-Coverage MCMC} (UC-MCMC), a persona-generation sampler whose operational target assigns equal probability mass to the reachable cells of a fixed, finite-resolution partition of semantic directions. Within each cell, a frozen language model supplies a reference base law favoring linguistically plausible personas; valid proposals that land in previously unoccupied cells act to expand the active target in hindsight. The construction successfully pursues coverage through its sampling law, rather than by first generating a pool and then retrospectively equalizing its cell counts.

\subsubsection{Semantic Geometry and Target Distributions}
\paragraph{State Space and Validity.}
We specialize $\Omega$ to the countable persona state space of canonically serialized, EOS-terminated persona token sequences up to a fixed maximum length. A nonterminating generation is represented by a failure symbol $\bot\notin\Omega$. Let $\operatorname{val}:\Omega\rightarrow\{0,1\}$ be a fixed validity function. This function may incorporate a cached LLM judgment, provided that the model, prompt, decoding rule, and cache are frozen so that $\operatorname{val}(p)$ is deterministic. We extend this function with the convention $\operatorname{val}(\bot)=0$.

\paragraph{Semantic Geometry and Cell Partitions.}
Let $\widetilde\phi$ be the MRL-truncated text embedding, whose restriction to the persona state space maps $\Omega$ to $\mathbb R^d_{\mathrm M}$. We freeze a center $\widehat\mu$ and a nonsingular whitening map $W$ (computed using only the baseline population or a separate pilot sample). The semantic direction of a persona is defined as its normalized projection
\begin{equation}
    \mathbf z_W(p) = \frac{W\{\widetilde\phi(p)-\widehat\mu\}}{\left\|W\{\widetilde\phi(p)-\widehat\mu\}\right\|_2} \in\mathbb S^{d_{\mathrm M}-1}.
    \label{eq:ucmcmc-direction}
\end{equation}
We assume the denominator is nonzero for every valid persona. We then partition the sphere $\mathbb S^{d_{\mathrm M}-1}$ into $M$ measurable, equal-area cells $\mathcal C_1,\ldots,\mathcal C_M$, with deterministic tie-breaking on cell boundaries. (Equal-area, small-diameter sphere partitions can be constructed using, for example, the recursive zonal method of \citet{leopardi2006sphere}.) We define the cell-assignment function $c(p)=j$ when $\mathbf z_W(p)\in\mathcal C_j$, allowing use to formally define the valid persona set within cell $j$ as
\begin{equation}
    \Omega_j = \{p\in\Omega:\operatorname{val}(p)=1,\ c(p)=j\}.
    \label{eq:ucmcmc-cell-state}
\end{equation}
Note that equal area is required for the geometric interpretation formalized below, but the underlying MCMC validity results require only a fixed measurable partition.

\paragraph{Base Law and Cell-Conditioned Targets.}
Let $G_{\theta_{\mathrm P}}$ be the frozen persona-generation model and let $x_{\mathrm{gen}}$ be its fully serialized, fixed prompt. For persona $p$, let $w_{1:L(p)}$ be its exact output token sequence, excluding terminal EOS. We define the base probability mass of $p$ as
\begin{equation}
    b(p) = \left\{\prod_{\ell=1}^{L(p)} G_{\theta_{\mathrm P}}(w_\ell\mid x_{\mathrm{gen}},w_{<\ell})\right\} G_{\theta_{\mathrm P}}(\mathrm{EOS}\mid x_{\mathrm{gen}},w_{\leq L(p)}).
    \label{eq:ucmcmc-base}
\end{equation}
Every generation outcome outside $\Omega$ is mapped to the failure symbol $\bot$ without resampling. Thus, we set
$$b(\bot)=1-\sum_{p\in\Omega}b(p)$$
as the probability law on $\Omega_{\bot}:=\Omega\cup\{\bot\}$, while $\bot$ carries zero target weight because $\operatorname{val}(\bot)=0$. We assume $b(p)>0$ for every valid persona, which naturally holds for raw softmax sampling when all canonical output tokens remain in support.

In our primary configuration, the global independence proposal is exactly this completion law
\begin{equation}
    q_0(p'\mid p)=q_0(p')=b(p').
    \label{eq:ucmcmc-global-proposal}
\end{equation}

The base mass of cell $j$, accounting for validity, is therefore
\begin{equation}
    B_j = \sum_{p\in\Omega} b(p)\operatorname{val}(p)\mathbbm{1}\{c(p)=j\}, \qquad \mathcal R=\{j:B_j>0\},
    \label{eq:ucmcmc-cell-mass}
\end{equation}
where $\mathcal R$ is defined as the set of reachable cells. For any $j\in\mathcal R$, the within-cell target distribution is defined as
\begin{equation}
    \pi_j(p) = \frac{b(p)\operatorname{val}(p)\mathbbm{1}\{c(p)=j\}}{B_j}.
    \label{eq:ucmcmc-cell-target}
\end{equation}
For an active-cell index set $\mathcal I\subseteq\mathcal R$, the target distribution of a single emitted persona is the equal-cell mixture
\begin{equation}
    \Pi_{\mathcal I} = \frac{1}{|\mathcal I|} \sum_{j\in\mathcal I}\pi_j, \qquad \Pi_{\mathcal I}(\Omega_j)=\frac{1}{|\mathcal I|} \quad (j\in\mathcal I).
    \label{eq:ucmcmc-emission-target}
\end{equation}
In contrast, the joint target distribution for the full population containing exactly one chain per cell is the product measure
\begin{equation}
    \Gamma_{\mathcal I} = \bigotimes_{j\in\mathcal I}\pi_j.
    \label{eq:ucmcmc-product-target}
\end{equation}

\begin{remark}[Distributional Distinctions]
The distinction between $\pi_j$, $\Pi_{\mathcal I}$, and $\Gamma_{\mathcal I}$ is critical: $\pi_j$ is a localized within-cell persona law, $\Pi_{\mathcal I}$ is the desired
marginal law of a single randomly emitted persona, and $\Gamma_{\mathcal I}$ is the stationary joint law of the entire population of parallel cell chains. The unknown masses $B_j$ never need to be estimated, as they cancel from all within-cell Metropolis--Hastings ratios, while the parallel cell chains and the uniform scheduler inherently supply the equal mixture weights required by \eqref{eq:ucmcmc-emission-target}.
\end{remark}

\paragraph{Resolution-Limited Directional Uniformity.}
The following lemma formalizes the finite-resolution connection to true directional uniformity across the continuous semantic sphere.

\begin{lemma}[Wasserstein Bound on Uniformity]
\label{lem:ucmcmc-wasserstein}
Let $\sigma_{\mathcal I}$ be the normalized spherical surface measure on $\mathcal C_{\mathcal I}=\bigcup_{j\in\mathcal I}\mathcal C_j$, let $\nu_{\mathcal I}=(\mathbf z_W)_{\#}\Pi_{\mathcal I}$ be the pushforward of $\Pi_{\mathcal I}$ to the sphere, and let $\Delta_j$ be the geodesic diameter of cell $\mathcal C_j$. For every Wasserstein order $q_{\mathrm W}\geq1$
\begin{equation}
    W_{q_{\mathrm W}}(\nu_{\mathcal I},\sigma_{\mathcal I}) \leq \left(\frac{1}{|\mathcal I|} \sum_{j\in\mathcal I}\Delta_j^{q_{\mathrm W}}\right)^{1/q_{\mathrm W}} \leq \max_{j\in\mathcal I}\Delta_j .
    \label{eq:ucmcmc-wasserstein}
\end{equation}
\end{lemma}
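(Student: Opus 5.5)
We will prove the bound with an explicit coupling. Since $W_{q_{\mathrm W}}$ is an infimum over couplings, any coupling of $\nu_{\mathcal I}$ and $\sigma_{\mathcal I}$ gives an upper bound. The natural choice matches the two measures cell by cell.

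\emph{Cell masses.} First we record that both measures put mass $1/|\mathcal I|$ on each active cell $\mathcal C_j$. For $\nu_{\mathcal I}$, this follows from \eqref{eq:ucmcmc-emission-target}: $\nu_{\mathcal I}(\mathcal C_j)=\Pi_{\mathcal I}(\{p:\mathbf z_W(p)\in\mathcal C_j\})=\Pi_{\mathcal I}(\Omega_j)=1/|\mathcal I|$. This uses that $\Pi_{\mathcal I}$ is supported on valid personas and that $c(p)=j$ exactly when $\mathbf z_W(p)\in\mathcal C_j$; deterministic tie-breaking makes the cells disjoint. For $\sigma_{\mathcal I}$, it is the equal-area hypothesis: the cells $\mathcal C_j$, $j\in\mathcal I$, have equal surface measure. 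Boundaries are null sets under the surface measure, so $\sigma_{\mathcal I}(\mathcal C_j)=1/|\mathcal I|$. This is the only step where equal area enters.

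\emph{Coupling.} Write $\nu_j=|\mathcal I|\,\nu_{\mathcal I}|_{\mathcal C_j}$ and $\sigma_j=|\mathcal I|\,\sigma_{\mathcal I}|_{\mathcal C_j}$. Both are probability measures on $\mathcal C_j$. Take the block-product coupling
\[
\gamma=\frac{1}{|\mathcal I|}\sum_{j\in\mathcal I}\nu_j\otimes\sigma_j .
\]
Its marginals are $\sum_j\frac{1}{|\mathcal I|}\nu_j=\nu_{\mathcal I}$ and likewise $\sigma_{\mathcal I}$, so $\gamma$ is admissible. Under $\nu_j\otimes\sigma_j$ both points lie in $\mathcal C_j$, so their geodesic distance is at most $\Delta_j$. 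Measurability of the cells ensures the restrictions are well defined.

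\emph{Bound.} Integrating the cost against $\gamma$ gives
\[
W_{q_{\mathrm W}}^{q_{\mathrm W}}(\nu_{\mathcal I},\sigma_{\mathcal I})\le\int d^{q_{\mathrm W}}\,d\gamma\le\frac{1}{|\mathcal I|}\sum_{j\in\mathcal I}\Delta_j^{q_{\mathrm W}}.
\]
Taking $q_{\mathrm W}$-th roots yields the first inequality. The second inequality holds because an average of $\Delta_j^{q_{\mathrm W}}$ is at most $\max_j\Delta_j^{q_{\mathrm W}}$.

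\emph{Main obstacle.} There is no substantive obstacle; the argument is bookkeeping. The points to get right are the exact equality of cell masses and a stated convention that $W_{q_{\mathrm W}}$ is taken with the geodesic metric on the sphere, so that diameters are geodesic. If Euclidean chordal distance were intended instead, the bound still holds, because chordal distance is at most geodesic distance.
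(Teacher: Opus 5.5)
Your proposal is correct and takes the same approach as the paper: couple $\nu_{\mathcal I}$ and $\sigma_{\mathcal I}$ cell by cell using their common mass $1/|\mathcal I|$ on each active cell, then bound the cost within each block by $\Delta_j^{q_{\mathrm W}}$. Your explicit product coupling, and your derivation of the equal cell masses from $\Pi_{\mathcal I}(\Omega_j)=1/|\mathcal I|$ and the equal-area hypothesis, supply details the paper's two-line proof leaves implicit.
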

\begin{proof}
Since both measures assign a probability mass $1/|\mathcal I|$ to every individual cell $\mathcal C_j$, we can couple their conditional distributions separately within each respective cell. The physical geodesic distance between any two points coupled inside $\mathcal C_j$ is naturally bounded by its diameter $\Delta_j$.
\end{proof}
Lemma~\ref{lem:ucmcmc-wasserstein} connects equal cell masses to directional uniformity when cell diameters are small. Our implementation uses ten orthonormal sign cuts in 128 dimensions, to the shown bound does not provide a nontrivial approximation guarantee to uniform surface measure. Rather, the operational guarantee is equal allocation across reachable cells.

\subsubsection{Within-Cell Metropolis Kernels and Proposal Requirements}
\paragraph{Component Kernels and Acceptance Probabilities}
In addition to the global independence proposal $q_0$, let $q_1,\ldots,q_{L_q}$ be a set of fixed, exactly evaluable local-edit or block proposals. At each step, the algorithm first draws a proposal type $\ell$ with a fixed probability $\omega_\ell>0$, such that $\sum_{\ell=0}^{L_q}\omega_\ell=1$, and then applies a component-specific Metropolis--Hastings (MH) correction.

For a chain assigned to an active cell $j$, the acceptance probability for transitioning from an incumbent persona $p$ to a proposed persona $p'$ is
\begin{equation}
    \alpha_{j,\ell}(p,p') = \mathbbm{1}\{p'\in\Omega_j\} \min\left\{1,\,\frac{b(p')q_\ell(p\mid p')}{b(p)q_\ell(p'\mid p)}\right\}.
    \label{eq:ucmcmc-acceptance}
\end{equation}
A zero reverse probability implies a zero acceptance probability. Importantly, if the proposal type or edited block is selected with a state-dependent probability, that probability must
be explicitly included in the forward--reverse ratio. Equation \eqref{eq:ucmcmc-acceptance} describes a random mixture of separately corrected kernels; it does not represent the acceptance ratio for a single marginalized mixture proposal.

Let $K_{j,\ell}$ denote the resulting accept--reject kernel for proposal $\ell$, which includes the self-transition upon rejection, and define the full mixture kernel as
\begin{equation}
    K_j=\sum_{\ell=0}^{L_q}\omega_\ell K_{j,\ell}.
    \label{eq:ucmcmc-mixture-kernel}
\end{equation}

\begin{theorem}[Cell-Kernel Correctness and Matched Refresh]
\label{thm:ucmcmc-kernel}
For every reachable cell $j$, each component kernel $K_{j,\ell}$ is reversible with respect to $\pi_j$; hence the mixture $K_j$ leaves $\pi_j$ invariant. Moreover, under the matched global proposal ($q_0=b$), for every $p\in\Omega_j$, proposal value $p'\in\Omega_{\bot}$, and measurable subset $E\subseteq\Omega_j$
\begin{align}
    \alpha_{j,0}(p,p') &= \mathbbm{1}\{p'\in\Omega_j\},    \label{eq:ucmcmc-global-acceptance}\\
    K_{j,0}(p,E) &= B_j\pi_j(E)+(1-B_j)\delta_p(E),    \label{eq:ucmcmc-refresh-kernel}\\
    K_j(p,E) &\geq \omega_0B_j\pi_j(E).    \label{eq:ucmcmc-minorization}
\end{align}
where $\delta_p$ denotes the Dirac probability measure at $p$. Consequently, for every arbitrary initialization law $\zeta$ supported on $\Omega_j$ and every integer $n\geq0$
\begin{equation}
    \left\| \zeta K_j^n-\pi_j \right\|_{\mathrm{TV}} \leq (1-\omega_0B_j)^n .
    \label{eq:ucmcmc-cell-tv}
\end{equation}
\end{theorem}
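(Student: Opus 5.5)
The plan has four parts: reversibility of each component kernel by detailed balance, an explicit computation of the matched global kernel, a minorization bound for the mixture, and a standard coupling argument for the total-variation rate.

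\emph{Reversibility.} Fix a reachable cell $j$ and a component $\ell$. We check detailed balance $\pi_j(p)K_{j,\ell}(p,p')=\pi_j(p')K_{j,\ell}(p',p)$ for $p\neq p'$. Both sides vanish unless $p,p'\in\Omega_j$, because $\pi_j$ is supported on $\Omega_j$ and \eqref{eq:ucmcmc-acceptance} carries the factor $\mathbbm{1}\{p'\in\Omega_j\}$.

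On $\Omega_j$ we have $\pi_j(p)=b(p)/B_j$. The left side is therefore $B_j^{-1}\min\{b(p)q_\ell(p'\mid p),\,b(p')q_\ell(p\mid p')\}$, which is symmetric in $(p,p')$. The convention that zero reverse probability forces zero acceptance handles the degenerate cases. The diagonal case is trivial.

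Summing over $p$ gives $\pi_jK_{j,\ell}=\pi_j$. Invariance of the mixture $K_j$ in \eqref{eq:ucmcmc-mixture-kernel} then follows by linearity. The state space is countable, so there are no measure-theoretic issues.

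\emph{Matched refresh.} Take $q_0=b$ and $p\in\Omega_j$. The ratio becomes $\frac{b(p')b(p)}{b(p)b(p')}=1$ whenever $p'\in\Omega_j$. For such $p'$ we have $b(p')>0$, since $p'$ is valid; also $b(p)>0$. This gives \eqref{eq:ucmcmc-global-acceptance}. If $p'\notin\Omega_j$, including $p'=\bot$, the indicator gives zero.

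Hence $K_{j,0}(p,\cdot)$ moves to $p'\sim b$ with probability $b(p')$ if $p'\in\Omega_j$ and stays at $p$ otherwise. For $E\subseteq\Omega_j$, the moving part contributes $b(E)=B_j\pi_j(E)$. The total rejection mass is $1-b(\Omega_j)=1-B_j$, and it lands on $\delta_p$. This is \eqref{eq:ucmcmc-refresh-kernel}.

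Dropping the nonnegative Dirac term and the other components $\ell\geq1$ from $K_j=\sum_\ell\omega_\ell K_{j,\ell}$ yields the minorization \eqref{eq:ucmcmc-minorization}.

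\emph{TV rate.} Set $\varepsilon=\omega_0B_j\in(0,1]$. The minorization gives the split $K_j=\varepsilon\pi_j+(1-\varepsilon)R_j$, where $R_j$ is a Markov kernel on $\Omega_j$. It is nonnegative by \eqref{eq:ucmcmc-minorization} and has mass one; when $\varepsilon=1$ the claim is immediate.

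Run two copies: one started from $\zeta$, the other from $\pi_j$, which is stationary. At each step, flip a common $\varepsilon$-coin. On success, both copies draw the same sample from $\pi_j$ and stay coupled forever afterwards. On failure, each copy moves by $R_j$.

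By stationarity, the second copy has law $\pi_j$ at time $n$. The coupling inequality then gives
\[
\|\zeta K_j^n-\pi_j\|_{\mathrm{TV}}\le\Pr(\text{no success in }n\text{ steps})=(1-\varepsilon)^n,
\]
which is \eqref{eq:ucmcmc-cell-tv}. An alternative is the Doeblin contraction argument: $(\zeta-\pi_j)K_j=(1-\varepsilon)(\zeta-\pi_j)R_j$, because the $\pi_j$ term cancels on signed measures of total mass zero.

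\emph{Main obstacle.} The part needing the most care is the bookkeeping in the refresh step. Proposals outside $\Omega_j$ — other cells, invalid personas, and $\bot$ — must all be collapsed into the self-transition mass $1-B_j$. We must also verify that $R_j$ is a genuine probability kernel supported on $\Omega_j$. The remaining steps are routine.
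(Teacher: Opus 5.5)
Your proposal is correct and follows the same route as the paper. You use detailed balance via the symmetric flow $\min\{\pi_j(p)q_\ell(p'\mid p),\pi_j(p')q_\ell(p\mid p')\}$, the cancellation under $q_0=b$ that gives an independent refresh with hit probability $B_j$, minorization by dropping the other components, and a Doeblin coupling for the rate, with the only difference being that you write out the residual split $K_j=\omega_0B_j\pi_j+(1-\omega_0B_j)R_j$ and the coupling explicitly where the paper just cites the classical argument.
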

\begin{proof}
For any two valid same-cell states $p,p'\in\Omega_j$, the two directional probability flows between them under kernel $\ell$ equal
$$\min\{ \pi_j(p)q_\ell(p'\mid p), \pi_j(p')q_\ell(p\mid p') \}.$$
Since this flow is perfectly symmetric, detailed balance is satisfied by the standard Metropolis--Hastings argument \citep{hastings1970,tierney1994}. 

When evaluating the global proposal ($q_0=b$), the proposal probabilities and base-weight terms cancel out for any valid same-cell move. A global proposal therefore hits $\Omega_j$ with total probability $B_j$ and, conditional on successfully doing so, is distributed according to law $\pi_j$, yielding the independent refresh kernel $K_{j,0}$ (\eqref{eq:ucmcmc-refresh-kernel}). Because $K_{j,0}$ is drawn with probability $\omega_0$, the fixed mixture satisfies the minorization condition (\eqref{eq:ucmcmc-minorization}). Finally, the resulting total-variation convergence bound follows from the classical Doeblin coupling argument \citep{mengersen1996rates,rosenthal1995minorization}.
\end{proof}

\begin{remark}[Global Cancellation and Strict Density Requirements]
The cancellation that occurs for global proposals is mathematically beneficial, not degenerate: a same-cell hit results in an exact, independent refresh from the target $\pi_j$. (The overall global acceptance rate may still appear small, because invalid and foreign-cell proposals remain as source-chain self-transitions.) 

However, for this exactness to hold, the forward and reverse proposal probabilities must describe the actual sampling law, including the EOS token and any block-selection probabilities. Therefore, the primary LLM implementation must sample raw categorical probabilities without top-$k_{\mathrm{dec}}$, top-$p_{\mathrm{dec}}$, retry-until-valid, or unaccounted many-to-one text normalization. If perfectly reversible local-edit blocks cannot be specified for free-form prose, the local component should be implemented exclusively as full-persona global MH proposal, ensuring the population-level random scan in \eqref{eq:ucmcmc-population-kernel} remains valid.
\end{remark}

\subsubsection{The Hindsight-Spawning Population Algorithm.}
Let $\mathcal P_0$ be the baseline persona population. We initialize the active-cell index set as the set of valid cells discovered in the baseline
$$\mathcal I_0 = \{c(p):p\in\mathcal P_0,\ \operatorname{val}(p)=1\}.$$
For each $j\in\mathcal I_0$, we choose one arbitrary valid seed $P^{(j)}\in\Omega_j$, assuming its base weight $b(P^{(j)})>0$. This seed may instead be drawn uniformly from the baseline
personas residing in that cell; its specific initialization law does not affect the anytime cell-level guarantees established below.

\begin{algorithm}[t]
\caption{Uniform-Coverage MCMC Persona Generation}
\label{alg:ucmcmc}
\begin{algorithmic}[1]
\Require Baseline population $\mathcal P_0$; frozen evaluation components $(\operatorname{val},\mathbf z_W,c,b)$; proposals $\{q_\ell\}_{\ell=0}^{L_q}$ and their probabilities $\{\omega_\ell\}_{\ell=0}^{L_q}$; total iterations $T_{\mathrm M}$
\State Initialize $\mathcal I_0$ and one starting state $P^{(j)}\in\Omega_j$ for every $j\in\mathcal I_0$
\For{$t=1,\ldots,T_{\mathrm M}$}
    \State Draw a target cell $J_t\mid\mathcal F_{t-1} \sim\operatorname{Uniform}(\mathcal I_{t-1})$
    \State Draw a proposal type $H_t\sim\operatorname{Categorical}(\omega_0,\ldots,\omega_{L_q})$
    \State Set $p\gets P^{(J_t)}$ and draw a candidate $P'_t\sim q_{H_t}(\cdot\mid p)$
    \State Set $\mathcal I_t\gets\mathcal I_{t-1}$ and leave the source chain $P^{(J_t)}$ unchanged by default
    \If{$\operatorname{val}(P'_t)=1$}
        \State $j'\gets c(P'_t)$
        \If{$j'=J_t$}
            \State Draw $U_t\sim\operatorname{Uniform}(0,1)$
            \If{$U_t\leq\alpha_{J_t,H_t}(p,P'_t)$}
                \State $P^{(J_t)}\gets P'_t$
            \EndIf
        \ElsIf{$j'\notin\mathcal I_{t-1}$}
            \State Spawn a new chain $P^{(j')}\gets P'_t$ and expand the active set $\mathcal I_t\gets\mathcal I_{t-1}\cup\{j'\}$
                \Comment{hindsight seed; not an accepted source move}
        \EndIf
    \EndIf
    \State Emit the post-transition source state $P_t^{\mathrm{emit}}\gets P^{(J_t)}$
\EndFor
\Ensure MCMC trace $(P_1^{\mathrm{emit}},\ldots,P_{T_{\mathrm M}}^{\mathrm{emit}})$, final active population, and a separately labeled discovery archive
\end{algorithmic}
\end{algorithm}

\paragraph{Hindsight Mechanics and Rejection.}
A valid foreign-cell proposal is automatically rejected by the source chain because its source-cell target probability is zero. If its destination cell was previously inactive, however, it serves as a valid initialization for a new chain. This seed is not emitted immediately, nor is it recorded as an accepted MCMC transition for the source chain. A global hindsight seed entering a new cell $j'$ is distributed exactly according to $\pi_{j'}$; a seed produced by a local-edit proposal, however, may have an arbitrary initialization law $\zeta_{j'}$ supported on $\Omega_{j'}$.

Crucially, a proposal entering an already active foreign cell must not be submitted directly to the destination cell's MH test, because the proposal was generated conditional on the source-chain's state rather than the destination-chain's state. Such cross-cell discoveries are instead retained in the separately labeled discovery archive. Finally, every scheduled proposal attempt advances the selected chain by one local MCMC iteration, even upon rejection. After a rejection, the emitted sample is the repeated current state, not the rejected candidate. (Discarding repeats or running until a fixed number of acceptances would improperly produce an accepted-state jump chain, violating the target measure.

\paragraph{Fixed-Active-Set MCMC Interpretation.}
For a temporarily fixed active-cell set $\mathcal I$ and a joint population state $\mathbf p=(p_j)_{j\in\mathcal I}$, we can define the full random-scan kernel as
\begin{equation}
    \mathsf K_{\mathcal I}(\mathbf p,d\mathbf p') = \frac{1}{|\mathcal I|} \sum_{j\in\mathcal I} K_j(p_j,dp'_j) \delta_{\mathbf p_{-j}}(d\mathbf p'_{-j}).
    \label{eq:ucmcmc-population-kernel}
\end{equation}
Because each localized $K_j$ preserves $\pi_j$, the global kernel $\mathsf K_{\mathcal I}$ preserves the product target $\Gamma_{\mathcal I}$ in \eqref{eq:ucmcmc-product-target}. This maps to a standard random-scan, component-wise MCMC construction \citep{johnson2013componentwise}: selecting a cell acts as the Gibbs coordinate step, and $K_j$ acts as the within-coordinate MH update. At stationarity, emitting the selected coordinate yields the exact marginal law $\Pi_{\mathcal I}$.

Hindsight spawning increases the number of active chains. With frozen components, however, the augmented process consisting of the active set $\mathcal I_t$ and its chain states is time-homogeneous on the disjoint union of population spaces. Before complete discovery, it is not the fixed-dimension random-scan chain associated with a fixed active-set product target. The theoretical guarantees in the following subsection are specifically designed to hold for this expansive target distribution.

\subsubsection{Theoretical Guarantees}
\paragraph{Standing Assumptions.}
To guarantee valid Markovian dynamics, all representation, validity, embedding, whitening, partition, target, and proposal components must be selected during a baseline or pilot phase and entirely frozen before the reported run. If the growing archive is allowed to change a proposal prompt, alter a validity rule, or update a target weight, the process becomes adaptive MCMC and; under such conditions, the fixed-kernel results established below no longer apply without imposing additional conditions \citep{roberts2007adaptive}.

\begin{theorem}[Anytime Active-Cell Uniformity]
\label{thm:ucmcmc-anytime}
Let $\mathcal F_{t-1}$ contain the complete history, the active set, and the individual chain states immediately prior to iteration $t$. For every initialization, $t\geq 1$, and $j\in[M]$, the spatial allocation of the emitted persona is strictly uniform over the active set
\begin{equation}
    \Pr\{c(P_t^{\mathrm{emit}})=j\mid\mathcal F_{t-1}\} = \frac{\mathbbm{1}\{j\in\mathcal I_{t-1}\}}{|\mathcal I_{t-1}|}.
    \label{eq:ucmcmc-anytime-uniformity}
\end{equation}
Moreover, define the predictable step exposure $\eta_{j,t}$, the cumulative realized count $N_j(T_{\mathrm M})$, and the cumulative expected exposure $E_j(T_{\mathrm M})$ as follows
\begin{align}
    \eta_{j,t} &= \frac{\mathbbm{1}\{j\in\mathcal I_{t-1}\}}{|\mathcal I_{t-1}|}, 
    & N_j(T_{\mathrm M}) &= \sum_{t=1}^{T_{\mathrm M}} \mathbbm{1}\{c(P_t^{\mathrm{emit}})=j\}, 
    & E_j(T_{\mathrm M}) &= \sum_{t=1}^{T_{\mathrm M}} \eta_{j,t}.
    \label{eq:ucmcmc-exposure}
\end{align}
Then $N_j(T_{\mathrm M})-E_j(T_{\mathrm M})$ is a martingale. For every fixed horizon $T_{\mathrm M}$ and $\delta\in(0,1)$, the deviation is strictly bounded
\begin{equation}
    \Pr\left\{\max_{1\leq j\leq M} |N_j(T_{\mathrm M})-E_j(T_{\mathrm M})| \geq \sqrt{\frac{T_{\mathrm M}}{2} \log\frac{2M}{\delta}} \right\} \leq \delta .
    \label{eq:ucmcmc-allocation-bound}
\end{equation}
\end{theorem}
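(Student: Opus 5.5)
The plan is to prove the two claims of Theorem~\ref{thm:ucmcmc-anytime} separately: first the one-step conditional law of the emitted cell, then the martingale property and the concentration bound.

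\emph{Step 1 (anytime uniformity).} The key observation is structural: in Algorithm~\ref{alg:ucmcmc} the emitted persona is always the post-transition state of the scheduled source chain $J_t$. By invariant, every chain $P^{(j)}$ with $j\in\mathcal I_{t-1}$ lies in $\Omega_j$. This holds at initialization because seeds are chosen in $\Omega_j$. It is preserved because an accepted move requires $\operatorname{val}(P'_t)=1$ and $c(P'_t)=J_t$, while a spawned seed lands in $\Omega_{j'}$ by construction. Rejections and foreign-cell proposals leave the source chain unchanged. Hence $c(P_t^{\mathrm{emit}})=J_t$ deterministically, whatever the proposal type, candidate, or uniform draw. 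Since $J_t\mid\mathcal F_{t-1}\sim\operatorname{Uniform}(\mathcal I_{t-1})$, we get
\[
\Pr\{c(P_t^{\mathrm{emit}})=j\mid\mathcal F_{t-1}\}=\Pr\{J_t=j\mid\mathcal F_{t-1}\}=\frac{\mathbbm 1\{j\in\mathcal I_{t-1}\}}{|\mathcal I_{t-1}|}.
\]
Any $j\notin\mathcal I_{t-1}$, including unreachable cells, gets probability zero. The only care needed is to note that spawning modifies $\mathcal I_t$ but not the emitted source state within iteration $t$.

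\emph{Step 2 (martingale).} Set $X_{j,t}=\mathbbm 1\{c(P_t^{\mathrm{emit}})=j\}-\eta_{j,t}$. The exposure $\eta_{j,t}$ is $\mathcal F_{t-1}$-measurable, so it is predictable, and Step 1 gives $\mathbb E[X_{j,t}\mid\mathcal F_{t-1}]=0$. The sum $N_j(T)-E_j(T)=\sum_{t\le T}X_{j,t}$ is adapted to the filtration $(\mathcal F_t)$, where $\mathcal F_t$ includes $P_t^{\mathrm{emit}}$. It is therefore a martingale.

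\emph{Step 3 (concentration).} Conditionally on $\mathcal F_{t-1}$, the increment $X_{j,t}$ takes values in the interval $[-\eta_{j,t},1-\eta_{j,t}]$. This interval has length $1$ and endpoints that are predictable. Azuma--Hoeffding in its predictable-range form (Hoeffding's lemma applied conditionally, then iterated) gives
\[
\Pr\{|N_j(T)-E_j(T)|\ge s\}\le 2\exp(-2s^2/T)
\]
for each fixed $j$. A union bound over the $M$ cells, with $s=\sqrt{(T/2)\log(2M/\delta)}$, yields total probability at most $M\cdot 2\cdot\delta/(2M)=\delta$. This is exactly \eqref{eq:ucmcmc-allocation-bound}.

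The main obstacle is getting the constant right in Step 3. The standard Azuma bound with $|X_{j,t}|\le1$ gives only $\exp(-s^2/(2T))$, which is too weak for the stated constant. I would therefore invoke the conditional Hoeffding lemma for increments lying in a predictable interval of length one; this supplies the factor $2s^2/T$. A secondary point is to verify carefully the invariant in Step 1, namely that a hindsight seed never alters the source chain. That is what makes the emitted cell equal to $J_t$ even though the active set grows.
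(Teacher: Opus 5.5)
Your proposal is correct and follows the paper's proof. The invariant $P^{(j)}\in\Omega_j$ forces $c(P_t^{\mathrm{emit}})=J_t$, the uniform scheduler then gives both the conditional law and the martingale, and Azuma--Hoeffding with a union bound over the $M$ cells gives the tail bound. Your remark on the constant usefully sharpens the paper's terse appeal to a ``bounded-increment'' Azuma--Hoeffding inequality: the plain $|X_{j,t}|\le 1$ version only gives $\exp(-s^2/(2T_{\mathrm M}))$, whereas the predictable-range form (increments in an $\mathcal F_{t-1}$-measurable interval of length one, giving $\exp(-2s^2/T_{\mathrm M})$ per tail) yields exactly the stated radius $\sqrt{(T_{\mathrm M}/2)\log(2M/\delta)}$.
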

\begin{proof}
The scheduler draws $J_t$ uniformly from $\mathcal I_{t-1}$. Since the source chain either accepts a same-cell state or defaults to a self-transition, the relation $c(P_t^{\mathrm{emit}})=J_t$ almost surely, proving \eqref{eq:ucmcmc-anytime-uniformity}. Consequently, $\mathbb E[\mathbbm{1}\{c(P_t^{\mathrm{emit}})=j\}\mid\mathcal F_{t-1}]=\eta_{j,t}$, which gives the martingale. Applying the bounded-increment Hoeffding--Azuma inequality \citep{azuma1967weighted} to each cell and taking union-bound over the fixed $M$ cells yields \eqref{eq:ucmcmc-allocation-bound}.
\end{proof}

\begin{remark}[Dynamic Allocation and Burn-In]
The predictable exposure $E_j(T_{\mathrm M})$ is the only mathematically appropriate benchmark while the active set continues to grow, as a newly discovered cell cannot receive allocations retroactively. Thus, while the accumulated trace may not be uniform over the \emph{final} set $\mathcal I_{T_{\mathrm M}}$, it remains concentrated around the exact dynamic allocation induced by the uniform scheduler. Variance-adaptive confidence sequences can strengthen \eqref{eq:ucmcmc-allocation-bound} to simultaneous validity over all stopping times \citep{howard2021confidence}; notably, no reset is required when a new cell is discovered.

Furthermore, Theorem~\ref{thm:ucmcmc-anytime} acts as a precise no-burn-in guarantee: cell marginal uniformity is exact from the first emitted iteration. Burn-in remains relevant only
if early personas are intended to be interpreted as draws from the asymptotic \emph{within-cell} law $\pi_j$, with that specific discrepancy bounded by \eqref{eq:ucmcmc-cell-tv}.
\end{remark}

\begin{theorem}[Eventual Discovery and Full-Target Convergence]
\label{thm:ucmcmc-discovery}
Let $\tau_j=\inf\{t:j\in\mathcal I_t\}$ be the discovery time of a reachable cell $j\notin\mathcal I_0$. Under the matched global proposal and assuming $\omega_0>0$
\begin{align}
    \Pr(\tau_j>T_{\mathrm M}) &\leq (1-\omega_0B_j)^{T_{\mathrm M}},    \label{eq:ucmcmc-cell-discovery}\\
    \Pr(\mathcal I_{T_{\mathrm M}}\neq\mathcal R) &\leq \sum_{j\in\mathcal R\setminus\mathcal I_0} (1-\omega_0B_j)^{T_{\mathrm M}},    \label{eq:ucmcmc-all-discovery}\\
    \mathbb E|\mathcal R\setminus\mathcal I_{T_{\mathrm M}}| &\leq \sum_{j\in\mathcal R\setminus\mathcal I_0} (1-\omega_0B_j)^{T_{\mathrm M}}.    \label{eq:ucmcmc-expected-undiscovered}
\end{align}
Since $\mathcal R$ is finite, every reachable cell is discovered in finite time almost surely. Let $\tau=\inf\{t:\mathcal I_t=\mathcal R\}$ and $M_{\mathcal R}=|\mathcal R|$. Conditional on the population at time $\tau$, for every integer $s\geq0$ the subsequent fixed-active-set chain obeys
\begin{equation}
    \left\| \mathsf K_{\mathcal R}^{\,s}(\mathbf p,\cdot) - \Gamma_{\mathcal R} \right\|_{\mathrm{TV}}
    \leq
    \min\left\{1,\,\sum_{j\in\mathcal R} \left(1-\frac{\omega_0B_j}{M_{\mathcal R}}\right)^s\right\}.
    \label{eq:ucmcmc-product-tv}
\end{equation}
In particular, the emitted-persona law converges to the equal-cell mixture
\begin{equation}
    \Pi_{\mathcal R} = \frac{1}{|\mathcal R|} \sum_{j\in\mathcal R}\pi_j.    \label{eq:ucmcmc-full-target}
\end{equation}
\end{theorem}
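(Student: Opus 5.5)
The plan is to prove the three groups of claims in order: discovery tail bounds, then product-chain convergence after full discovery, then convergence of the emitted marginal.

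\emph{Discovery.} Fix a reachable $j\notin\mathcal I_0$. At every iteration $t\le\tau_j$, whatever cell $J_t$ is scheduled and whatever the source state, the step draws $H_t=0$ with probability $\omega_0$, independently of $\mathcal F_{t-1}$. Under the matched proposal $q_0=b$, the candidate then lands in $\Omega_j$ with probability $B_j$. Since $j$ is inactive and $j\neq J_t$ (because $J_t\in\mathcal I_{t-1}$), such a candidate spawns cell $j$. Hence $\Pr(\tau_j>t\mid\mathcal F_{t-1},\tau_j>t-1)\le 1-\omega_0B_j$, and iterating the tower property gives \eqref{eq:ucmcmc-cell-discovery}. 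A union bound over $\mathcal R\setminus\mathcal I_0$ gives \eqref{eq:ucmcmc-all-discovery}. Writing $|\mathcal R\setminus\mathcal I_{T_{\mathrm M}}|=\sum_j\mathbbm 1\{\tau_j>T_{\mathrm M}\}$ and taking expectations gives \eqref{eq:ucmcmc-expected-undiscovered}. Because $\mathcal R$ is finite and each $B_j>0$, the right-hand side tends to $0$, so $\tau<\infty$ almost surely.

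\emph{Product convergence.} After $\tau$ the active set is frozen at $\mathcal R$. No further spawning can occur, and foreign-cell proposals become self-transitions, so the process evolves exactly by $\mathsf K_{\mathcal R}$. By the strong Markov property, conditioning on $\mathcal F_\tau$ reduces the problem to the fixed-set chain started at the population $\mathbf p$. I would couple two copies of this chain: one from $\mathbf p$, one from $\Gamma_{\mathcal R}$, which is invariant by Theorem~\ref{thm:ucmcmc-kernel} and \eqref{eq:ucmcmc-population-kernel}. Both copies share the scheduler draw and, for the chosen coordinate $j$, use the minorization \eqref{eq:ucmcmc-minorization}. With probability $\omega_0B_j$ the coordinate is refreshed from $\pi_j$ and coalesces, and once coalesced it stays coalesced. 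Per step, coordinate $j$ coalesces with probability at least $\omega_0B_j/M_{\mathcal R}$, so it remains uncoupled after $s$ steps with probability at most $(1-\omega_0B_j/M_{\mathcal R})^s$. A union bound over coordinates, together with the coupling inequality and the trivial bound $1$, yields \eqref{eq:ucmcmc-product-tv}. The one delicate point is that the refresh event must be realized simultaneously in both copies. I would construct this explicitly: split $K_j(p,\cdot)=\omega_0B_j\pi_j+(1-\omega_0B_j)R_j(p,\cdot)$, with $R_j$ a residual kernel.

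\emph{Emitted marginal.} This is the step I expect to be the main obstacle, because $\tau$ is random and the emission is the post-transition scheduled coordinate. For $t>\tau$, conditional on $\mathcal F_{t-1}$, the emitted persona is $P^{(J_t)}$ after one $K_{J_t}$ step, with $J_t$ uniform on $\mathcal R$. Its law is the average over $j$ of the $j$th marginal of the population law advanced one step. The map from a population law to this emitted law is a contraction in total variation, and it sends $\Gamma_{\mathcal R}$ to $\Pi_{\mathcal R}$ by invariance. Hence
\[
\|\mathcal L(P_t^{\mathrm{emit}})-\Pi_{\mathcal R}\|_{\mathrm{TV}}\le \Pr(\tau>t/2)+\sup_{\mathbf p}\min\Bigl\{1,\sum_{j}\bigl(1-\tfrac{\omega_0B_j}{M_{\mathcal R}}\bigr)^{\lfloor t/2\rfloor-1}\Bigr\}.
\]
This splits on $\{\tau\le t/2\}$ and uses that the bound \eqref{eq:ucmcmc-product-tv} is uniform in $\mathbf p$ and decreasing in $s$. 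Both terms vanish as $t\to\infty$: the first by \eqref{eq:ucmcmc-all-discovery}, the second geometrically. This proves \eqref{eq:ucmcmc-full-target}.
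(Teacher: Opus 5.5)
Your proposal is correct and follows essentially the same route as the paper. It has a per-step geometric discovery bound ($\omega_0B_j$ per iteration, independent of the scheduled cell), followed by a union bound and linearity of expectation. It then uses a coupling in which coordinate $j$ coalesces at its first exact $\pi_j$ refresh (probability $\omega_0B_j/M_{\mathcal R}$ per step), followed by a union bound over coordinates. Finally, it splits on $\{\tau\le\lfloor t/2\rfloor\}$ and combines this with total-variation contraction of the emission kernel. The differences are cosmetic: you realize the refresh through the explicit minorization split rather than the paper's shared scan, proposal and accept--reject randomness. Your exponent $\lfloor t/2\rfloor-1$ is slightly weaker than the paper's $t-\lfloor t/2\rfloor-1$, which is harmless because the bound is decreasing in $s$.
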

\begin{proof}
At each global iteration, the global proposal component is selected with probability $\omega_0$ and independently draws from the base law $b$. It therefore proposes a valid persona in cell $j$ with probability $\omega_0B_j$, independently of the selected source cell. Repeated failure gives \eqref{eq:ucmcmc-cell-discovery}. Then, applying a union bound and the linearity of expectation give \eqref{eq:ucmcmc-all-discovery} and \eqref{eq:ucmcmc-expected-undiscovered}. Note that local proposals can only discover a cell earlier. The right-hand side of \eqref{eq:ucmcmc-all-discovery} tends to zero for finite $\mathcal R$, proving almost-sure eventual discovery.

Following time $\tau$, coordinate $j$ is actively selected and undergoes an exact $\pi_j$ refresh with probability $\omega_0B_j/M_{\mathcal R}$ per population iteration. By coupling a chain initiated from $\mathbf p$ with a stationary chain using the same scan, proposal, and accept--reject randomness, coordinate $j$ coalesces at its first exact refresh and remains coupled thereafter. Union-bounding the probability that any coordinate has not refreshed by time $s$ directly proves \eqref{eq:ucmcmc-product-tv}.
To obtain convergence at deterministic times, let $d(s)$ denote the right-hand side of \eqref{eq:ucmcmc-product-tv}. The emission kernel $Q_{\mathrm{emit}}$ acting on the pre-transition population satisfies $\Gamma_{\mathcal R}Q_{\mathrm{emit}}=\Pi_{\mathcal R}$. For any deterministic integer $0\leq r<t$, conditioning on whether discovery is complete by time $r$ and applying total-variation contraction gives
$$\left\|\mathcal L(P_t^{\mathrm{emit}})-\Pi_{\mathcal R}\right\|_{\mathrm{TV}} \leq \Pr(\tau>r)+d(t-r-1).$$
Taking $r=\lfloor t/2\rfloor$ makes both terms vanish as $t\to\infty$, proving \eqref{eq:ucmcmc-full-target}.
\end{proof}

\begin{remark}[The Resolution Trade-off and Interpretive Scope]
The active coverage $|\mathcal I_t|$ is nondecreasing, so each hindsight discovery is a deterministic coverage gain even though it also expands the target against which future allocation is measured. Equations \eqref{eq:ucmcmc-cell-discovery}--\eqref{eq:ucmcmc-expected-undiscovered} quantify this gain. 
The discovery bounds also highlight the resolution trade-off: refinements to reduce cell diameters can tighten the geometric bound in \eqref{eq:ucmcmc-wasserstein}. Under nested refinement, each child cell has a base mass no greater than its parent, which can slow discovery and refresh.
Without a lower bound on the reachable cell masses, no distribution-free finite-time discovery guarantee can be operationally strong.

Finally, we stress that the complete sequential trace, including all repeated states, is the object governed by these MCMC guarantees. A deduplicated archive composed of baseline states, accepted same-cell states, and valid cross-cell proposals is useful for measuring structural novelty and contrasting against empirical evolutionary generation, but it must be formally analyzed and reported as a visited-state archive rather than as an unweighted sample drawn from $\Pi_{\mathcal I}$.
\end{remark}

\subsection{Evolutionary TextGrad Persona Generation}
\label{app:evolution}

Evolutionary TextGrad implements frontier generation by expanding the baseline
population toward valid, low-density regions of persona space.  Unlike UC-MCMC's
distributional target, this is an empirical search procedure; the resulting expanded
pool is passed to the same max--min selector used for fixed-pool selection.

\paragraph{Objective and textual feedback.}
We formulate persona generation as evolutionary constrained optimization.
At iteration $t$, let $\mathcal P_t$ be the current persona population.  For
any persona $p$, whether an incumbent or a newly proposed candidate, the
comparison set is $\mathcal P_t\setminus\{p\}$; set subtraction has no effect
when $p\notin\mathcal P_t$.  Write
$\overline{\mathbf e}(p)=\phi(p)/\|\phi(p)\|_2$ for the unit-normalized
persona embedding.  Let
$m_{\mathrm E}\in\{\mathrm{cos},2,\mathrm{Mah}\}$ be the persona
dissimilarity fixed for an evolutionary run; the reported run uses
$m_{\mathrm E}=\mathrm{Mah}$.  Define
\begin{align}
    \operatorname{Gap}_t(p)
    &=
    \min_{\widetilde p\in\mathcal P_t\setminus\{p\}}
    \dper{m_{\mathrm E}}(p,\widetilde p),
    \label{eq:evolution-gap}\\
    \widehat\rho_t(p)
    &=
    \frac{1}{|\mathcal P_t\setminus\{p\}|}
    \sum_{\widetilde p\in\mathcal P_t\setminus\{p\}}
    \exp\!\left(
        \beta_{\mathrm{KDE}}\,
        \overline{\mathbf e}(p)^\top
        \overline{\mathbf e}(\widetilde p)
    \right),
    \label{eq:evolution-density}\\
    \operatorname{Fit}_t(p)
    &=
    \lambda_{\mathrm{rel}}\operatorname{Val}(p)
    +\lambda_{\mathrm{gap}}\operatorname{Gap}_t(p)
    -\lambda_{\mathrm{den}}\log\widehat\rho_t(p).
    \label{eq:evolution-fitness}
\end{align}
Here $\operatorname{Val}(p)\in[0,1]$ measures satisfaction of the persona schema and task-independent validity constraints, $\operatorname{Gap}_t(p)$ is nearest-neighbor novelty under the same $\dper{m_{\mathrm E}}$ used for downstream persona selection, and $\widehat\rho_t(p)$ is an unnormalized von Mises-Fisher kernel-density score in persona embedding space. 
Parent-path comparisons use an identical reference set for the parent and offspring. Thus, extend the definitions as $\operatorname{Gap}_t(x;R)$ and $\widehat\rho_t(x;R)$ for a nonempty reference set $R$, where $R=\mathcal P_t\setminus\{x\}$ without explicit reference set.

Because personas are discrete text rather than differentiable vectors, we use
\emph{textual gradient} only for natural-language feedback, following
\citet{yuksekgonul2025textgrad}
\begin{equation}
    g_t^{\mathrm{text}}(p)
    =
    \operatorname{TextGrad}\!\left(
        p;\operatorname{Val}(p),\operatorname{Gap}_t(p),
        \widehat\rho_t(p),x_{\mathrm{grad}}
    \right),
    \label{eq:evolution-textgrad}
\end{equation}
where $x_{\mathrm{grad}}$ is the fixed feedback instruction.  The editor then
proposes
\begin{equation}
    p'=\operatorname{TGD.step}\!\left(p,g_t^{\mathrm{text}}(p)\right).
    \label{eq:evolution-update}
\end{equation}
This notation avoids treating LLM feedback as a literal derivative with
respect to a token sequence.

\paragraph{Evolutionary algorithm.}
 
The loop grows the population by mutating incumbents under textual-gradient feedback and admitting only offspring that demonstrably increase spread. Two design choices matter. First, selection and acceptance operate on the two diversity axes \emph{separately} rather than on the scalar \eqref{eq:evolution-fitness}: a scalar trade-off would let a large novelty gain purchase an equally large density loss, which is exactly the substitution that produces a few far-flung outliers around an unchanged core. We therefore report $\operatorname{Fit}_t$ as a summary statistic, while the loop itself is weight-free. Second, every admission score is evaluated against the frozen population $\mathcal P_{t-1}$, so individual scores do not depend on the order in which candidates are evaluated.
 
\paragraph{Parent selection.}
Each tournament draws $m_{\mathrm{tour}}=5$ incumbents uniformly without
replacement and admits two winners: the entrant with the largest
$\operatorname{Gap}_{t-1}$ and, when distinct, the entrant with the smallest
$\log\widehat\rho_{t-1}$. Tournaments repeat until the parent set reaches
$\beta_{\mathrm{par}}|\mathcal P_{t-1}|$ with $\beta_{\mathrm{par}}=0.05$, so
the batch grows with the population. Admitting one winner per axis applies
equal pressure to isolation and to sparsity without committing to an exchange
rate between them.
 
\paragraph{Admission.}
Let $p$ be a parent and $p'$ its offspring, and let
\begin{equation}
    m_{\mathrm{gap}} = c_m\operatorname{sd}\{\operatorname{Gap}_{t-1}(q)\}_{q\in\mathcal P_{t-1}},
    \qquad
    m_{\mathrm{den}} = c_m\operatorname{sd} \{\log\widehat\rho_{t-1}(q)\}_{q\in\mathcal P_{t-1}},
    \label{eq:evolution-margins}
\end{equation}
with $c_m=0.10$, be per-axis margins recomputed each iteration. The offspring is admitted when it passes the deterministic and relevance gates and satisfies either
\begin{equation}
    \begin{aligned}
    \text{(parent path)}\quad
    &\operatorname{Gap}_{t-1}(p';R_p)>\operatorname{Gap}_{t-1}(p;R_p)+m_{\mathrm{gap}}
    \\
    &\log\widehat\rho_{t-1}(p';R_p)<\log\widehat\rho_{t-1}(p;R_p)-m_{\mathrm{den}},
    \end{aligned}
    \label{eq:evolution-parent-path}
\end{equation}
\begin{equation}
    \begin{aligned}
        \text{(absolute path)}\quad
        &\operatorname{Gap}_{t-1}(p')\geq Q_{\tau}
        \\
        &\log\widehat\rho_{t-1}(p')\leq Q_{100-\tau}',
        \end{aligned}
        \label{eq:evolution-tau-path}
\end{equation}
where $Q_\tau$ and $Q'_{100-\tau}$ are the $\tau$-th and $(100-\tau)$-th population percentiles of the two axes, $\tau=75$.
The parent path requires improvement on both axes by explicit margins, discouraging admission based on negligible score differences.
The absolute path admits a candidate that fails to beat its parent but is nonetheless competitive with the population, which prevents an already-isolated parent from blocking progress. Scoring is deliberately asymmetric between the two: the parent path scores $p'$ against $R_p=\mathcal P_{t-1}\setminus\{p\}$ so that parent and offspring are compared leave-one-out on identical reference sets, whereas the absolute path scores against the full population so that a near-copy of an elite parent cannot inherit that parent's isolation.
 
\paragraph{Sibling deduplication.}
Because offspring are scored against a frozen population, their admission scores do no account for other offspring of the same iteration. When two admitted offspring are closer than $m_{\mathrm{gap}}$, we treat them as near-duplicates and retain the one with the larger full-population gap. This step provides a heuristic control on redundancy within the admitted batch.
 
\begin{algorithm}[t]
\caption{Evolutionary \textsc{TextGrad} Persona Generation}
\label{alg:evolution}
\begin{algorithmic}[1]
\Require Baseline population $\mathcal P_0$; iteration cap $T_{\mathrm E}$;
    population cap $N_{\max}$; batch fraction $\beta_{\mathrm{par}}$;
    tournament size $m_{\mathrm{tour}}$; margin fraction $c_m$; percentile
    $\tau$; frozen feedback instruction $x_{\mathrm{grad}}$
\For{$t=1,\ldots,T_{\mathrm E}$}
    \State Compute leave-one-out $\operatorname{Gap}_{t-1}(q)$ and
        $\log\widehat\rho_{t-1}(q)$ for every $q\in\mathcal P_{t-1}$
    \State Set the margins \eqref{eq:evolution-margins} and the percentile
        thresholds $Q_\tau,Q'_{100-\tau}$
    \State $\mathcal B_t\gets\emptyset$
    \While{$|\mathcal B_t|<\beta_{\mathrm{par}}|\mathcal P_{t-1}|$}
        \State Sample $m_{\mathrm{tour}}$ incumbents uniformly without
            replacement
        \State Add the largest-$\operatorname{Gap}$ entrant and, if distinct,
            the smallest-$\log\widehat\rho$ entrant
    \EndWhile
    \State $\mathcal A_t\gets\emptyset$
    \For{$p\in\mathcal B_t$}
        \State $p'\gets\operatorname{TGD.step}(p,g_{t-1}^{\mathrm{text}}(p))$
            \Comment{\eqref{eq:evolution-textgrad}--\eqref{eq:evolution-update}}
        \If{$p'$ fails the English or length gate \textbf{or}
            $\operatorname{Val}(p')\neq1$}
            \State \textbf{reject}
            \Comment{gates precede embedding and judging}
        \ElsIf{\eqref{eq:evolution-parent-path} \textbf{or}
            \eqref{eq:evolution-tau-path} holds}
            \State $\mathcal A_t\gets\mathcal A_t\cup\{p'\}$
        \EndIf
    \EndFor
    \State Remove from $\mathcal A_t$ the smaller-gap member of every pair
        within $m_{\mathrm{gap}}$
        \Comment{sibling deduplication}
    \State $\mathcal P_t\gets\mathcal P_{t-1}\cup\mathcal A_t$
    \If{$|\mathcal P_t|\geq N_{\max}$} \State \textbf{break} \EndIf
\EndFor
\Ensure Evolved pool $\mathcal P_{T_{\mathrm E}}$; the max--min selector of
    \Cref{sec:selection} is applied downstream to return
    $\mathcal S^\star_{\mathrm{disp},m_{\mathrm E}}
    (\mathcal P_{T_{\mathrm E}})$
\end{algorithmic}
\end{algorithm}
 
The population is grown by union and never shrinks, so coverage of the persona
space is monotone in $t$. Selection by max--min dispersion is deliberately kept
downstream of the loop rather than folded into it: holding the selection
operator fixed and varying only the candidate pool is what isolates the effect
of generation from the effect of selection.

\section{Experiment Details}
\label{app:experiment_details}
The number of personas $k=5$: The Vendi score using cosine Gram matrix is measured as $10.4$ on the baseline personas. This translates to an effective number of distinct personas in the baseline pool is $10$, and any number $k\le10$ should be able to accommodate $k$ semantically distinct personas. We chose $k=5$ for convenience and the feasibility of human clustering and judgment with multiple variations and baselines.

Performance tables report means over five response-generation seeds (42--46) and two-sided 95\% Student's $t$ confidence intervals. For seed-level metric values $x_1,\ldots,x_5$, computed using the benchmark-specific aggregation, we report
$$\overline{x}\pm t_{0.975,4}\frac{s}{\sqrt{5}}, \qquad s^2=\frac{1}{4}\sum_{r=1}^{5}(x_r-\overline{x})^2,$$
where $t_{0.975,4}\approx2.776$. These intervals summarize variability across response-generation seeds for the evaluated persona sets. ICC intervals in the judge-validation analysis use bootstrap resampling instead.

\subsection{Common Baselines and Variants}
\paragraph{Persona control baselines.}
\textsc{Gibberish} acts as a length-matched pseudoword control to verify that performance gains stem from semantic content rather than mere prompt length. Prompted with Jabberwocky personas, LLMs generate responses containing pseudowords (``shield from \emph{nasnorn} rain'' or ``press flowai glurt sheevu''). Such out-of-vocabulary tokens embed in low-density regions, inflating every distance-based metric \eqref{eq:originality} without any corresponding remoteness of the underlying idea. We therefore delete them from each use before computing embeddings. The pseudowords originates from gibberish input: the gibberish generation process replaces content words one-for-one in place from an existing persona, so aligning a persona with its gibberish counterpart recovers a list fo non-words. A response word is deleted if it matches one of them, ignoring case and inflectional endings. The responses are evaluated after deletion (``shield from rain'' or ``press'').

\textsc{Task-conditioned} persona generation \citep{jin2025multi} (MPAQ) serves as a baseline that explicitly generates task-specific personas. Task-specific generation inherently incurs expensive, per-task inference costs; by contrast, our generic diversified personas incur only a one-time optimization cost and can be deployed universally across any task. Due to this overhead, task-conditioned personas are not useful unless they are better for creative responses than task-agnostic personas.

\paragraph{Baseline persona pool.}
We utilize the PersonaMem-v2 dataset \citep{jiang2025know,ge2024scaling} as our fixed candidate pool. To extract $k=5$ personas from this massive population, we evaluate two standard baselines: \textsc{Random} (uniform sampling) to test arbitrary individualization, and \textsc{Typical} (the $5$ densest-mode personas via Gaussian KDE) to test representative personas.

\paragraph{Diversity-selected personas (ours).}
\textsc{Selection} extracts $k{=}5$ personas from the baseline pool by maximizing min dispersion (\Cref{sec:selection}) under the Mahalanobis metric. Contrasting this with \textsc{Random} and \textsc{Typical} isolates the effect of \emph{diverse selection} over standard individualization.

\paragraph{Generated personas (ours).}
To break the performance ceiling imposed by a finite pool, we activaly expand the persona population using two algorithms prior to applying the identical max--min dispersion selector. \textsc{MCMC} acts as a space-filling generator targeting an equal-cell mixture over reachable semantic regions (\Cref{sec:mcmc}). \textsc{Evolution} actively push personas toward low-density extremes by applying evolutionary \textsc{TextGrad} (\Cref{sec:evolution}).

\subsection{Alternative Uses Test (AUT)}
\label{sec:aut}
The Alternative Uses Test (AUT) is a widely used divergent-thinking
benchmark \citep{guilford1967nature}.  Given a common object (e.g., a fork,
book, or wallet), respondents produce alternative uses.  Its open-ended,
object-conditioned form has become a standard testbed for LLM creativity
\citep{stevenson2022putting,goes2023pushing,rabeyah2025llms} and is well
suited to testing whether persona diversification broadens ideational range.
For object $o$ and experimental condition $h$, let
$\widetilde{\mathcal U}_{o,h}$ denote the raw multiset of
$n_{\mathrm{raw}}=25$ generated uses and let $\mathcal U_{o,h}$ denote the
canonicalized, deduplicated, valid set used by downstream metrics.  Its actual
size $n_{o,h}=|\mathcal U_{o,h}|$ can therefore be smaller than $25$.

\subsubsection{Baselines and Variants}
\label{sec:variants}

We compare a graded ladder of prompting and persona conditions, each adding one ingredient over the last so that its marginal effect is isolated. All conditions share the generation protocol below; they differ only in the prompt and in how (or whether) a persona is supplied.

\paragraph{Standard persona baselines.}
Three persona-free prompts from the ladder of \citet{goes2023pushing} isolate what prompting alone achieves: \textsc{Common-Use} (their \emph{nn} prompt, eliciting conventional uses) as a lower anchor; \textsc{Alternative-Use} (their \emph{nc} ``creative'' prompt); and \textsc{Creativity-Enhanced} (their \emph{bs} expert prompt with an explicit definition of creative use). \textsc{Alternative-Use} is the baseline for persona injection: every persona condition below supplies a persona to that same \emph{nc} prompt, so any gain over a standard, non-persona prompt is attributable to the persona, not to prompt optimization.

\paragraph{Evolutionary personas (ours).}
\textsc{AUT-Evolution} produces the pool of personas by the same evolutionary generation algorithm (\Cref{alg:evolution}), but with AUT-specific fitness function. Fix the response generator and the benchmark prompt. For persona $p$ and object
$o$, let $\mathcal U_o(p)$ denote the valid uses that $p$ produces for $o$ under
the protocol of \Cref{sec:aut}, and let $m_{\mathrm R}$ be the fixed response
dissimilarity. A persona is scored on four axes, each averaged over the
evaluated objects,
\begin{align}
    \operatorname{Util}_o(p)
    &=
    \frac{1}{|\mathcal U_o(p)|}
    \sum_{u\in\mathcal U_o(p)} s_o(u),
    \label{eq:autevo-utility}\\
    \operatorname{Nov}_o(p)
    &=
    \frac{1}{|\mathcal U_o(p)|}
    \sum_{u\in\mathcal U_o(p)}
    \min_{v\in\mathcal U_o^{\mathrm{comm}}}
    d_{m_{\mathrm R}}^{\mathrm R}(u,v),
    \label{eq:autevo-novelty}\\
    \operatorname{Div}_o(p)
    &=
    \binom{|\mathcal U_o(p)|}{2}^{-1}
    \sum_{\{u,v\}\subseteq\mathcal U_o(p)}
    d_{m_{\mathrm R}}^{\mathrm R}(u,v),
    \label{eq:autevo-diversity}\\
    \operatorname{Flex}_o(p)
    &=
    \operatorname{VS}(\mathcal U_o(p)),
    \label{eq:autevo-flexibility}
\end{align}
so that novelty is a directed Chamfer distance to the frozen common-use
reference set, diversity is within-persona spread, and flexibility is the
effective number of distinct use categories \eqref{eq:vendi}. The utility
strength $s_o(u)$ is obtained by judging each use pairwise against $K=3$
strength-stratified anchors drawn from a frozen per-object ladder and fitting a
one-dimensional Bradley--Terry model against the fixed anchor strengths; each
ladder is sum-zero identified, so $\operatorname{Util}_o$ reads as an average
advantage over baseline utility and is not comparable across objects. Validity
is a gate rather than an axis: an offspring is discarded unless every retained
use passes the parse, length, language, and semantic-admissibility checks of
\Cref{sec:aut}.
 
Write $A(p)=(\operatorname{Util},\operatorname{Nov},\operatorname{Div},
\operatorname{Flex})(p)\in\mathbb R^4$ for the object-averaged axis vector.
Admission replaces \eqref{eq:evolution-parent-path}--\eqref{eq:evolution-tau-path}
by a majority rule on the four axes: the offspring is accepted when it strictly
beats its parent on at least $w=3$ axes, or matches the $\tau$-th percentile
member of the population on at least $w$ axes. Parent selection uses the same
tournament, with the deciding axis rotated through a random permutation of the
four so that each receives equal pressure over an iteration. Because all four
axes are frozen within a metric epoch, the common-use pool, the per-object
covariances, and the anchor ladder are snapshotted at run start, a later
re-evaluation of the benchmark cannot retroactively move the objective a run was
optimized against.
 
Two properties of this substitution are worth stating. It is strictly more
expensive, since each candidate must generate and have judged a full set of
responses before it can be scored, whereas the persona-text fitness is computed
from embeddings alone. And it is task-bound: the resulting pool is optimized for
the AUT and carries no guarantee of transfer, which is precisely the trade the
task-agnostic fitness avoids. We report both pools and compare them directly.

\paragraph{Human reference.}
For the three objects with released human norms (book, fork, tin can), we
include the \citet{stevenson2022putting} human responses as a reference for
the judge-scored dimensions.  We sample
$n_{\mathrm{comp}}=5$ human respondents, matching the number of independent
LLM completions per condition.

\paragraph{Generation and sampling protocol.}
All responses are generated by Gemma-4-31B (temperature $0.7$) under a
uniform output constraint, five words per use, no adjectives, and an
unbounded list, so conditions differ only in prompt and persona.  The
baseline population is
$\mathcal P_0=\{p_i\}_{i=1}^{N_0}$ from PersonaMem
\citep{jiang2025know}, with $N_0=997$ after excluding malformed or
missing entries.  If $x_o$ is the prompt instantiated for object $o$, a
persona-conditioned response follows
$y\sim G_{\theta_{\mathrm R}}(\,\cdot\mid x_o,p)$ with
$p\in\Omega_{\varnothing}:=\Omega\cup\{\varnothing\}$; persona-free
conditions use the conditioning sentinel $p=\varnothing$, which is distinct
from the failed persona-generation symbol $\bot$.  Non-persona conditions use
$n_{\mathrm{comp}}=5$ independent completions per object, while persona
conditions use one completion for each of the $k=5$ selected personas.  We
retain the first $n_{\mathrm{use}}=5$ uses from each completion, giving
$n_{\mathrm{raw}}=n_{\mathrm{comp}}n_{\mathrm{use}}=25$ raw uses per
object--condition pair before the validity gate.

\begin{table}[t]
\centering\small
\begin{tabular}{@{}lll@{}}
\toprule
Variant & Persona & Description \\
\midrule
\textsc{Common-Use}        & --  & \cite{goes2023pushing} common-use (nn) \\
\textsc{Alternative-Use}   & --  & \cite{goes2023pushing} creative (nc) \\
\textsc{Creativity-Enhanced}& -- & \cite{goes2023pushing} expert (bs) \\
\textsc{Random}                 & \checkmark & $5$ chosen uniformly at random \\
\textsc{Typical}                & \checkmark & $5$ density-ranked modes \\
\textsc{Selection} (ours)      & \checkmark & maximize dispersion on base pool \\
\textsc{Evolution} (ours) & \checkmark & maximize dispersion on evolved pool \\
\textsc{Human}                  & --  & $5$ chosen from \cite{stevenson2022putting} \\
\bottomrule
\end{tabular}
\caption{Variants, from persona-free baselines to our diversity-aware selection and generation. Persona variants all inject into the \textsc{Alternative-Use} prompt; \textsc{Selection} and \textsc{Evolution} share one selection operator and differ only in the candidate pool.}
\label{tab:variants}
\end{table}

\subsubsection{Metrics}
\paragraph{Notation.}
Let $\mathcal Y\subseteq\mathcal T$ denote the response-text space.  The
fixed encoder $\phi:\mathcal T\to\mathbb R^d$ maps any persona, prompt,
object label, or response text to an embedding; write
$\mathbf e(\cdot)=\phi(\cdot)$.  
The regularized covariance $\widehat\Sigma_{\mathrm R}$ is estimated once on
a frozen reference split and is not refit by condition.  We call
$d_{\mathrm{cos}}^{\mathrm R}$ a dissimilarity because it need not satisfy
the triangle inequality.  Let $\mathcal U_o^{\mathrm{comm}}$ be the frozen
common-use reference set for object $o$, drawn from no-persona baseline
generations \citep{goes2023pushing}.

For a valid response set
$\mathcal U=\{u_i\}_{i=1}^{n}$, let
$\overline{\mathbf e}_i=\mathbf e(u_i)/\|\mathbf e(u_i)\|_2$,
let $\mathbf E_{\mathcal U}$ stack these unit vectors by row, and let
$\mathbf G_{\mathcal U}
=\mathbf E_{\mathcal U}\mathbf E_{\mathcal U}^{\top}$ be their cosine Gram
matrix.  If $\xi_1,\ldots,\xi_n$ are the eigenvalues of
$\mathbf G_{\mathcal U}/n$, the Vendi Score is
\begin{equation}
    \operatorname{VS}(\mathcal U)
    =
    \exp\!\left(-\sum_{i=1}^{n}\xi_i\log\xi_i\right),
    \label{eq:vendi}
\end{equation}
with $0\log0=0$.  It is the effective number of distinct responses
\citep{friedman2023vendi,pasarkar2024cousins}.

\paragraph{Validity.}
We map each raw use through a deterministic canonicalizer
$\operatorname{can}(\cdot)$ (lowercasing, filler and punctuation removal, and
lemmatization).  A separate, human-validated LLM judge supplies the
admissibility indicator
$\operatorname{val}_{\mathrm R}:\mathcal Y\rightarrow\{0,1\}$.
For
$\widetilde{\mathcal U}
=\{\widetilde u_i\}_{i=1}^{n_{\mathrm{raw}}}$,
\begin{equation}
    \operatorname{Validity}(\widetilde{\mathcal U})
    =
    \frac{1}{n_{\mathrm{raw}}}
    \sum_{i=1}^{n_{\mathrm{raw}}}
    \operatorname{val}_{\mathrm R}
    \!\left(\operatorname{can}(\widetilde u_i)\right).
    \label{eq:validity}
\end{equation}
The downstream set is
$$\mathcal U = \left\{\operatorname{can}(\widetilde u): \widetilde u\in\widetilde{\mathcal U},\ \operatorname{val}_{\mathrm R}(\operatorname{can}(\widetilde u))=1\right\},$$
where set semantics merge exact canonical duplicates.  Thus validity uses the
raw denominator, whereas all following metrics use the post-gate denominator
$n=|\mathcal U|$.  Gating prevents invalid outliers from receiving
spuriously high originality scores \citep{chen2023probing}.

\paragraph{Diversity.}
The first diversity metric is the mean pairwise response dissimilarity (defined for $n\geq2$)
\begin{equation}
    \operatorname{Div}_{m}(\mathcal U) = \frac{2}{n(n-1)} \sum_{i<j}d_m^{\mathrm R}(u_i,u_j).   \label{eq:diversity}
\end{equation}

Another diversity metric is the volume of convex hull on reduced PCA dimensions. Let $e_m(u)\in\mathbb R^{D_m}$ denote the response embedding realized in space $m$: the L2-normalized embedding for $m=\mathrm{cos}$, the raw embedding for $m=2$, and the whitened truncated embedding $L_o^{\top}\tilde e(u)$ for $m=\mathrm{Mah}$, where $\Sigma_o^{-1}=L_oL_o^{\top}$ is the Cholesky factorization of the object-specific inverse covariance. Let $\Pi_{m,o}\colon\mathbb R^{D_m}\to\mathbb R^{k}$ ($k=5$) be the PCA projection fit on the pooled embeddings ${e_m(u)}$ of all variants' responses for object $o$. Then the volume is defined for $n \geq k+1$ as
\begin{equation}
    \operatorname{Vol}_{m}(\mathcal U)
    = \operatorname{vol}_{k}\!\Big(
        \operatorname{conv}\big\{\Pi_{m,o}\, e_m(u_1),\ldots,\Pi_{m,o}\, e_m(u_n)\big\}
      \Big),
    \label{eq:volume}
\end{equation}
where $\operatorname{conv}(\cdot)$ is the convex hull and $\operatorname{vol}_k$ the $k$-dimensional Lebesgue volume.

\paragraph{Originality.}
We reserve \emph{originality} for reference-relative response distance,
distinguishing it from the persona novelty gap in
\eqref{eq:evolution-gap}.  The object-relative and common-use-relative
versions are
\begin{equation}
\begin{aligned}
    \operatorname{Orig}_{m}(\mathcal U;o)
    &=
    \frac{1}{n}\sum_{i=1}^n d_m^{\mathrm R}(u_i,o),
    \\
    \operatorname{Orig}_{m}
        (\mathcal U;\mathcal U_o^{\mathrm{comm}})
    &=
    \frac{1}{n}\sum_{i=1}^n
    \min_{v\in\mathcal U_o^{\mathrm{comm}}}
    d_m^{\mathrm R}(u_i,v).
\end{aligned}
    \label{eq:originality}
\end{equation}
The second expression is the directed Chamfer distance from the generated
set to the frozen common-use reference set \citep{chen2023probing}.

\paragraph{Category Metrics (Flexibility and Surprise).}
Let $\gamma(u)$ be the unique category assigned to valid use $u$ by human
clustering, based on action class
\citep{fillmore1968case,levin1993english} and affordance
\citep{gibson1979ecological,norman1988psychology}, and define
$\operatorname{Cat}(\mathcal U)=\{\gamma(u):u\in\mathcal U\}$.
The size-normalized manual-category metrics are
\begin{equation}
    \operatorname{Flex}_{\gamma}(\mathcal U)
    =
    \frac{|\operatorname{Cat}(\mathcal U)|}{n},
    \label{eq:flexibility_gamma}
\end{equation}
\begin{equation}
    \operatorname{Sur}_{\gamma}
        (\mathcal U;\mathcal U_o^{\mathrm{comm}})
    =
    \frac{1}{n}\sum_{i=1}^n
    \mathbbm{1}\!\left[
        \gamma(u_i)\notin
        \operatorname{Cat}(\mathcal U_o^{\mathrm{comm}})
    \right].
    \label{eq:surprise-gamma}
\end{equation}
The clustering-free flexibility proxy is
$\operatorname{Flex}_{\mathrm{VS}}(\mathcal U)
=\operatorname{VS}(\mathcal U)/n$.

For the proposed originality-weighted Vendi proxy, set
$$\upsilon_i = \min_{v\in\mathcal U_o^{\mathrm{comm}}} d_{\mathrm{cos}}^{\mathrm R}(u_i,v), \qquad \overline\upsilon_i = \frac{\upsilon_i}{\sum_{r=1}^{n}\upsilon_r}, \qquad \overline{\boldsymbol\upsilon} = (\overline\upsilon_1,\ldots,\overline\upsilon_n)^\top,$$
when $\sum_r\upsilon_r>0$, and define
$$\mathbf G_{\mathcal U}^{(\upsilon)} = \operatorname{diag}(\sqrt{\overline{\boldsymbol\upsilon}})\, \mathbf G_{\mathcal U}\, \operatorname{diag}(\sqrt{\overline{\boldsymbol\upsilon}}).$$
If $\xi_i^{(\upsilon)}$ are its eigenvalues, then
\begin{equation}
    \operatorname{Sur}_{\mathrm{VS}}
        (\mathcal U;\mathcal U_o^{\mathrm{comm}})
    =
    \frac{1}{n}
    \exp\!\left(
        -\sum_i\xi_i^{(\upsilon)}\log\xi_i^{(\upsilon)}
    \right).
    \label{eq:surprise-vendi}
\end{equation}
We set this proxy to $0$ when all $\upsilon_i=0$.  Unlike
$\operatorname{Sur}_{\gamma}$, it is a proposed composite of
reference-relative originality and within-set diversity, not a literal
category share; its weighted spectral construction follows the
quality-weighted Vendi framework \citep{nguyen2024quality}.

\paragraph{Creativity Scores by LLM Judge.}
The creativity, originality, surprise, and utility of each use are rated by an LLM judge. Following \citet{goes2023pushing}, all valid canonical uses for object $o$ are pooled across conditions, randomly partitioned into batches, ranked and scored, and repeated for $R_{\mathrm{judge}}=5$ rounds with fresh partitions.  The rank score is the mean within-batch percentile, and the judge score is the mean raw rating, scaled to $[1,5]$.

With the same judge, we obtain absolute per-use ratings of originality, surprise, and utility using faithful renderings of \citet{stevenson2022putting}'s human protocols. Utility is reported separately but remains the effectiveness component of creativity, rather than an unrelated quantity \citep{runco2012standard}.

\begin{table}[t]
\centering\small
\begin{tabular}{@{}lll@{}}
\toprule
Metric & Basis & Description \\
\midrule
Validity                 & Human / LLM judge        & Proportion of admissible, well-formed uses (gate; control) \\
Diversity                & Embedding        & Mean pairwise distance among a variant's uses ($\uparrow$) \\
Originality              & Embedding        & Mean semantic or directed Chamfer distance from uses \\
                         &                  & to the object concept or common uses ($\uparrow$) \\
Flexibility              & Category         & (Effective) number (ratio) of distinct use categories ($\uparrow$) \\
Surprise                 & Category         & (Effective) share of uses in categories beyond common uses ($\uparrow$) \\
Creativity               & LLM judge        & Pooled rank-and-score creativity, \citet{goes2023pushing} ($\uparrow$) \\
Originality              & LLM judge        & Deviation from typical use, \citet{stevenson2022putting} ($\uparrow$) \\
Surprise                 & LLM judge        & Unexpectedness, \citet{stevenson2022putting} ($\uparrow$) \\
Utility                  & LLM judge        & Feasibility/usefulness (control), \citet{stevenson2022putting} \\
\bottomrule
\end{tabular}
\caption{Creativity and diversity metrics. Embedding metrics use the fixed
encoder $\phi$ (EmbeddingGemma) with cosine, L2, or Mahalanobis
dissimilarity; category metrics use both human clustering and Vendi-based
effective counts; LLM-judge metrics use an independent Qwen3.6-27B judge.
$\uparrow$: higher indicates a larger value. Validity is a gate. Utility is
the effectiveness dimension required alongside originality for creativity
\citep{runco2012standard} and is reported separately to expose trade-offs.}
\label{tab:metrics}
\end{table}

We use Qwen3.6-27B, an open-weight model from a family distinct from the Gemma-4-31B generator. Qwen3.6-27B shows moderate agreement with the two-rater mean on Stevenson-style ratings (Spearman ($\rho=.566$)–($.635$); ICC($(C,1)=.407$)–($.568$)), below the human–human agreement reference (($\rho=.669$)–($.842$); ICC ($=.677$)–($.790$)). We therefore interpret these scores as noisy ordinal indicators, with strongest support for originality, and rely on convergence with embedding- and category-based measures rather than treating them as interchangeable with human ratings.

\begin{table*}[t]
\centering
\scriptsize
\setlength{\tabcolsep}{3.5pt}
\begin{tabular}{ll r rr rrrrr}
\toprule
Dim. & Pair & $n$ & ICC$(C,1)$ & 95\% CI & QWK & $\rho$ & Exact & Within-1 & Paper ICC \\
\midrule
Orig. & LLM vs.\ rater~1 & 139 & 0.480 & $[0.341,0.609]$ & 0.447 & 0.553 & 0.547 & 0.878 & 0.78 \\
Orig. & LLM vs.\ rater~2 & 139 & 0.575 & $[0.452,0.693]$ & 0.574 & 0.599 & 0.504 & 0.942 & 0.78 \\
Orig. & LLM vs.\ mean     & 139 & 0.568 & $[0.443,0.682]$ & 0.481 & 0.635 & 0.302 & 0.878 & 0.78 \\
Orig. & Human ceiling     & 139 & 0.677 & $[0.577,0.754]$ & 0.609 & 0.669 & 0.475 & 0.993 & 0.78 \\
\midrule
Util. & LLM vs.\ rater~1 & 139 & 0.355 & $[0.211,0.504]$ & 0.344 & 0.509 & 0.475 & 0.813 & 0.70 \\
Util. & LLM vs.\ rater~2 & 139 & 0.430 & $[0.292,0.568]$ & 0.406 & 0.597 & 0.532 & 0.806 & 0.70 \\
Util. & LLM vs.\ mean     & 139 & 0.407 & $[0.281,0.531]$ & 0.377 & 0.603 & 0.396 & 0.784 & 0.70 \\
Util. & Human ceiling     & 139 & 0.701 & $[0.493,0.851]$ & 0.696 & 0.745 & 0.712 & 0.978 & 0.70 \\
\midrule
Surp. & LLM vs.\ rater~1 & 139 & 0.432 & $[0.229,0.608]$ & 0.423 & 0.500 & 0.612 & 0.935 & 0.79 \\
Surp. & LLM vs.\ rater~2 & 139 & 0.544 & $[0.407,0.659]$ & 0.490 & 0.571 & 0.525 & 0.885 & 0.79 \\
Surp. & LLM vs.\ mean     & 139 & 0.523 & $[0.354,0.667]$ & 0.468 & 0.566 & 0.446 & 0.906 & 0.79 \\
Surp. & Human ceiling     & 139 & 0.790 & $[0.728,0.841]$ & 0.756 & 0.842 & 0.698 & 0.971 & 0.79 \\
\bottomrule
\end{tabular}
\caption{AUT LLM-as-a-judge agreement on the \emph{valid} Stevenson subset
($n{=}139$). Paper ICC is the human--human ICC reported by
Stevenson et~al.; Human ceiling is the reproduced
\texttt{rater01}--\texttt{rater02} agreement on the released file.
LLM scores are from Qwen3.6-27B with the faithful Stevenson prompts
(including 0/99 codes).}
\label{tab:aut-judge-valid}
\end{table*}

\begin{table*}[t]
\centering
\scriptsize
\setlength{\tabcolsep}{3.5pt}
\begin{tabular}{ll r rr rrrrr}
\toprule
Dim. & Pair & $n$ & ICC$(C,1)$ & 95\% CI & QWK & $\rho$ & Exact & Within-1 & Paper ICC \\
\midrule
Orig. & LLM vs.\ rater~1 & 147 & 0.386 & $[0.248,0.525]$ & 0.378 & 0.488 & 0.524 & 0.844 & 0.57 \\
Orig. & LLM vs.\ rater~2 & 147 & 0.479 & $[0.341,0.608]$ & 0.474 & 0.533 & 0.476 & 0.905 & 0.57 \\
Orig. & LLM vs.\ mean     & 147 & 0.467 & $[0.332,0.601]$ & 0.410 & 0.555 & 0.286 & 0.837 & 0.57 \\
Orig. & Human ceiling     & 147 & 0.630 & $[0.498,0.734]$ & 0.579 & 0.642 & 0.476 & 0.986 & 0.57 \\
\midrule
Util. & LLM vs.\ rater~1 & 147 & 0.336 & $[0.199,0.472]$ & 0.316 & 0.499 & 0.449 & 0.769 & 0.68 \\
Util. & LLM vs.\ rater~2 & 147 & 0.400 & $[0.264,0.534]$ & 0.367 & 0.572 & 0.503 & 0.769 & 0.68 \\
Util. & LLM vs.\ mean     & 147 & 0.382 & $[0.262,0.501]$ & 0.362 & 0.584 & 0.374 & 0.748 & 0.68 \\
Util. & Human ceiling     & 147 & 0.684 & $[0.505,0.830]$ & 0.680 & 0.750 & 0.694 & 0.973 & 0.68 \\
\midrule
Surp. & LLM vs.\ rater~1 & 147 & 0.391 & $[0.210,0.566]$ & 0.374 & 0.455 & 0.585 & 0.905 & 0.67 \\
Surp. & LLM vs.\ rater~2 & 147 & 0.480 & $[0.344,0.602]$ & 0.423 & 0.501 & 0.497 & 0.850 & 0.67 \\
Surp. & LLM vs.\ mean     & 147 & 0.467 & $[0.313,0.613]$ & 0.411 & 0.502 & 0.422 & 0.878 & 0.67 \\
Surp. & Human ceiling     & 147 & 0.772 & $[0.704,0.832]$ & 0.743 & 0.831 & 0.694 & 0.966 & 0.67 \\
\bottomrule
\end{tabular}
\caption{AUT LLM-as-a-judge agreement on \emph{all} doubly rated Stevenson
responses ($n{=}147$), including rows with invalid codes. Metrics as in
Table~\ref{tab:aut-judge-valid}.}
\label{tab:aut-judge-all}
\end{table*}

\paragraph{Interpretation.}
On the production-relevant valid subset, the LLM tracks the human mean with
Spearman $\rho \in [0.57,0.64]$ and within-one rates of $78$--$91\%$: most
disagreements are off-by-one on a five-point scale rather than ordinal
inversions.
Relative to the reproduced human--human ceiling, the LLM recovers about
$84\%$ of the ceiling ICC and $95\%$ of the ceiling Spearman on
\textbf{originality}, and still $58$--$81\%$ of the ceiling on utility and
surprise.
Exact agreement with the continuous rater mean is lower by construction
(integer LLM scores vs.\ half-point means), so we emphasize ICC, QWK,
Spearman, and within-one.
Taken together, the judge is below, but in the same regime as trained
human reliability, which is the appropriate bar for LLM-as-a-judge
deployment.
We therefore treat the Stevenson originality, utility, and surprise scores
in the main AUT results as credible automated ratings, with originality the
best-aligned dimension and utility the most conservative.

\subsection{Infinity-Chat 100}
We evaluate five manually selected queries from Infinity-Chat 100 \cite{jiang2026artificial}: 1, 2, 46, 60, and 78. For each query and condition, persona-free prompting produces 50 responses, while persona conditions produce ten responses from each of five personas. This yields 250 responses per condition per generation seed. Generation uses temperature $1.0$, top-$p=0.9$, and a maximum of 512 new tokens. Before computing embedding and lexical metrics, we exclude responses explicitly labeled invalid.

\subsubsection{Baselines and Variants}
The persona-free baseline uses the original query. We also evaluate the shared ZS-CoT, Step-Back, and DMAD prompt variants, together with the persona controls and selection and generation variants defined above. The composition experiment adds the selected personas to the DMAD prompt.

\paragraph{Task-specific evolutionary baseline.}
\textsc{IC-Evolution} replaces persona-space fitness with three response-space axes: mean pairwise cosine distance, Vendi score, and one minus mean word-trigram Jaccard overlap. These are computed within each query and averaged across queries after validity screening and mechanical removal of persona framing. The resulting population is supplied to the downstream Mahalanobis max--min dispersion selector.

\subsubsection{Metrics}
\paragraph{Homogeneity.}
For a query and condition, let $\mathcal U=(u_1,\ldots,u_n)$ denote the response collection, and write $d_{ij}=d_{\mathrm{cos}}^{\mathrm R}(u_i,u_j)$. Homogeneity is mean pairwise cosine similarity,
$$\operatorname{Hom}(\mathcal U)=1-\frac{2}{n(n-1)}\sum_{i<j}d_{ij}.$$

\paragraph{Persona Separation.}
Let $p_i$ be the persona generating $u_i$. Define $\mathcal B=\{(i,j):i<j,\ p_i\ne p_j\}$ and $\mathcal I=\{(i,j):i<j,\ p_i=p_j\}$. Persona separation is
$$\operatorname{Sep}(\mathcal U) = \frac{1}{|\mathcal B|}\sum_{(i,j)\in\mathcal B}d_{ij}-\frac{1}{|\mathcal I|}\sum_{(i,j)\in\mathcal I}d_{ij}.$$
Separation is undefined when either pair set is empty. Both metrics are computed within each query and averaged across queries.

\paragraph{Flexibility.}
We apply the cosine-Gram Vendi score in \eqref{eq:vendi} to the response embeddings for each query and average across queries.

\paragraph{Validity and Quality.}
Validity is the fraction of responses admitted by the validity screen. Qwen3.6-27B independently rates the quality of each response on a five-point scale for execution and fit to the request and the scores are averaged across responses and queries.

\paragraph{LLM Judge on Response Quality and Validity}

Infinity-Chat releases human \emph{ratings} of model responses without human-written response pools \cite{jiang2026artificial}. We use the ratings to validate Qwen3.6-27B judge employed in evaluation: $750$ (query,~response) pairs from $25$ external LMs, each with approximately $25$ independent absolute ratings on a 1--5 scale (\texttt{human\_absolute}). The quality head is scored against the $25$-rater mean (the paper's comparison target for LM judges); the validity head is checked with a tripwire that responses humans rate $\ge 4$ must almost never be labeled invalid. Table~\ref{tab:ic-judge-quality} reports rank and absolute agreement for quality; Table~\ref{tab:ic-judge-validity} reports the validity screen.

\begin{table}[t]
\centering
\small
\begin{tabular}{lr}
\toprule
Metric & Value \\
\midrule
$n$ scored pairs & 750 \\
Quality prompt & \texttt{hivemind\_quality\_absolute\_v1} \\
\midrule
Spearman $\rho$ (judge vs.\ human mean) & 0.158 \\
Pearson $r$ (judge vs.\ human mean) & 0.278 \\
ICC$(C,1)$ & 0.225 \\
ICC 95\% CI (bootstrap) & $[0.146,\,0.296]$ \\
MAE & 1.021 \\
RMSE & 1.156 \\
Mean bias (judge $-$ human mean) & $+0.691$ \\
\midrule
Exact agree (vs.\ rounded human mean) & 0.136 \\
Within-one (vs.\ rounded human mean) & 0.840 \\
QWK (vs.\ rounded human mean) & 0.132 \\
\midrule
Human split-half Spearman ceiling & 0.624 \\
$\rho$ / ceiling & 0.253 \\
\bottomrule
\end{tabular}
\caption{Infinity-Chat quality-judge agreement against the $25$-rater human
mean on the Artificial Hivemind dense absolute-rating set.
Exact / within-one / QWK compare the integer LLM score to the human mean
rounded to the nearest integer.}
\label{tab:ic-judge-quality}
\end{table}

\begin{table}[t]
\centering
\small
\begin{tabular}{lrrr}
\toprule
Human mean bin & $n$ & Invalid rate & Prompt \\
\midrule
Good ($\ge 4$) & 405 & 0.002 & \texttt{hivemind\_validity\_screen\_v1} \\
Mid ($(2,4)$) & 341 & 0.023 & \\
Bad ($\le 2$) & 4 & 0.250 & \\
\midrule
Overall & 750 & 0.013 & \\
\bottomrule
\end{tabular}
\caption{Infinity-Chat validity-gate tripwire on the same $750$ pairs.
The automated gate fails the protocol if it marks ${>}5\%$ of human-good
responses invalid; the observed rate is $0.25\%$.}
\label{tab:ic-judge-validity}
\end{table}

The validity screen is \emph{highly conservative} with respect to human judgments: only $0.25\%$ of responses that humans rate as good ($\ge 4$) are discarded, well under the $5\%$ tripwire, so the LLM gate does not systematically delete content that human raters endorse. On quality, the judge is \emph{positively associated} with the human consensus (Pearson $r{=}0.28$, ICC${=}0.23$) and lands within one scale point of the rounded human mean on $84\%$ of items, while remaining below the human split-half Spearman ceiling ($\rho{=}0.16$ vs.\ $0.62$). The residual gap is consistent with a mild high-score bias (mean bias $+0.69$) on a distribution where human means already cluster near the top of the scale. We therefore treat the Infinity-Chat \textbf{validity} labels as a credible hard filter for all downstream LLM and embedding metrics, and treat \textbf{quality} scores as a coarsely human-aligned complementary axis, which is useful for relative quality, diversity tradeoffs across variants, and interpreted alongside embedding-based homogeneity and diversity rather than as a calibrated substitute for the full $25$-rater panel.

\subsection{Divergent Association Task (DAT)}
\subsubsection{Baselines and Variants}
\paragraph{Standard persona baselines.}
We use the original DAT prompt \citep{olson2021naming,chen2023probing} as the baseline persona-free prompt: \textsc{DAT}. \cite{schapiro2026creativityneuro} tests various prompts to contrast creative and non-creative outputs, and we also compare their task-optimized prompts as our baseline comparisons: \textsc{CN-Base} (their base instruction prompt that contrasts with random instruction prompt), \textsc{CN-Random} (their random instruction prompt), \textsc{Creativity-enhanced} (their creative prompt that serves as a contrast to their non-creative prompt), \textsc{Non-creative} (their non-creative prompt that serves the similar role as common use prompt of AUT benchmark). 

\paragraph{Other baselines and variants.}
Other reasoning baselines, persona controls (gibberish, \cite{jin2025multi}, and random/typical personas from \cite{jiang2025know,ge2024scaling}) are similarly compared for DAT as well. Our selection and generation (MCMC and Evo) variants use the same set of selected personas used for AUT. (Note that the personas, as well as selection and generation algorithms are agnostic of a specific task). \textsc{DAT-Evolution} is the task-conditioned variant using DAT-specific fitness function
\begin{equation}
    \operatorname{Fit}(p)=\lambda_{\operatorname{DAT}} \operatorname{DAT}_m(\mathcal W_{1:7}) + \lambda_{\operatorname{Flex}} \operatorname{VS}(\mathcal W) + \lambda_H H(\mathcal W) + \lambda_C (1-C_{10}(\mathcal W)),
\end{equation}
where $\mathcal W$ denotes the set of valid responses produced by the persona $p$. 

\paragraph{Generation and sampling protocol.}
Non-persona conditions use $n_{\mathrm{comp}}=35$ independent completions, while persona conditions use seven completions for each of the $k=5$ selected personas. We retain the first $n_{\mathrm{word}}=10$ words from each completion, giving $n_{\mathrm{raw}}=350$ raw words per condition before validity gate.

\subsubsection{Metrics}
For DAT, we report four creativity metrics along with validity ratio: DAT score as suggested by the original literature \citep{olson2021naming}, fluency measured by Shannon entropy, flexibility measured by Vendi score \eqref{eq:vendi}, and top-10 concentration.

\paragraph{Notation.}
For DAT, we denote response as $w$ and a valid response set as $\mathcal W$, as they represent noun words. Let $W_j=(w_{j1},\ldots,w_{jn_j})$ contain the usable validated word occurrences from completion $j$ in response order. Unlike AUT, the relationship between $10$ words in a completion is also an important evaluation target, because the task is to name $10$ irrelevant nouns. Thus, we deal with the pooled multiset $\mathcal W=\biguplus_j W_j$. Let $\mathcal W^{\mathrm{human}}$ be the human reference set of words, drawn from human responses collected by \cite{olson2021naming}.

\paragraph{DAT Score.}
\cite{olson2021naming} invented Divergent Association Task (DAT) along with a creativity metric called DAT score. It is adopted in its original form by following works using DAT \citep{chen2023probing,schapiro2026creativityneuro}. Let $\overline g(w)$ denote its unit-normalized GloVe word embedding. For a scorable completion ($n_j\ge7$),
$$\operatorname{DAT}(W_j) = \frac{100}{\binom{n_j}{2}}\sum_{1\leq a<b\leq n_j}\left[1-\overline g(w_{ja})^\top\overline g(w_{jb})\right].$$
Human-reference percentiles use the same calculation applied to the raw word lists of 8,572 human completions.

\paragraph{Flency.}
Shannon entropy captures fluency on how flat the generated words are distributed: higher entropy, lower duplicate words and thus higher fluency. Therefore, the metric measures how creative/divergent LLMs are
\begin{equation}
    H(\mathcal W) = -\sum_{w\in\mathcal V} \frac{c_w}{\sum_{w\in\mathcal V}c_w}\log{\frac{c_w}{\sum_{w\in\mathcal V}c_w}},
\end{equation}
where $\mathcal V$ is vocabulary or the set of unique words in $\mathcal W$ and $c_w$ is the count of appearance of $w$ in $\mathcal W$.

Top-10 Concentration measures fluency on how much proportion the top-10 most frequent words appear over and over again: lower top-10 share, lower concentration and thus higher fluency. Therefore, the metric measures how lexically sophisticated LLMs are
\begin{equation}
    C_{10}(\mathcal W) = \frac{\sum_{i=1}^{10}c_{(i)}}{\sum_{w\in\mathcal V}c_w},
\end{equation}
where $c_{(i)}$ is the count of appearance $i-$th most frequently appearing word in $\mathcal W$.

\paragraph{Flexibility.}
For each nonempty completion $W_j$, we apply \eqref{eq:vendi} to the unit-normalized EmbeddingGemma embeddings of its individual usable word occurrences, retaining repeated words as repeated embedding rows. The reported Vendi score is the mean of these within-completion scores over nonempty completions. It measures effective semantic diversity within a generated word list, whereas type token ratio (TTR) is the ratio of unique valid words to all valid word occurrences
\begin{equation}
    TTR(\mathcal W)=\frac{|\mathcal V|}{|\mathcal W|}.
\end{equation}

\section{Experiment Results}
\label{app:results}
\subsection{Baselines}
\subsubsection{Common Baselines}
Here we evaluate and compare various metrics on 12 categories of variants: 3 standard persona baselines using reasoning prompts, 2 persona control baselines, 2 PersonaMem-v2 \citep{jiang2025know} baselines, 2 selection algorithm variants, and 3 generation algorithm variants. These are existing baselines using reasoning prompts:
\begin{itemize}[noitemsep, leftmargin=*, topsep=0pt, partopsep=0pt, label={\tiny\raisebox{0.5ex}{$\blacktriangleright$}}]
    \item ZS-CoT: Zero-Shot Chain-of-Thought (ZS-CoT) reasoning prompt adapted from \cite{kojima2022large}
    \item Step-Back: Step-Back Prompting (SBP) adapted from \cite{zheng2024take}
    \item DMAD: Diverse Multi-Agent Debate prompt combining reasoning strategies (ZS-CoT + SBP) adapted from \cite{liu2025breaking}
\end{itemize}

We also compare our methods against control personas:
\begin{itemize}[noitemsep, leftmargin=*, topsep=0pt, partopsep=0pt, label={\tiny\raisebox{0.5ex}{$\blacktriangleright$}}]
    \item Gibberish: gibberish of equivalent length as personas our methods utilize
    \item Task-conditioned: task-conditioned personas generated by \cite{jin2025multi}
\end{itemize}

We compare against existing persona benchmark PersonaMem-v2 dataset \citep{jiang2025know}, which randomly selected from PersonaHub \citep{ge2024scaling}, using two sampling methods:
\begin{itemize}[noitemsep, leftmargin=*, topsep=0pt, partopsep=0pt, label={\tiny\raisebox{0.5ex}{$\blacktriangleright$}}]
    \item Random: randomly selected personas from the base population.
    \item Typical: typical personas selected from the most dense regions.
\end{itemize}

Finally, we apply our diversity-aware selection methods on persona population of \cite{jiang2025know}:
\begin{itemize}[noitemsep, leftmargin=*, topsep=0pt, partopsep=0pt, label={\tiny\raisebox{0.5ex}{$\blacktriangleright$}}]
    \item Coverage: selected personas from the base population to maximize coverage defined by cosine, L2 and Mahalanobis distances.
    \item Dispersion: selected personas from the base population to maximize minimum dispersion defined by cosine, L2 and Mahalanobis distances; Mahalnobis distance-based dispersion selection is chosen as a default for all persona generation algorithms.
\end{itemize}

Then, expand the baseline population using our MCMC and evolutionary persona generation algorithms:
\begin{itemize}[noitemsep, leftmargin=*, topsep=0pt, partopsep=0pt, label={\tiny\raisebox{0.5ex}{$\blacktriangleright$}}]
    \item MCMC: extended population using Uniform-Coverage MCMC persona generation algorithm.
    \item Evolution: extended population using Evolutionary TextGrad algorithm on persona fitness function.
    \item AUT Evolution: extended population using Evolutionary TextGrad algorithm on AUT response fitness function.
\end{itemize}

\subsubsection{AUT Baselines}
These are the baseline and task-optimized prompts from \cite{goes2023pushing}:
\begin{itemize}[noitemsep, leftmargin=*, topsep=0pt, partopsep=0pt, label={\tiny\raisebox{0.5ex}{$\blacktriangleright$}}]
    \item Common Use: common uses of objects
    \item Alternative Use: alternative uses of objects generated by generic AUT prompt
    \item Expert: alternative uses generated by AUT-optimized expert prompt
    \item Creativity-enhanced: alternative uses generated by creativity-enhanced prompt adapted from \cite{goes2023pushing}
\end{itemize}

\subsubsection{DAT Baselines}
These are the prompt conditions for the DAT, following \cite{chen2023probing} and \cite{schapiro2026creativityneuro}:
\begin{itemize}[noitemsep, leftmargin=*, topsep=0pt, partopsep=0pt, label={\tiny\raisebox{0.5ex}{$\blacktriangleright$}}]
    \item \textsc{Divergent Association}: zero-shot task prompt in the original form of \cite{olson2021naming}, adapted by \cite{chen2023probing}.
    \item \textsc{Base-Instruction}: base control prompt \citep{chen2023probing}, which asks for ten nouns with no divergence instruction.
    \item \textsc{Random-Instruction}: random control prompt \citep{chen2023probing}, which asks for ten random nouns.
    \item \textsc{Creative} / Non-Creative (\textsc{Non-Divergent Association}): the contrastive prompt pair of \cite{schapiro2026creativityneuro}, which instruct the model to answer creatively and uncreatively respectively, bracketing the achievable range.
\end{itemize}
Every condition beyond DAT, the reasoning prompts, the control personas, and all of our selection and generation variants, injects into the \textsc{Divergent Association} prompt, which is therefore the reference condition throughout the DAT results. Selection under cosine and $L_2$ dissimilarity returns identical persona sets at every pool, so the two are reported jointly.

\subsection{Persona Diversity}
\label{app:persona_diversity}
\begin{table}[htbp]
\centering\small
\caption{Persona diversity of the variants measured in Dispersion (\eqref{eq:min-dispersion}) and Vendi score (\eqref{eq:vendi}) of each persona set ($k=5$), computed on persona embeddings under cosine, L2, and Mahalanobis dissimilarity. Dispersion represents how personas are spread far from each other (minimum pairwise distance within the set; higher is more spread). Vendi score represents the effective number of distinct clusters of personas (higher is more distinct). Methods are grouped by their algorithm categories: MPAQ is the baseline \citep{jin2025multi} that generated persona for output diversity, Individualized personas represent baseline selection methods from the base pool of size 997, Selected personas represent our methods of persona selection according to various objective functions, and Generated personas represent our methods of persona generation, each selected from its expanded pool by exact Mahalanobis max--min dispersion. \textbf{Bold} = best per column within the selected and generated groups; baseline groups are not bolded.}
\label{tab:persona_diversity_full}
\setlength{\tabcolsep}{5pt}
\begin{tabular}{@{}l ccc ccc@{}}
\toprule
\multirow{2}{*}{Persona set}
  & \multicolumn{3}{c}{Dispersion}
  & \multicolumn{3}{c}{Vendi score} \\
\cmidrule(lr){2-4}\cmidrule(lr){5-7}
  & Cosine & L2 & Mahal. & Cosine & L2 & Mahal. \\
\midrule
\addlinespace
\multicolumn{7}{@{}l}{\textit{Baseline personas}}\\
\textsc{Task-conditioned}                & $0.130$ & $0.510$ & $13.42$ & $1.81$ & $2.21$ & $1.21$ \\
\addlinespace
\multicolumn{7}{@{}l}{\textit{Individualized personas} (base pool $=997$)}\\
\textsc{Typical}             & $0.270$ & $0.734$ & $12.04$ & $2.34$ & $2.73$ & $1.21$ \\
\textsc{Random}              & $0.248$ & $0.703$ & $13.67$ & $2.60$ & $2.92$ & $1.25$ \\
\addlinespace
\multicolumn{7}{@{}l}{\textit{Selected personas} (base pool $=997$)}\\
\quad Coverage, cosine        & $0.231$ & $0.679$ & $12.25$ & $2.22$ & $2.63$ & $1.19$ \\
\quad Coverage, Mahalanobis   & $0.191$ & $0.619$ & $10.52$ & $2.25$ & $2.64$ & $1.15$ \\
\quad Dispersion, cosine      & $\mathbf{0.485}$ & $\mathbf{0.984}$ & $16.05$ & $\mathbf{3.41}$ & $\mathbf{3.41}$ & $1.38$ \\
\quad Dispersion, Mahalanobis  & $0.366$ & $0.856$ & $\mathbf{20.98}$ & $3.07$ & $3.23$ & $\mathbf{1.46}$ \\
\addlinespace
\multicolumn{7}{@{}l}{\textit{Generated personas} (expanded pool size)} \\
\textsc{MCMC} (8,553)           & $0.352$ & $0.839$ & $20.98$ & $3.04$ & $3.21$ & $1.46$ \\
\textsc{Evolution} (1,869)      & $0.393$ & $0.887$ & $\mathbf{24.38}$ & $3.19$ & $3.30$ & $\mathbf{1.64}$ \\
\textsc{AUT-Evolution} (3,397)  & $0.373$ & $0.864$ & $22.34$ & $3.12$ & $3.26$ & $1.51$ \\
\textsc{DAT-Evolution} (2,439) & $\mathbf{0.436}$ & $\mathbf{0.934}$ & $21.61$ & $\mathbf{3.28}$ & $\mathbf{3.34}$ & $1.49$ \\
\bottomrule
\end{tabular}
\end{table}

\begin{figure*}[t]
    \centering
    \includegraphics[width=\textwidth]{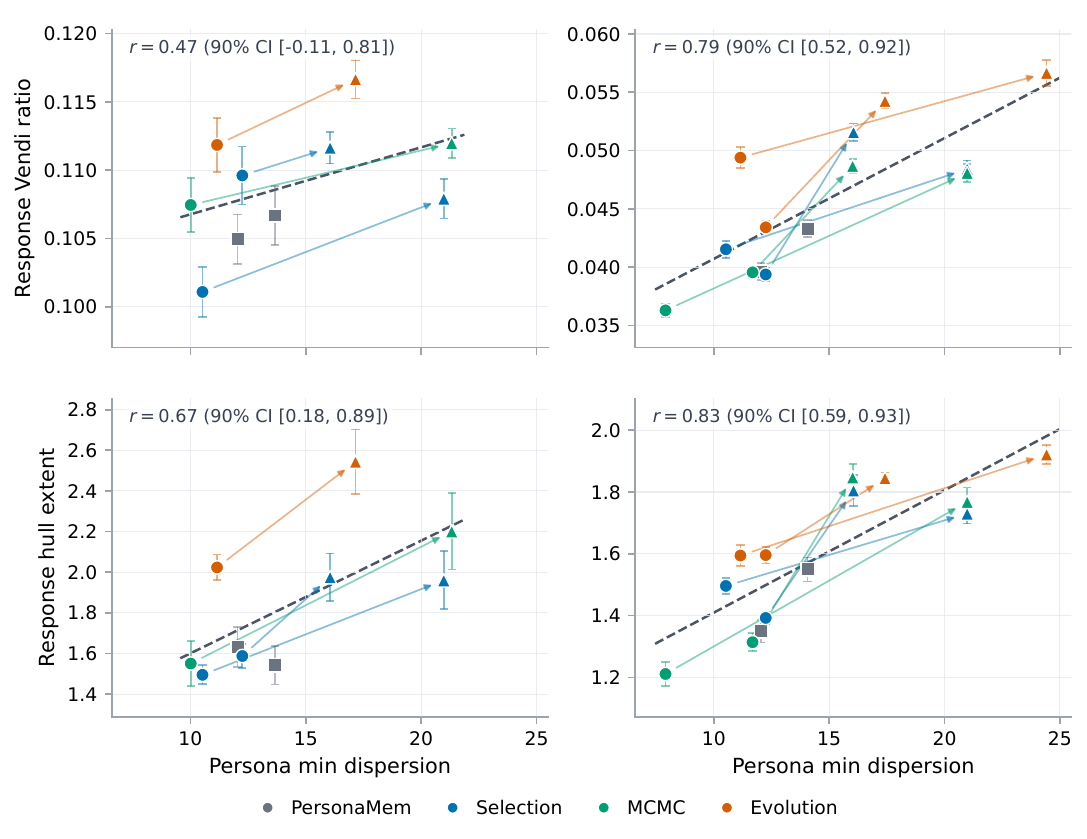}
    \caption{Persona-to-response relationship on (a) the Alternative Uses Task and (b) Infinity-Chat. Points represent 14 analogous task-agnostic method configurations per benchmark; response hull extent and vendi score are averaged over seven AUT objects and five Infinity-Chat queries. Arrows connect six matched Coverage$\rightarrow$Dispersion contrasts while holding candidate-pool family and selector metric fixed. Dashed lines show configuration-level linear fits; annotations report Pearson correlations and descriptive 95\% intervals.}
    \label{fig:persona-response-hull-vendi}
\end{figure*}

\clearpage
\subsection{AUT Benchmark}

\begin{table}[htbp]
\centering\small
\caption{Response validity: the proportion of generated uses admitted by the validity gate of \Cref{sec:aut}, reported over all seven AUT objects and, for comparability with \textsc{Stevenson}, over the three objects with human reference data ($175$ and $75$ generated uses per variant, respectively). Every variants beyond \textsc{Alternative-Use} build prompts upon the generic \textsc{Alternative-Use} prompt unless its group states otherwise; the final two groups re-run our selection and generation methods on the \textsc{Creativity-enhanced} prompt of \citet{goes2023pushing}, showing that persona diversification composes with a stronger prompting strategy rather than competing with it. Cells give the admitted proportion averaged over objects. Subscripts denote the 95\% confidence interval across 5 independent random seeds. \textsc{Stevenson} exists only for the human-reference objects, so its seven-object cell is omitted (---). Validity is a \emph{control} rather than a target: values near $1$ are expected and the informative signal is degradation rather than rank, so no value is bolded. Generated variants report their expanded pool size in parentheses.}
\label{tab:aut_response_validity}
\begin{tabular}{lcc}
\toprule
Variant & All seven objects & Human-reference objects \\
\midrule
\multicolumn{3}{@{}l}{\textit{Standard persona}} \\
\textsc{Common-Use} & $0.971_{\pm 0.018}$ & $0.981_{\pm 0.015}$ \\
\textsc{Alternative-Use} & $0.984_{\pm 0.014}$ & $0.968_{\pm 0.025}$ \\
\textsc{Expert} & $0.997_{\pm 0.006}$ & $0.997_{\pm 0.007}$ \\
\textsc{Creativity-enhanced} & $0.968_{\pm 0.011}$ & $0.981_{\pm 0.022}$ \\
\textsc{ZS-CoT} & $0.978_{\pm 0.009}$ & $0.997_{\pm 0.007}$ \\
\textsc{Step-Back} & $0.976_{\pm 0.014}$ & $0.968_{\pm 0.015}$ \\
\textsc{DMAD} & $0.977_{\pm 0.009}$ & $0.997_{\pm 0.007}$ \\
\textsc{Gibberish} & $0.693_{\pm 0.035}$ & $0.659_{\pm 0.125}$ \\
\addlinespace
\multicolumn{3}{@{}l}{\textit{Baseline personas}} \\
\textsc{MPAQ} & $0.930_{\pm 0.024}$ & $0.947_{\pm 0.020}$ \\
\addlinespace
\multicolumn{3}{@{}l}{\textit{Individualized personas}} \\
\textsc{Random} & $0.993_{\pm 0.006}$ & $0.989_{\pm 0.014}$ \\
\textsc{Typical} & $0.995_{\pm 0.006}$ & $0.997_{\pm 0.007}$ \\
\addlinespace
\multicolumn{3}{@{}l}{\textit{Selected personas from base pool}} \\
\quad Coverage, cosine & $0.987_{\pm 0.006}$ & $1.000_{\pm 0.000}$ \\
\quad Coverage, Mahalanobis & $0.993_{\pm 0.013}$ & $1.000_{\pm 0.000}$ \\
\quad Dispersion, cosine & $0.992_{\pm 0.012}$ & $0.992_{\pm 0.015}$ \\
\quad Dispersion, Mahalanobis & $0.986_{\pm 0.010}$ & $0.995_{\pm 0.009}$ \\
\addlinespace
\multicolumn{3}{@{}l}{\textit{Generated personas} (expanded pool size)} \\
\quad MCMC (8,553) & $0.984_{\pm 0.009}$ & $0.995_{\pm 0.009}$ \\
\quad Evolution (1,869) & $0.985_{\pm 0.012}$ & $0.987_{\pm 0.017}$ \\
\quad AUT-Evolution (3,397) & $0.978_{\pm 0.017}$ & $0.989_{\pm 0.014}$ \\
\addlinespace
\multicolumn{3}{@{}l}{\textit{Selected personas} $+$ \textsc{Creativity-enhanced} prompt} \\
\quad Coverage, cosine & $0.978_{\pm 0.012}$ & $0.981_{\pm 0.015}$ \\
\quad Coverage, Mahalanobis & $0.989_{\pm 0.014}$ & $0.992_{\pm 0.009}$ \\
\quad Dispersion, cosine & $0.982_{\pm 0.008}$ & $0.979_{\pm 0.019}$ \\
\quad Dispersion, Mahalanobis & $0.979_{\pm 0.011}$ & $0.976_{\pm 0.030}$ \\
\addlinespace
\multicolumn{3}{@{}l}{\textit{Generated personas} $+$ \textsc{Creativity-enhanced} prompt} \\
\quad MCMC & $0.967_{\pm 0.014}$ & $0.968_{\pm 0.040}$ \\
\quad Evolution & $0.979_{\pm 0.020}$ & $0.973_{\pm 0.031}$ \\
\quad AUT-Evolution & $0.973_{\pm 0.006}$ & $0.968_{\pm 0.034}$ \\
\addlinespace
\multicolumn{3}{@{}l}{\textit{Human}} \\
\textsc{Stevenson} & --- & $0.811_{\pm 0.007}$ \\
\bottomrule
\end{tabular}
\end{table}



\begin{table}[htbp]
\centering\small
\setlength{\tabcolsep}{4pt}
\caption{Response diversity across all seven AUT objects, reported two ways. \emph{Mean pairwise distance} (\eqref{eq:diversity}) is the average cosine, L2 or Mahalanobis distance among the valid uses of a variant; \emph{convex hull volume} (\eqref{eq:diversity}) is the volume of the convex hull those uses span in the same geometry, so it rewards a set that occupies a region rather than merely separating in the mean. Both are averaged over objects. Subscripts denote the 95\% confidence interval across 5 independent random seeds, and higher is more diverse. Hull extent under cosine and L2 coincides to nine decimal places, the two metrics are monotone transforms of one another on unit-normalized embeddings, so it is reported once. Baseline and our method variants are grouped by the personas: standard persona represents LLMs default persona when no persona is given explicitly, baseline personas represent existing persona injection methods, individualized personas represent broad population of personas from PersonaHub \cite{ge2024scaling, jiang2025know}, while selected personas and generated personas represent our specific persona selection and generation algorithms. The final two groups re-run our selection and generation methods on the \textsc{Creativity-enhanced} prompt, showing that persona diversification composes with a stronger prompting strategy. Bold marks the best value per column within a method category, with tied values bolded jointly. All personas, including gibberish, are injected to the alternative use prompt unless their group states otherwise.}
\label{tab:aut_diversity_full}
\begin{tabular}{lccccc}
\toprule
 & \multicolumn{3}{c}{Mean pairwise distance} & \multicolumn{2}{c}{Convex hull volume} \\
\cmidrule(lr){2-4}\cmidrule(lr){5-6}
Variant & Cosine & L2 & Mahal. & Cosine/L2 & Mahal. \\
\midrule
\multicolumn{6}{@{}l}{\textit{Standard persona}} \\
\textsc{Common-Use} & $0.155_{\pm 0.004}$ & $0.526_{\pm 0.006}$ & $15.50_{\pm 0.35}$ & $0.103_{\pm 0.004}$ & $2.17_{\pm 0.23}$ \\
\textsc{Alternative-Use} & $0.188_{\pm 0.003}$ & $0.590_{\pm 0.006}$ & $11.97_{\pm 0.28}$ & $0.172_{\pm 0.014}$ & $1.42_{\pm 0.09}$ \\
\textsc{Expert} & $0.179_{\pm 0.004}$ & $0.580_{\pm 0.006}$ & $14.57_{\pm 0.29}$ & $0.161_{\pm 0.004}$ & $2.36_{\pm 0.13}$ \\
\textsc{Creativity-enhanced} & $0.179_{\pm 0.002}$ & $0.592_{\pm 0.004}$ & $\mathbf{20.24}_{\pm 0.56}$ & $0.144_{\pm 0.010}$ & $\mathbf{5.15}_{\pm 0.16}$ \\
\textsc{ZS-CoT} & $0.165_{\pm 0.003}$ & $0.563_{\pm 0.006}$ & $14.73_{\pm 0.38}$ & $0.161_{\pm 0.006}$ & $2.43_{\pm 0.11}$ \\
\textsc{Step-Back} & $0.173_{\pm 0.006}$ & $0.574_{\pm 0.014}$ & $13.66_{\pm 0.52}$ & $0.168_{\pm 0.011}$ & $2.03_{\pm 0.29}$ \\
\textsc{DMAD} & $0.178_{\pm 0.005}$ & $0.588_{\pm 0.009}$ & $16.23_{\pm 0.46}$ & $0.167_{\pm 0.007}$ & $2.79_{\pm 0.22}$ \\
\textsc{Gibberish} & $\mathbf{0.233}_{\pm 0.005}$ & $\mathbf{0.672}_{\pm 0.006}$ & $19.83_{\pm 0.82}$ & $\mathbf{0.184}_{\pm 0.008}$ & $3.14_{\pm 0.34}$ \\
\addlinespace
\multicolumn{6}{@{}l}{\textit{Baseline personas}} \\
\textsc{MPAQ} & $0.216_{\pm 0.002}$ & $0.652_{\pm 0.003}$ & $19.05_{\pm 0.49}$ & $0.186_{\pm 0.003}$ & $3.46_{\pm 0.27}$ \\
\addlinespace
\multicolumn{6}{@{}l}{\textit{Individualized personas}} \\
\textsc{Random} & $0.202_{\pm 0.004}$ & $0.621_{\pm 0.007}$ & $12.59_{\pm 0.63}$ & $0.202_{\pm 0.004}$ & $1.54_{\pm 0.12}$ \\
\textsc{Typical} & $0.197_{\pm 0.006}$ & $0.615_{\pm 0.010}$ & $12.78_{\pm 0.26}$ & $0.204_{\pm 0.003}$ & $1.63_{\pm 0.12}$ \\
\addlinespace
\multicolumn{6}{@{}l}{\textit{Selected personas from base pool}} \\
\quad Coverage, cosine/L2 & $0.207_{\pm 0.006}$ & $0.629_{\pm 0.008}$ & $12.71_{\pm 0.28}$ & $0.203_{\pm 0.007}$ & $1.58_{\pm 0.08}$ \\
\quad Coverage, Mahalanobis & $0.193_{\pm 0.003}$ & $0.604_{\pm 0.005}$ & $11.77_{\pm 0.20}$ & $0.200_{\pm 0.007}$ & $1.50_{\pm 0.06}$ \\
\quad Dispersion, cosine/L2 & $\mathbf{0.208}_{\pm 0.003}$ & $\mathbf{0.633}_{\pm 0.004}$ & $\mathbf{13.36}_{\pm 0.18}$ & $\mathbf{0.213}_{\pm 0.005}$ & $\mathbf{1.98}_{\pm 0.14}$ \\
\quad Dispersion, Mahalanobis & $0.202_{\pm 0.004}$ & $0.622_{\pm 0.005}$ & $12.95_{\pm 0.35}$ & $0.210_{\pm 0.005}$ & $1.96_{\pm 0.17}$ \\
\addlinespace
\multicolumn{6}{@{}l}{\textit{Generated personas} (expanded pool size)} \\
\quad MCMC (8,553) & $0.207_{\pm 0.004}$ & $0.631_{\pm 0.006}$ & $14.04_{\pm 0.59}$ & $0.211_{\pm 0.001}$ & $2.21_{\pm 0.23}$ \\
\quad Evolution (1,869) & $0.214_{\pm 0.005}$ & $0.644_{\pm 0.007}$ & $15.47_{\pm 0.39}$ & $0.210_{\pm 0.005}$ & $2.53_{\pm 0.20}$ \\
\quad AUT-Evolution (3,397) & $\mathbf{0.216}_{\pm 0.006}$ & $\mathbf{0.646}_{\pm 0.009}$ & $\mathbf{15.66}_{\pm 0.21}$ & $\mathbf{0.218}_{\pm 0.007}$ & $\mathbf{2.98}_{\pm 0.16}$ \\
\addlinespace
\multicolumn{6}{@{}l}{\textit{Selected personas} $+$ \textsc{Creativity-enhanced} prompt} \\
\quad Coverage, cosine/L2 & $0.190_{\pm 0.005}$ & $0.610_{\pm 0.009}$ & $20.44_{\pm 0.32}$ & $0.150_{\pm 0.006}$ & $5.17_{\pm 0.21}$ \\
\quad Coverage, Mahalanobis & $0.190_{\pm 0.003}$ & $0.609_{\pm 0.004}$ & $20.45_{\pm 0.17}$ & $0.154_{\pm 0.002}$ & $5.22_{\pm 0.15}$ \\
\quad Dispersion, cosine/L2 & $\mathbf{0.195}_{\pm 0.005}$ & $\mathbf{0.618}_{\pm 0.009}$ & $20.57_{\pm 0.20}$ & $\mathbf{0.159}_{\pm 0.007}$ & $6.43_{\pm 0.22}$ \\
\quad Dispersion, Mahalanobis & $0.191_{\pm 0.004}$ & $0.611_{\pm 0.008}$ & $\mathbf{20.57}_{\pm 0.51}$ & $0.158_{\pm 0.007}$ & $\mathbf{6.73}_{\pm 0.29}$ \\
\addlinespace
\multicolumn{6}{@{}l}{\textit{Generated personas} $+$ \textsc{Creativity-enhanced} prompt} \\
\quad MCMC & $0.193_{\pm 0.004}$ & $0.615_{\pm 0.008}$ & $21.16_{\pm 0.41}$ & $0.156_{\pm 0.005}$ & $6.75_{\pm 0.21}$ \\
\quad Evolution & $0.197_{\pm 0.005}$ & $0.621_{\pm 0.009}$ & $21.20_{\pm 0.25}$ & $\mathbf{0.157}_{\pm 0.005}$ & $6.11_{\pm 0.22}$ \\
\quad AUT-Evolution & $\mathbf{0.199}_{\pm 0.005}$ & $\mathbf{0.625}_{\pm 0.008}$ & $\mathbf{21.86}_{\pm 0.65}$ & $0.156_{\pm 0.005}$ & $\mathbf{7.00}_{\pm 0.26}$ \\
\bottomrule
\end{tabular}
\end{table}

\begin{table}[t]
\centering\small
\setlength{\tabcolsep}{4pt}
\caption{Response diversity restricted to the three objects with human reference data (book, fork, tin can), so that all variants are directly comparable to \textsc{Stevenson}. Cells, bolding, and column conventions are as in Table~\ref{tab:aut_diversity_full}.}
\label{tab:aut_diversity_human}
\begin{tabular}{lccccc}
\toprule
 & \multicolumn{3}{c}{Mean pairwise distance} & \multicolumn{2}{c}{Convex hull extent} \\
\cmidrule(lr){2-4}\cmidrule(lr){5-6}
Variant & Cosine & L2 & Mahal. & Cosine/L2 & Mahal. \\
\midrule
\multicolumn{6}{@{}l}{\textit{Standard persona}} \\
\textsc{Common-Use} & $0.182_{\pm 0.005}$ & $0.583_{\pm 0.010}$ & $15.49_{\pm 0.66}$ & $0.124_{\pm 0.009}$ & $2.16_{\pm 0.38}$ \\
\textsc{Alternative-Use} & $\mathbf{0.195}_{\pm 0.003}$ & $\mathbf{0.601}_{\pm 0.005}$ & $11.38_{\pm 0.39}$ & $0.173_{\pm 0.012}$ & $1.20_{\pm 0.13}$ \\
\textsc{Expert} & $0.185_{\pm 0.005}$ & $0.590_{\pm 0.005}$ & $13.76_{\pm 0.32}$ & $\mathbf{0.170}_{\pm 0.012}$ & $2.01_{\pm 0.26}$ \\
\textsc{Creativity-enhanced} & $0.174_{\pm 0.006}$ & $0.583_{\pm 0.009}$ & $\mathbf{19.54}_{\pm 0.31}$ & $0.138_{\pm 0.006}$ & $\mathbf{5.06}_{\pm 0.40}$ \\
\textsc{ZS-CoT} & $0.162_{\pm 0.006}$ & $0.556_{\pm 0.011}$ & $13.62_{\pm 0.46}$ & $0.159_{\pm 0.014}$ & $1.99_{\pm 0.14}$ \\
\textsc{Step-Back} & $0.180_{\pm 0.004}$ & $0.584_{\pm 0.009}$ & $12.79_{\pm 0.20}$ & $0.162_{\pm 0.012}$ & $1.77_{\pm 0.18}$ \\
\textsc{DMAD} & $0.178_{\pm 0.009}$ & $0.585_{\pm 0.016}$ & $14.98_{\pm 0.77}$ & $0.168_{\pm 0.008}$ & $2.28_{\pm 0.19}$ \\
\textsc{Gibberish} & $0.235_{\pm 0.009}$ & $0.675_{\pm 0.016}$ & $20.50_{\pm 2.58}$ & $0.178_{\pm 0.013}$ & $3.27_{\pm 0.63}$ \\
\addlinespace
\multicolumn{6}{@{}l}{\textit{Baseline personas}} \\
\textsc{MPAQ} & $0.216_{\pm 0.007}$ & $0.651_{\pm 0.011}$ & $17.70_{\pm 0.59}$ & $0.198_{\pm 0.008}$ & $2.92_{\pm 0.38}$ \\
\addlinespace
\multicolumn{6}{@{}l}{\textit{Individualized personas}} \\
\textsc{Random} & $0.206_{\pm 0.004}$ & $0.629_{\pm 0.007}$ & $12.52_{\pm 0.66}$ & $0.213_{\pm 0.008}$ & $1.48_{\pm 0.15}$ \\
\textsc{Typical} & $0.206_{\pm 0.006}$ & $0.629_{\pm 0.010}$ & $12.70_{\pm 0.31}$ & $0.216_{\pm 0.013}$ & $1.58_{\pm 0.21}$ \\
\addlinespace
\multicolumn{6}{@{}l}{\textit{Selected personas from base pool}} \\
\quad Coverage, cosine/L2 & $\mathbf{0.214}_{\pm 0.008}$ & $\mathbf{0.639}_{\pm 0.013}$ & $12.66_{\pm 0.63}$ & $0.214_{\pm 0.008}$ & $1.57_{\pm 0.16}$ \\
\quad Coverage, Mahalanobis & $0.205_{\pm 0.008}$ & $0.624_{\pm 0.016}$ & $11.90_{\pm 0.49}$ & $0.214_{\pm 0.009}$ & $1.52_{\pm 0.15}$ \\
\quad Dispersion, cosine/L2 & $0.209_{\pm 0.002}$ & $0.638_{\pm 0.003}$ & $\mathbf{13.11}_{\pm 0.45}$ & $0.220_{\pm 0.005}$ & $1.87_{\pm 0.33}$ \\
\quad Dispersion, Mahalanobis & $0.208_{\pm 0.003}$ & $0.634_{\pm 0.004}$ & $12.87_{\pm 0.39}$ & $\mathbf{0.224}_{\pm 0.003}$ & $\mathbf{1.95}_{\pm 0.32}$ \\
\addlinespace
\multicolumn{6}{@{}l}{\textit{Generated personas} (expanded pool size)} \\
\quad MCMC (8,553) & $0.211_{\pm 0.009}$ & $0.639_{\pm 0.014}$ & $13.92_{\pm 1.35}$ & $0.223_{\pm 0.007}$ & $2.21_{\pm 0.48}$ \\
\quad Evolution (1,869) & $0.214_{\pm 0.008}$ & $0.646_{\pm 0.012}$ & $15.41_{\pm 0.61}$ & $0.212_{\pm 0.003}$ & $2.56_{\pm 0.41}$ \\
\quad AUT-Evolution (3,397) & $\mathbf{0.222}_{\pm 0.006}$ & $\mathbf{0.658}_{\pm 0.011}$ & $\mathbf{16.03}_{\pm 0.56}$ & $\mathbf{0.227}_{\pm 0.007}$ & $\mathbf{3.08}_{\pm 0.34}$ \\
\addlinespace
\multicolumn{6}{@{}l}{\textit{Selected personas} $+$ \textsc{Creativity-enhanced} prompt} \\
\quad Coverage, cosine/L2 & $0.186_{\pm 0.005}$ & $0.602_{\pm 0.009}$ & $19.84_{\pm 0.60}$ & $0.145_{\pm 0.006}$ & $5.08_{\pm 0.32}$ \\
\quad Coverage, Mahalanobis & $0.186_{\pm 0.005}$ & $0.604_{\pm 0.008}$ & $19.94_{\pm 0.39}$ & $0.144_{\pm 0.005}$ & $5.22_{\pm 0.25}$ \\
\quad Dispersion, cosine/L2 & $\mathbf{0.190}_{\pm 0.005}$ & $\mathbf{0.608}_{\pm 0.010}$ & $19.85_{\pm 0.46}$ & $\mathbf{0.153}_{\pm 0.010}$ & $6.17_{\pm 0.52}$ \\
\quad Dispersion, Mahalanobis & $0.184_{\pm 0.002}$ & $0.600_{\pm 0.004}$ & $\mathbf{20.18}_{\pm 0.60}$ & $0.148_{\pm 0.005}$ & $\mathbf{6.59}_{\pm 0.38}$ \\
\addlinespace
\multicolumn{6}{@{}l}{\textit{Generated personas} $+$ \textsc{Creativity-enhanced} prompt} \\
\quad MCMC & $0.186_{\pm 0.003}$ & $0.604_{\pm 0.006}$ & $20.62_{\pm 0.24}$ & $0.147_{\pm 0.004}$ & $6.58_{\pm 0.28}$ \\
\quad Evolution & $0.190_{\pm 0.005}$ & $0.610_{\pm 0.009}$ & $20.66_{\pm 0.73}$ & $0.148_{\pm 0.011}$ & $5.98_{\pm 0.49}$ \\
\quad AUT-Evolution & $\mathbf{0.196}_{\pm 0.006}$ & $\mathbf{0.620}_{\pm 0.011}$ & $\mathbf{21.48}_{\pm 0.75}$ & $\mathbf{0.148}_{\pm 0.003}$ & $\mathbf{6.92}_{\pm 0.40}$ \\
\addlinespace
\multicolumn{6}{@{}l}{\textit{Human}} \\
\textsc{Stevenson} & $0.149_{\pm 0.009}$ & $0.539_{\pm 0.015}$ & $17.80_{\pm 0.93}$ & $0.101_{\pm 0.006}$ & $2.26_{\pm 0.34}$ \\
\bottomrule
\end{tabular}
\end{table}

\begin{table}[htbp]
\centering\small
\setlength{\tabcolsep}{4pt}
\caption{Response originality across all seven AUT objects, measured both as distance to the prompted object and as directed distance to the common-use set, following \eqref{eq:originality}. Cells give the mean over valid uses, averaged over objects; higher is more original. \textsc{Common-Use} defines the common-use reference set $C_o$, so its distance to that set is zero by construction and is omitted (---). Subscripts denote the 95\% confidence interval across 5 independent random seeds. The final two groups re-run our selection and generation methods on the \textsc{Creativity-enhanced} prompt, showing that persona diversification composes with a stronger prompting strategy. Bold marks the best value per column within the standard, selected and generated groups separately, with tied values bolded jointly; baseline and individualized personas are not bolded. Generated variants report their expanded pool size in parentheses.}
\label{tab:aut_response_originality_full}
\resizebox{\textwidth}{!}{%
\begin{tabular}{lcccccc}
\toprule
 & \multicolumn{3}{c}{To object} & \multicolumn{3}{c}{To common uses} \\
\cmidrule(lr){2-4}\cmidrule(lr){5-7}
Variant & Cosine & L2 & Mahal. & Cosine & L2 & Mahal. \\
\midrule
\multicolumn{7}{@{}l}{\textit{Standard persona}} \\
\textsc{Common-Use} & $0.142_{\pm 0.002}$ & $0.525_{\pm 0.004}$ & $15.28_{\pm 0.16}$ & --- & --- & --- \\
\textsc{Alternative-Use} & $0.170_{\pm 0.001}$ & $0.580_{\pm 0.002}$ & $13.60_{\pm 0.14}$ & $0.114_{\pm 0.008}$ & $0.439_{\pm 0.019}$ & $10.79_{\pm 0.32}$ \\
\textsc{Expert} & $0.164_{\pm 0.002}$ & $0.568_{\pm 0.004}$ & $14.72_{\pm 0.17}$ & $0.104_{\pm 0.003}$ & $0.417_{\pm 0.010}$ & $11.25_{\pm 0.40}$ \\
\textsc{Creativity-enhanced} & $0.173_{\pm 0.004}$ & $0.586_{\pm 0.007}$ & $\mathbf{18.13}_{\pm 0.35}$ & $\mathbf{0.174}_{\pm 0.006}$ & $\mathbf{0.585}_{\pm 0.011}$ & $\mathbf{17.15}_{\pm 0.40}$ \\
\textsc{ZS-CoT} & $0.146_{\pm 0.002}$ & $0.536_{\pm 0.004}$ & $14.67_{\pm 0.10}$ & $0.125_{\pm 0.003}$ & $0.483_{\pm 0.007}$ & $13.02_{\pm 0.43}$ \\
\textsc{Step-Back} & $0.165_{\pm 0.002}$ & $0.571_{\pm 0.003}$ & $14.91_{\pm 0.18}$ & $0.129_{\pm 0.002}$ & $0.488_{\pm 0.007}$ & $12.86_{\pm 0.20}$ \\
\textsc{DMAD} & $0.151_{\pm 0.004}$ & $0.547_{\pm 0.007}$ & $15.19_{\pm 0.19}$ & $0.135_{\pm 0.006}$ & $0.504_{\pm 0.017}$ & $13.90_{\pm 0.49}$ \\
\textsc{Gibberish} & $\mathbf{0.193}_{\pm 0.002}$ & $\mathbf{0.618}_{\pm 0.004}$ & $17.72_{\pm 0.52}$ & $0.156_{\pm 0.009}$ & $0.541_{\pm 0.018}$ & $15.89_{\pm 0.68}$ \\
\addlinespace
\multicolumn{7}{@{}l}{\textit{Baseline personas}} \\
\textsc{MPAQ} & $0.180_{\pm 0.001}$ & $0.596_{\pm 0.002}$ & $16.95_{\pm 0.28}$ & $0.162_{\pm 0.002}$ & $0.560_{\pm 0.004}$ & $15.46_{\pm 0.34}$ \\
\addlinespace
\multicolumn{7}{@{}l}{\textit{Individualized personas}} \\
\textsc{Random} & $0.184_{\pm 0.004}$ & $0.603_{\pm 0.007}$ & $13.90_{\pm 0.31}$ & $0.143_{\pm 0.007}$ & $0.516_{\pm 0.015}$ & $11.89_{\pm 0.48}$ \\
\textsc{Typical} & $0.179_{\pm 0.003}$ & $0.595_{\pm 0.005}$ & $13.91_{\pm 0.13}$ & $0.139_{\pm 0.005}$ & $0.509_{\pm 0.011}$ & $11.96_{\pm 0.30}$ \\
\addlinespace
\multicolumn{7}{@{}l}{\textit{Selected personas from base pool}} \\
\quad Coverage, cosine & $\mathbf{0.188}_{\pm 0.005}$ & $\mathbf{0.609}_{\pm 0.008}$ & $13.91_{\pm 0.14}$ & $\mathbf{0.145}_{\pm 0.004}$ & $\mathbf{0.520}_{\pm 0.008}$ & $12.03_{\pm 0.37}$ \\
\quad Coverage, Mahalanobis & $0.179_{\pm 0.001}$ & $0.594_{\pm 0.002}$ & $13.51_{\pm 0.11}$ & $0.133_{\pm 0.006}$ & $0.495_{\pm 0.012}$ & $11.46_{\pm 0.28}$ \\
\quad Dispersion, cosine & $0.184_{\pm 0.002}$ & $0.603_{\pm 0.003}$ & $\mathbf{14.20}_{\pm 0.05}$ & $0.146_{\pm 0.004}$ & $0.524_{\pm 0.010}$ & $\mathbf{12.34}_{\pm 0.27}$ \\
\quad Dispersion, Mahalanobis & $0.181_{\pm 0.003}$ & $0.598_{\pm 0.004}$ & $13.98_{\pm 0.18}$ & $0.138_{\pm 0.005}$ & $0.506_{\pm 0.009}$ & $12.00_{\pm 0.28}$ \\
\addlinespace
\multicolumn{7}{@{}l}{\textit{Generated personas} (expanded pool size)} \\
\quad MCMC (8,553) & $0.182_{\pm 0.002}$ & $0.600_{\pm 0.003}$ & $14.51_{\pm 0.30}$ & $0.144_{\pm 0.005}$ & $0.519_{\pm 0.014}$ & $12.66_{\pm 0.55}$ \\
\quad Evolution (1,869) & $\mathbf{0.190}_{\pm 0.004}$ & $\mathbf{0.612}_{\pm 0.007}$ & $15.22_{\pm 0.19}$ & $\mathbf{0.162}_{\pm 0.003}$ & $\mathbf{0.556}_{\pm 0.006}$ & $\mathbf{13.61}_{\pm 0.17}$ \\
\quad AUT-Evolution (3,397) & $0.188_{\pm 0.002}$ & $0.610_{\pm 0.003}$ & $\mathbf{15.29}_{\pm 0.12}$ & $0.156_{\pm 0.004}$ & $0.544_{\pm 0.006}$ & $13.58_{\pm 0.11}$ \\
\addlinespace
\multicolumn{7}{@{}l}{\textit{Selected personas} $+$ \textsc{Creativity-enhanced} prompt} \\
\quad Coverage, cosine & $\mathbf{0.187}_{\pm 0.003}$ & $\mathbf{0.609}_{\pm 0.005}$ & $18.45_{\pm 0.20}$ & $\mathbf{0.181}_{\pm 0.003}$ & $\mathbf{0.598}_{\pm 0.005}$ & $17.25_{\pm 0.12}$ \\
\quad Coverage, Mahalanobis & $0.184_{\pm 0.003}$ & $0.603_{\pm 0.005}$ & $18.33_{\pm 0.11}$ & $0.177_{\pm 0.004}$ & $0.590_{\pm 0.006}$ & $17.20_{\pm 0.15}$ \\
\quad Dispersion, cosine & $0.185_{\pm 0.005}$ & $0.604_{\pm 0.008}$ & $18.31_{\pm 0.27}$ & $0.179_{\pm 0.007}$ & $0.593_{\pm 0.013}$ & $\mathbf{17.24}_{\pm 0.26}$ \\
\quad Dispersion, Mahalanobis & $0.182_{\pm 0.005}$ & $0.600_{\pm 0.008}$ & $\mathbf{18.36}_{\pm 0.36}$ & $0.178_{\pm 0.007}$ & $0.592_{\pm 0.011}$ & $17.22_{\pm 0.39}$ \\
\addlinespace
\multicolumn{7}{@{}l}{\textit{Generated personas} $+$ \textsc{Creativity-enhanced} prompt} \\
\quad MCMC & $0.183_{\pm 0.005}$ & $0.602_{\pm 0.009}$ & $18.71_{\pm 0.22}$ & $0.182_{\pm 0.006}$ & $0.598_{\pm 0.009}$ & $17.64_{\pm 0.21}$ \\
\quad Evolution & $\mathbf{0.187}_{\pm 0.005}$ & $\mathbf{0.608}_{\pm 0.008}$ & $18.80_{\pm 0.17}$ & $0.183_{\pm 0.006}$ & $0.600_{\pm 0.011}$ & $17.65_{\pm 0.14}$ \\
\quad AUT-Evolution & $\mathbf{0.187}_{\pm 0.006}$ & $\mathbf{0.608}_{\pm 0.009}$ & $\mathbf{19.14}_{\pm 0.41}$ & $\mathbf{0.186}_{\pm 0.009}$ & $\mathbf{0.605}_{\pm 0.014}$ & $\mathbf{18.02}_{\pm 0.39}$ \\
\bottomrule
\end{tabular}%
}
\end{table}

\begin{table}[t]
\centering\small
\setlength{\tabcolsep}{4pt}
\caption{Response originality restricted to the three objects with human reference data (book, fork, tin can), so that all variants are directly comparable to \textsc{Stevenson}. Cells, bolding, and column conventions are as in Table~\ref{tab:aut_response_originality_full}. \textsc{Common-Use} defines the common-use reference set $C_o$, so its distance to that set is zero by construction and is omitted (---).}
\label{tab:aut_response_originality_human}
\resizebox{\textwidth}{!}{%
\begin{tabular}{lcccccc}
\toprule
 & \multicolumn{3}{c}{To object} & \multicolumn{3}{c}{To common uses} \\
\cmidrule(lr){2-4}\cmidrule(lr){5-7}
Variant & Cosine & L2 & Mahal. & Cosine & L2 & Mahal. \\
\midrule
\multicolumn{7}{@{}l}{\textit{Standard persona}} \\
\textsc{Common-Use} & $0.182_{\pm 0.005}$ & $0.583_{\pm 0.010}$ & $15.49_{\pm 0.66}$ & --- & --- & --- \\
\textsc{Alternative-Use} & $\mathbf{0.195}_{\pm 0.003}$ & $\mathbf{0.601}_{\pm 0.005}$ & $11.38_{\pm 0.39}$ & $0.173_{\pm 0.012}$ & $1.20_{\pm 0.13}$ & $10.79_{\pm 0.32}$ \\
\textsc{Expert} & $0.185_{\pm 0.005}$ & $0.590_{\pm 0.005}$ & $13.76_{\pm 0.32}$ & $\mathbf{0.170}_{\pm 0.012}$ & $2.01_{\pm 0.26}$ & $11.25_{\pm 0.40}$ \\
\textsc{Creativity-enhanced} & $0.174_{\pm 0.006}$ & $0.583_{\pm 0.009}$ & $\mathbf{19.54}_{\pm 0.31}$ & $0.138_{\pm 0.006}$ & $\mathbf{5.06}_{\pm 0.40}$ & $\mathbf{17.15}_{\pm 0.40}$ \\
\textsc{ZS-CoT} & $0.162_{\pm 0.006}$ & $0.556_{\pm 0.011}$ & $13.62_{\pm 0.46}$ & $0.159_{\pm 0.014}$ & $1.99_{\pm 0.14}$ & $13.02_{\pm 0.43}$ \\
\textsc{Step-Back} & $0.180_{\pm 0.004}$ & $0.584_{\pm 0.009}$ & $12.79_{\pm 0.20}$ & $0.162_{\pm 0.012}$ & $1.77_{\pm 0.18}$ & $12.86_{\pm 0.20}$ \\
\textsc{DMAD} & $0.178_{\pm 0.009}$ & $0.585_{\pm 0.016}$ & $14.98_{\pm 0.77}$ & $0.168_{\pm 0.008}$ & $2.28_{\pm 0.19}$ & $13.90_{\pm 0.49}$ \\
\textsc{Gibberish} & $0.235_{\pm 0.009}$ & $0.675_{\pm 0.016}$ & $20.50_{\pm 2.58}$ & $0.178_{\pm 0.013}$ & $3.27_{\pm 0.63}$ & $15.89_{\pm 0.68}$ \\
\addlinespace
\multicolumn{7}{@{}l}{\textit{Baseline personas}} \\
\textsc{MPAQ} & $0.216_{\pm 0.007}$ & $0.651_{\pm 0.011}$ & $17.70_{\pm 0.59}$ & $0.198_{\pm 0.008}$ & $2.92_{\pm 0.38}$ & $15.46_{\pm 0.34}$ \\
\addlinespace
\multicolumn{7}{@{}l}{\textit{Individualized personas}} \\
\textsc{Random} & $0.206_{\pm 0.004}$ & $0.629_{\pm 0.007}$ & $12.52_{\pm 0.66}$ & $0.213_{\pm 0.008}$ & $1.48_{\pm 0.15}$ & $11.89_{\pm 0.48}$ \\
\textsc{Typical} & $0.206_{\pm 0.006}$ & $0.629_{\pm 0.010}$ & $12.70_{\pm 0.31}$ & $0.216_{\pm 0.013}$ & $1.58_{\pm 0.21}$ & $11.96_{\pm 0.30}$ \\
\addlinespace
\multicolumn{7}{@{}l}{\textit{Selected personas from base pool}} \\
\quad Coverage, cosine & $\mathbf{0.214}_{\pm 0.008}$ & $\mathbf{0.639}_{\pm 0.013}$ & $12.66_{\pm 0.63}$ & $0.214_{\pm 0.008}$ & $1.57_{\pm 0.16}$ & $12.03_{\pm 0.37}$ \\
\quad Coverage, Mahalanobis & $0.205_{\pm 0.008}$ & $0.624_{\pm 0.016}$ & $11.90_{\pm 0.49}$ & $0.214_{\pm 0.009}$ & $1.52_{\pm 0.15}$ & $11.46_{\pm 0.28}$ \\
\quad Dispersion, cosine & $0.209_{\pm 0.002}$ & $0.638_{\pm 0.003}$ & $\mathbf{13.11}_{\pm 0.45}$ & $0.220_{\pm 0.005}$ & $1.87_{\pm 0.33}$ & $\mathbf{12.34}_{\pm 0.27}$ \\
\quad Dispersion, Mahalanobis & $0.208_{\pm 0.003}$ & $0.634_{\pm 0.004}$ & $12.87_{\pm 0.39}$ & $\mathbf{0.224}_{\pm 0.003}$ & $\mathbf{1.95}_{\pm 0.32}$ & $12.00_{\pm 0.28}$ \\
\addlinespace
\multicolumn{7}{@{}l}{\textit{Generated personas} (expanded pool size)} \\
\quad MCMC (8,553) & $0.211_{\pm 0.009}$ & $0.639_{\pm 0.014}$ & $13.92_{\pm 1.35}$ & $\mathbf{0.223}_{\pm 0.007}$ & $2.21_{\pm 0.48}$ & $12.66_{\pm 0.55}$ \\
\quad Evolution (1,869) & $0.214_{\pm 0.008}$ & $0.646_{\pm 0.012}$ & $15.41_{\pm 0.61}$ & $0.212_{\pm 0.003}$ & $2.56_{\pm 0.41}$ & $\mathbf{13.61}_{\pm 0.17}$ \\
\quad AUT-Evolution (3,397) & $\mathbf{0.222}_{\pm 0.006}$ & $\mathbf{0.658}_{\pm 0.011}$ & $\mathbf{16.03}_{\pm 0.56}$ & $0.227_{\pm 0.007}$ & $\mathbf{3.08}_{\pm 0.34}$ & $13.58_{\pm 0.11}$ \\
\addlinespace
\multicolumn{7}{@{}l}{\textit{Selected personas} $+$ \textsc{Creativity-enhanced} prompt} \\
\quad Coverage, cosine & $0.186_{\pm 0.005}$ & $0.602_{\pm 0.009}$ & $19.84_{\pm 0.60}$ & $0.145_{\pm 0.006}$ & $5.08_{\pm 0.32}$ & $17.25_{\pm 0.12}$ \\
\quad Coverage, Mahalanobis & $0.186_{\pm 0.005}$ & $0.604_{\pm 0.008}$ & $19.94_{\pm 0.39}$ & $0.144_{\pm 0.005}$ & $5.22_{\pm 0.25}$ & $17.20_{\pm 0.15}$ \\
\quad Dispersion, cosine & $\mathbf{0.190}_{\pm 0.005}$ & $\mathbf{0.608}_{\pm 0.010}$ & $19.85_{\pm 0.46}$ & $\mathbf{0.153}_{\pm 0.010}$ & $6.17_{\pm 0.52}$ & $\mathbf{17.24}_{\pm 0.26}$ \\
\quad Dispersion, Mahalanobis & $0.184_{\pm 0.002}$ & $0.600_{\pm 0.004}$ & $\mathbf{20.18}_{\pm 0.60}$ & $0.148_{\pm 0.005}$ & $\mathbf{6.59}_{\pm 0.38}$ & $17.22_{\pm 0.39}$ \\
\addlinespace
\multicolumn{7}{@{}l}{\textit{Generated personas} $+$ \textsc{Creativity-enhanced} prompt} \\
\quad MCMC & $0.186_{\pm 0.003}$ & $0.604_{\pm 0.006}$ & $20.62_{\pm 0.24}$ & $0.147_{\pm 0.004}$ & $\mathbf{6.58}_{\pm 0.28}$ & $17.64_{\pm 0.21}$ \\
\quad Evolution & $0.190_{\pm 0.005}$ & $0.610_{\pm 0.009}$ & $20.66_{\pm 0.73}$ & $0.148_{\pm 0.011}$ & $5.98_{\pm 0.49}$ & $17.65_{\pm 0.14}$ \\
\quad AUT-Evolution & $\mathbf{0.196}_{\pm 0.006}$ & $\mathbf{0.620}_{\pm 0.011}$ & $\mathbf{21.48}_{\pm 0.75}$ & $\mathbf{0.148}_{\pm 0.003}$ & $6.92_{\pm 0.40}$ & $\mathbf{18.02}_{\pm 0.39}$ \\
\addlinespace
\multicolumn{7}{@{}l}{\textit{Human}} \\
\textsc{Stevenson} & $0.119_{\pm 0.000}$ & $0.480_{\pm 0.000}$ & $15.51_{\pm 0.00}$ & $0.144_{\pm 0.000}$ & $0.533_{\pm 0.000}$ & $14.15_{\pm 0.00}$ \\
\bottomrule
\end{tabular}%
}
\end{table}

\begin{table}[t]
\centering\small
\setlength{\tabcolsep}{5pt}
\caption{Response flexibility, reported over all seven AUT objects and, for comparability with \textsc{Stevenson}, over the three objects with human reference data ($175$ and $75$ generated uses per variant, respectively). Both measures are size-controlled: \emph{Flexibility} is the number of distinct use categories and \emph{Vendi} is the Vendi score \eqref{eq:vendi}, each divided by the number of valid uses, so that variants admitting different numbers of uses remain comparable. Cells give the mean over objects. Subscripts denote the 95\% confidence interval across 5 independent random seeds; higher is more flexible. \textsc{Stevenson} exists only for the human-reference objects, so its seven-object cells are omitted (---). The final two groups re-run our selection and generation methods on the \textsc{Creativity-enhanced} prompt, showing that persona diversification composes with a stronger prompting strategy. Bold marks the best value per column within the standard, selected and generated groups separately, with tied values bolded jointly; baseline, individualized and human rows are not bolded. Generated variants report their expanded pool size in parentheses.}
\label{tab:aut_response_flexibility}
\begin{tabular}{lcccc}
\toprule
 & \multicolumn{2}{c}{All seven objects} & \multicolumn{2}{c}{Human-reference objects} \\
\cmidrule(lr){2-3}\cmidrule(lr){4-5}
Variant & Flexibility & Vendi & Flexibility & Vendi \\
\midrule
\multicolumn{5}{@{}l}{\textit{Standard persona}} \\
\textsc{Common-Use} & $0.160_{\pm 0.011}$ & $0.086_{\pm 0.003}$ & $0.214_{\pm 0.025}$ & $0.094_{\pm 0.004}$ \\
\textsc{Alternative-Use} & $0.288_{\pm 0.022}$ & $0.096_{\pm 0.002}$ & $0.284_{\pm 0.012}$ & $0.099_{\pm 0.003}$ \\
\textsc{Expert} & $0.300_{\pm 0.026}$ & $0.094_{\pm 0.001}$ & $0.320_{\pm 0.017}$ & $0.096_{\pm 0.001}$ \\
\textsc{Creativity-enhanced} & $\mathbf{0.503}_{\pm 0.056}$ & $0.104_{\pm 0.002}$ & $\mathbf{0.467}_{\pm 0.071}$ & $0.101_{\pm 0.002}$ \\
\textsc{ZS-CoT} & $0.332_{\pm 0.025}$ & $0.094_{\pm 0.001}$ & $0.323_{\pm 0.044}$ & $0.091_{\pm 0.002}$ \\
\textsc{Step-Back} & $0.303_{\pm 0.022}$ & $0.096_{\pm 0.004}$ & $0.280_{\pm 0.032}$ & $0.097_{\pm 0.003}$ \\
\textsc{DMAD} & $0.372_{\pm 0.025}$ & $0.102_{\pm 0.003}$ & $0.350_{\pm 0.034}$ & $0.098_{\pm 0.004}$ \\
\textsc{Gibberish} & $0.384_{\pm 0.061}$ & $\mathbf{0.151}_{\pm 0.013}$ & $0.359_{\pm 0.031}$ & $\mathbf{0.158}_{\pm 0.027}$ \\
\addlinespace
\multicolumn{5}{@{}l}{\textit{Baseline personas}} \\
\textsc{MPAQ} & $0.561_{\pm 0.030}$ & $0.130_{\pm 0.006}$ & $0.491_{\pm 0.057}$ & $0.129_{\pm 0.006}$ \\
\addlinespace
\multicolumn{5}{@{}l}{\textit{Individualized personas}} \\
\textsc{Random} & $0.331_{\pm 0.024}$ & $0.107_{\pm 0.003}$ & $0.333_{\pm 0.031}$ & $0.109_{\pm 0.003}$ \\
\textsc{Typical} & $0.330_{\pm 0.016}$ & $0.105_{\pm 0.002}$ & $0.323_{\pm 0.022}$ & $0.108_{\pm 0.003}$ \\
\addlinespace
\multicolumn{5}{@{}l}{\textit{Selected personas from base pool}} \\
\quad Coverage, cosine & $0.338_{\pm 0.025}$ & $0.110_{\pm 0.003}$ & $0.323_{\pm 0.025}$ & $0.111_{\pm 0.004}$ \\
\quad Coverage, Mahalanobis & $0.337_{\pm 0.011}$ & $0.101_{\pm 0.002}$ & $0.329_{\pm 0.041}$ & $0.105_{\pm 0.005}$ \\
\quad Dispersion, cosine & $\mathbf{0.379}_{\pm 0.026}$ & $\mathbf{0.112}_{\pm 0.001}$ & $\mathbf{0.356}_{\pm 0.048}$ & $\mathbf{0.113}_{\pm 0.001}$ \\
\quad Dispersion, Mahalanobis & $0.357_{\pm 0.048}$ & $0.108_{\pm 0.002}$ & $0.353_{\pm 0.051}$ & $0.111_{\pm 0.001}$ \\
\addlinespace
\multicolumn{5}{@{}l}{\textit{Generated personas} (expanded pool size)} \\
\quad MCMC (8,553) & $0.378_{\pm 0.052}$ & $0.112_{\pm 0.001}$ & $0.348_{\pm 0.029}$ & $0.113_{\pm 0.005}$ \\
\quad Evolution (1,869) & $0.431_{\pm 0.045}$ & $0.117_{\pm 0.002}$ & $0.411_{\pm 0.102}$ & $0.117_{\pm 0.003}$ \\
\quad AUT-Evolution (3,397) & $\mathbf{0.463}_{\pm 0.052}$ & $\mathbf{0.118}_{\pm 0.002}$ & $\mathbf{0.490}_{\pm 0.045}$ & $\mathbf{0.121}_{\pm 0.004}$ \\
\addlinespace
\multicolumn{5}{@{}l}{\textit{Selected personas} $+$ \textsc{Creativity-enhanced} prompt} \\
\quad Coverage, cosine & $0.508_{\pm 0.082}$ & $0.109_{\pm 0.003}$ & $0.463_{\pm 0.101}$ & $0.107_{\pm 0.002}$ \\
\quad Coverage, Mahalanobis & $0.518_{\pm 0.066}$ & $0.108_{\pm 0.001}$ & $0.496_{\pm 0.090}$ & $0.107_{\pm 0.003}$ \\
\quad Dispersion, cosine & $0.543_{\pm 0.052}$ & $\mathbf{0.111}_{\pm 0.003}$ & $0.514_{\pm 0.094}$ & $\mathbf{0.109}_{\pm 0.003}$ \\
\quad Dispersion, Mahalanobis & $\mathbf{0.560}_{\pm 0.051}$ & $0.109_{\pm 0.003}$ & $\mathbf{0.526}_{\pm 0.117}$ & $0.106_{\pm 0.001}$ \\
\addlinespace
\multicolumn{5}{@{}l}{\textit{Generated personas} $+$ \textsc{Creativity-enhanced} prompt} \\
\quad MCMC & $0.560_{\pm 0.045}$ & $0.111_{\pm 0.002}$ & $0.526_{\pm 0.108}$ & $0.108_{\pm 0.002}$ \\
\quad Evolution & $0.580_{\pm 0.046}$ & $0.112_{\pm 0.003}$ & $0.562_{\pm 0.062}$ & $0.110_{\pm 0.004}$ \\
\quad AUT-Evolution & $\mathbf{0.612}_{\pm 0.026}$ & $\mathbf{0.114}_{\pm 0.003}$ & $\mathbf{0.606}_{\pm 0.123}$ & $\mathbf{0.113}_{\pm 0.003}$ \\
\addlinespace
\multicolumn{5}{@{}l}{\textit{Human}} \\
\textsc{Stevenson} & --- & --- & $0.739_{\pm 0.043}$ & $0.104_{\pm 0.001}$ \\
\bottomrule
\end{tabular}
\end{table}

\begin{table}[t]
\centering\small
\setlength{\tabcolsep}{5pt}
\caption{Response surprise, reported over all seven AUT objects and, for comparability with \textsc{Stevenson}, over the three objects with human reference data ($175$ and $75$ generated uses per variant, respectively). Both measures are size-controlled by the number of valid uses: \emph{Surprise} is the proportion of uses judged surprising relative to the common-use reference set, and \emph{Wtd.\ Vendi} is the novelty-weighted Vendi score \eqref{eq:vendi} under the same normalization. \textsc{Common-Use} defines the reference set against which surprise is measured, so its surprise is zero by construction and is omitted (---); \textsc{Stevenson} exists only for the human-reference objects. Cells give the mean over objects. Subscripts denote the 95\% confidence interval across 5 independent random seeds; higher is more surprising. The final two groups re-run our selection and generation methods on the \textsc{Creativity-enhanced} prompt, showing that persona diversification composes with a stronger prompting strategy. Bolding conventions are as in Table~\ref{tab:aut_response_flexibility}.}
\label{tab:aut_response_surprise}
\begin{tabular}{lcccc}
\toprule
 & \multicolumn{2}{c}{All seven objects} & \multicolumn{2}{c}{Human-reference objects} \\
\cmidrule(lr){2-3}\cmidrule(lr){4-5}
Variant & Surprise & Wtd.\ Vendi & Surprise & Wtd.\ Vendi \\
\midrule
\multicolumn{5}{@{}l}{\textit{Standard persona}} \\
\textsc{Common-Use} & --- & --- & --- & --- \\
\textsc{Alternative-Use} & $0.506_{\pm 0.062}$ & $0.094_{\pm 0.002}$ & $0.397_{\pm 0.087}$ & $0.093_{\pm 0.004}$ \\
\textsc{Expert} & $0.415_{\pm 0.020}$ & $0.093_{\pm 0.002}$ & $0.317_{\pm 0.014}$ & $0.098_{\pm 0.002}$ \\
\textsc{Creativity-enhanced} & $\mathbf{0.837}_{\pm 0.067}$ & $0.105_{\pm 0.002}$ & $\mathbf{0.717}_{\pm 0.131}$ & $0.102_{\pm 0.002}$ \\
\textsc{ZS-CoT} & $0.556_{\pm 0.064}$ & $0.094_{\pm 0.002}$ & $0.390_{\pm 0.063}$ & $0.089_{\pm 0.003}$ \\
\textsc{Step-Back} & $0.558_{\pm 0.068}$ & $0.095_{\pm 0.005}$ & $0.413_{\pm 0.087}$ & $0.095_{\pm 0.005}$ \\
\textsc{DMAD} & $0.567_{\pm 0.057}$ & $0.103_{\pm 0.004}$ & $0.382_{\pm 0.131}$ & $0.100_{\pm 0.005}$ \\
\textsc{Gibberish} & $0.525_{\pm 0.058}$ & $\mathbf{0.150}_{\pm 0.012}$ & $0.398_{\pm 0.026}$ & $\mathbf{0.156}_{\pm 0.025}$ \\
\addlinespace
\multicolumn{5}{@{}l}{\textit{Baseline personas}} \\
\textsc{MPAQ} & $0.686_{\pm 0.055}$ & $0.132_{\pm 0.006}$ & $0.535_{\pm 0.090}$ & $0.131_{\pm 0.006}$ \\
\addlinespace
\multicolumn{5}{@{}l}{\textit{Individualized personas}} \\
\textsc{Random} & $0.588_{\pm 0.051}$ & $0.107_{\pm 0.003}$ & $0.467_{\pm 0.061}$ & $0.109_{\pm 0.004}$ \\
\textsc{Typical} & $0.572_{\pm 0.051}$ & $0.105_{\pm 0.003}$ & $0.454_{\pm 0.055}$ & $0.107_{\pm 0.003}$ \\
\addlinespace
\multicolumn{5}{@{}l}{\textit{Selected personas from base pool}} \\
\quad Coverage, cosine & $0.574_{\pm 0.057}$ & $0.110_{\pm 0.003}$ & $0.445_{\pm 0.097}$ & $\mathbf{0.111}_{\pm 0.005}$ \\
\quad Coverage, Mahalanobis & $0.548_{\pm 0.048}$ & $0.101_{\pm 0.003}$ & $0.385_{\pm 0.029}$ & $0.104_{\pm 0.005}$ \\
\quad Dispersion, cosine & $\mathbf{0.588}_{\pm 0.081}$ & $\mathbf{0.112}_{\pm 0.001}$ & $\mathbf{0.449}_{\pm 0.086}$ & $\mathbf{0.111}_{\pm 0.001}$ \\
\quad Dispersion, Mahalanobis & $0.567_{\pm 0.075}$ & $0.108_{\pm 0.002}$ & $0.406_{\pm 0.069}$ & $0.110_{\pm 0.003}$ \\
\addlinespace
\multicolumn{5}{@{}l}{\textit{Generated personas} (expanded pool size)} \\
\quad MCMC (8,553) & $0.566_{\pm 0.078}$ & $0.113_{\pm 0.002}$ & $0.411_{\pm 0.078}$ & $0.113_{\pm 0.007}$ \\
\quad Evolution (1,869) & $\mathbf{0.637}_{\pm 0.064}$ & $\mathbf{0.118}_{\pm 0.002}$ & $0.514_{\pm 0.125}$ & $0.118_{\pm 0.004}$ \\
\quad AUT-Evolution (3,397) & $\mathbf{0.637}_{\pm 0.068}$ & $\mathbf{0.118}_{\pm 0.002}$ & $\mathbf{0.522}_{\pm 0.066}$ & $\mathbf{0.120}_{\pm 0.005}$ \\
\addlinespace
\multicolumn{5}{@{}l}{\textit{Selected personas} $+$ \textsc{Creativity-enhanced} prompt} \\
\quad Coverage, cosine & $\mathbf{0.843}_{\pm 0.061}$ & $0.110_{\pm 0.003}$ & $0.739_{\pm 0.126}$ & $0.108_{\pm 0.002}$ \\
\quad Coverage, Mahalanobis & $0.815_{\pm 0.074}$ & $0.109_{\pm 0.001}$ & $0.707_{\pm 0.130}$ & $0.108_{\pm 0.002}$ \\
\quad Dispersion, cosine & $0.833_{\pm 0.043}$ & $\mathbf{0.113}_{\pm 0.003}$ & $0.763_{\pm 0.071}$ & $\mathbf{0.110}_{\pm 0.003}$ \\
\quad Dispersion, Mahalanobis & $0.839_{\pm 0.032}$ & $0.111_{\pm 0.003}$ & $\mathbf{0.781}_{\pm 0.079}$ & $0.108_{\pm 0.001}$ \\
\addlinespace
\multicolumn{5}{@{}l}{\textit{Generated personas} $+$ \textsc{Creativity-enhanced} prompt} \\
\quad MCMC & $0.847_{\pm 0.021}$ & $0.112_{\pm 0.002}$ & $0.792_{\pm 0.067}$ & $0.109_{\pm 0.002}$ \\
\quad Evolution & $0.845_{\pm 0.043}$ & $0.113_{\pm 0.003}$ & $0.781_{\pm 0.079}$ & $0.111_{\pm 0.004}$ \\
\quad AUT-Evolution & $\mathbf{0.872}_{\pm 0.039}$ & $\mathbf{0.115}_{\pm 0.003}$ & $\mathbf{0.822}_{\pm 0.089}$ & $\mathbf{0.115}_{\pm 0.003}$ \\
\addlinespace
\multicolumn{5}{@{}l}{\textit{Human}} \\
\textsc{Stevenson} & --- & --- & $0.741_{\pm 0.065}$ & $0.106_{\pm 0.001}$ \\
\bottomrule
\end{tabular}
\vspace{-5mm}
\end{table}

\begin{table}[t]
\centering\small
\setlength{\tabcolsep}{6pt}
\caption{LLM-judged creativity on the AUT across all seven objects. A Qwen3.6-27B judge, independent of the Gemma-4-31B generator, rates every use for Originality, Surprise and Utility on a $1$--$5$ scale following \citet{stevenson2022putting}, and for holistic creativity following \citet{goes2023pushing}; we report the rank-normalized form of that holistic judgment, which correlates with its raw form at $r=0.996$. Cells give the mean over judged uses (at most $175$ per variant). The judged set is every use not rejected outright by the validity gate of \Cref{sec:aut}, admitted \emph{and} uncertain uses, so it is slightly larger than the admitted set of Table~\ref{tab:aut_response_validity}. Subscripts denote the 95\% confidence interval across 5 independent random seeds. The final two groups re-run our selection and generation methods on the \textsc{Creativity-enhanced} prompt, showing that persona diversification composes with a stronger prompting strategy. Originality, Surprise and Creativity are target metrics, bolded per column within the standard, selected and generated groups separately (and within each creativity-enhanced group), with tied values bolded jointly. \emph{Utility is a control}: conventional uses are the most useful by construction, so it necessarily peaks at \textsc{Common-Use} and is never bolded. Baseline, individualized and human rows are not bolded. Generated variants report their expanded pool size in parentheses.}
\label{tab:aut_llmjudge_full}
\begin{tabular}{lcccc}
\toprule
 & \multicolumn{3}{c}{\citet{stevenson2022putting}} & \citet{goes2023pushing} \\
\cmidrule(lr){2-4}\cmidrule(lr){5-5}
Variant & Originality & Surprise & Utility & Creativity (rank) \\
\midrule
\multicolumn{5}{@{}l}{\textit{Standard persona}} \\
\textsc{Common-Use} & $1.68_{\pm 0.03}$ & $1.34_{\pm 0.01}$ & $4.86_{\pm 0.04}$ & $2.02_{\pm 0.08}$ \\
\textsc{Alternative-Use} & $2.63_{\pm 0.05}$ & $2.23_{\pm 0.06}$ & $4.30_{\pm 0.05}$ & $2.72_{\pm 0.09}$ \\
\textsc{Expert} & $2.28_{\pm 0.05}$ & $2.01_{\pm 0.05}$ & $4.39_{\pm 0.05}$ & $2.71_{\pm 0.05}$ \\
\textsc{Creativity-enhanced} & $\mathbf{3.17}_{\pm 0.06}$ & $\mathbf{2.66}_{\pm 0.06}$ & $3.38_{\pm 0.07}$ & $\mathbf{3.19}_{\pm 0.09}$ \\
\textsc{ZS-CoT} & $2.60_{\pm 0.09}$ & $2.11_{\pm 0.07}$ & $4.27_{\pm 0.07}$ & $2.85_{\pm 0.06}$ \\
\textsc{Step-Back} & $2.74_{\pm 0.02}$ & $2.30_{\pm 0.07}$ & $4.30_{\pm 0.04}$ & $2.95_{\pm 0.06}$ \\
\textsc{DMAD} & $2.65_{\pm 0.11}$ & $2.17_{\pm 0.09}$ & $4.17_{\pm 0.11}$ & $2.89_{\pm 0.07}$ \\
\textsc{Gibberish} & $2.85_{\pm 0.06}$ & $2.33_{\pm 0.07}$ & $4.02_{\pm 0.10}$ & $2.57_{\pm 0.09}$ \\
\addlinespace
\multicolumn{5}{@{}l}{\textit{Baseline personas}} \\
\textsc{MPAQ} & $3.29_{\pm 0.04}$ & $2.65_{\pm 0.06}$ & $3.59_{\pm 0.04}$ & $3.05_{\pm 0.07}$ \\
\addlinespace
\multicolumn{5}{@{}l}{\textit{Individualized personas}} \\
\textsc{Random} & $2.78_{\pm 0.03}$ & $2.36_{\pm 0.03}$ & $4.28_{\pm 0.02}$ & $2.96_{\pm 0.04}$ \\
\textsc{Typical} & $2.75_{\pm 0.04}$ & $2.36_{\pm 0.04}$ & $4.24_{\pm 0.05}$ & $2.99_{\pm 0.07}$ \\
\addlinespace
\multicolumn{5}{@{}l}{\textit{Selected personas from base pool}} \\
\quad Coverage, cosine & $2.74_{\pm 0.04}$ & $\mathbf{2.40}_{\pm 0.02}$ & $4.30_{\pm 0.02}$ & $2.91_{\pm 0.06}$ \\
\quad Coverage, Mahalanobis & $2.66_{\pm 0.02}$ & $2.29_{\pm 0.05}$ & $4.27_{\pm 0.04}$ & $2.86_{\pm 0.08}$ \\
\quad Dispersion, cosine & $\mathbf{2.78}_{\pm 0.06}$ & $\mathbf{2.40}_{\pm 0.04}$ & $4.27_{\pm 0.09}$ & $2.91_{\pm 0.06}$ \\
\quad Dispersion, Mahalanobis & $2.74_{\pm 0.08}$ & $2.36_{\pm 0.06}$ & $4.23_{\pm 0.09}$ & $\mathbf{2.92}_{\pm 0.04}$ \\
\addlinespace
\multicolumn{5}{@{}l}{\textit{Generated personas} (expanded pool size)} \\
\quad MCMC (8,553) & $2.74_{\pm 0.05}$ & $2.40_{\pm 0.07}$ & $4.28_{\pm 0.06}$ & $2.91_{\pm 0.04}$ \\
\quad Evolution (1,869) & $2.97_{\pm 0.07}$ & $2.54_{\pm 0.06}$ & $4.08_{\pm 0.06}$ & $\mathbf{3.10}_{\pm 0.03}$ \\
\quad AUT-Evolution (3,397) & $\mathbf{2.99}_{\pm 0.09}$ & $\mathbf{2.63}_{\pm 0.09}$ & $4.04_{\pm 0.10}$ & $2.99_{\pm 0.06}$ \\
\addlinespace
\multicolumn{5}{@{}l}{\textit{Selected personas} $+$ \textsc{Creativity-enhanced} prompt} \\
\quad Coverage, cosine & $\mathbf{3.22}_{\pm 0.07}$ & $\mathbf{2.75}_{\pm 0.04}$ & $3.43_{\pm 0.10}$ & $3.25_{\pm 0.04}$ \\
\quad Coverage, Mahalanobis & $3.15_{\pm 0.05}$ & $2.71_{\pm 0.05}$ & $3.58_{\pm 0.10}$ & $3.24_{\pm 0.07}$ \\
\quad Dispersion, cosine & $3.20_{\pm 0.09}$ & $2.72_{\pm 0.12}$ & $3.54_{\pm 0.11}$ & $3.25_{\pm 0.07}$ \\
\quad Dispersion, Mahalanobis & $3.19_{\pm 0.12}$ & $2.74_{\pm 0.10}$ & $3.44_{\pm 0.09}$ & $\mathbf{3.27}_{\pm 0.06}$ \\
\addlinespace
\multicolumn{5}{@{}l}{\textit{Generated personas} $+$ \textsc{Creativity-enhanced} prompt} \\
\quad MCMC & $3.24_{\pm 0.15}$ & $2.76_{\pm 0.10}$ & $3.39_{\pm 0.10}$ & $3.31_{\pm 0.05}$ \\
\quad Evolution & $3.31_{\pm 0.12}$ & $2.87_{\pm 0.11}$ & $3.38_{\pm 0.16}$ & $\mathbf{3.39}_{\pm 0.07}$ \\
\quad AUT-Evolution & $\mathbf{3.38}_{\pm 0.13}$ & $\mathbf{2.94}_{\pm 0.13}$ & $3.33_{\pm 0.14}$ & $3.32_{\pm 0.06}$ \\
\addlinespace
\multicolumn{5}{@{}l}{\textit{Human}} \\
\textsc{Stevenson} & $2.78_{\pm 0.00}$ & $1.77_{\pm 0.00}$ & $3.49_{\pm 0.03}$ & $2.07_{\pm 0.03}$ \\
\bottomrule
\end{tabular}%
\end{table}

\begin{table}[t]
\centering\small
\setlength{\tabcolsep}{6pt}
\caption{LLM-judged creativity restricted to the three objects with human reference data (book, fork, tin can), so that all variants are directly comparable to \textsc{Stevenson} (at most $75$ judged uses per variant). Scales, judge, bolding and control conventions are as in Table~\ref{tab:aut_llmjudge_full}.}
\label{tab:aut_llmjudge_human}
\begin{tabular}{lcccc}
\toprule
 & \multicolumn{3}{c}{\citet{stevenson2022putting}} & \citet{goes2023pushing} \\
\cmidrule(lr){2-4}\cmidrule(lr){5-5}
Variant & Originality & Surprise & Utility & Creativity (rank) \\
\midrule
\multicolumn{5}{@{}l}{\textit{Standard persona}} \\
\textsc{Common-Use} & $1.84_{\pm 0.04}$ & $1.45_{\pm 0.03}$ & $4.84_{\pm 0.03}$ & $2.29_{\pm 0.12}$ \\
\textsc{Alternative-Use} & $2.37_{\pm 0.09}$ & $2.07_{\pm 0.05}$ & $4.39_{\pm 0.08}$ & $2.78_{\pm 0.16}$ \\
\textsc{Expert} & $2.18_{\pm 0.04}$ & $2.00_{\pm 0.06}$ & $4.40_{\pm 0.08}$ & $2.83_{\pm 0.07}$ \\
\textsc{Creativity-enhanced} & $\mathbf{3.05}_{\pm 0.18}$ & $\mathbf{2.66}_{\pm 0.06}$ & $3.52_{\pm 0.11}$ & $\mathbf{3.19}_{\pm 0.09}$ \\
\textsc{ZS-CoT} & $2.37_{\pm 0.11}$ & $1.96_{\pm 0.12}$ & $4.49_{\pm 0.10}$ & $2.88_{\pm 0.12}$ \\
\textsc{Step-Back} & $2.52_{\pm 0.07}$ & $2.16_{\pm 0.07}$ & $4.31_{\pm 0.09}$ & $2.95_{\pm 0.07}$ \\
\textsc{DMAD} & $2.39_{\pm 0.11}$ & $2.06_{\pm 0.08}$ & $4.38_{\pm 0.08}$ & $2.90_{\pm 0.12}$ \\
\textsc{Gibberish} & $2.59_{\pm 0.07}$ & $2.13_{\pm 0.09}$ & $4.19_{\pm 0.15}$ & $2.37_{\pm 0.22}$ \\
\addlinespace
\multicolumn{5}{@{}l}{\textit{Baseline personas}} \\
\textsc{MPAQ} & $3.05_{\pm 0.13}$ & $2.44_{\pm 0.08}$ & $3.78_{\pm 0.07}$ & $3.00_{\pm 0.10}$ \\
\addlinespace
\multicolumn{5}{@{}l}{\textit{Individualized personas}} \\
\textsc{Random} & $2.52_{\pm 0.05}$ & $2.18_{\pm 0.04}$ & $4.35_{\pm 0.03}$ & $3.03_{\pm 0.02}$ \\
\textsc{Typical} & $2.56_{\pm 0.11}$ & $2.28_{\pm 0.05}$ & $4.24_{\pm 0.12}$ & $3.07_{\pm 0.07}$ \\
\addlinespace
\multicolumn{5}{@{}l}{\textit{Selected personas from base pool}} \\
\quad Coverage, cosine & $2.52_{\pm 0.07}$ & $\mathbf{2.27}_{\pm 0.06}$ & $4.35_{\pm 0.04}$ & $\mathbf{2.98}_{\pm 0.07}$ \\
\quad Coverage, Mahalanobis & $2.44_{\pm 0.05}$ & $2.17_{\pm 0.12}$ & $4.34_{\pm 0.08}$ & $2.93_{\pm 0.08}$ \\
\quad Dispersion, cosine & $\mathbf{2.56}_{\pm 0.08}$ & $2.21_{\pm 0.05}$ & $4.35_{\pm 0.08}$ & $2.95_{\pm 0.06}$ \\
\quad Dispersion, Mahalanobis & $2.53_{\pm 0.07}$ & $2.16_{\pm 0.07}$ & $4.33_{\pm 0.05}$ & $2.94_{\pm 0.04}$ \\
\addlinespace
\multicolumn{5}{@{}l}{\textit{Generated personas} (expanded pool size)} \\
\quad MCMC (8,553) & $2.53_{\pm 0.08}$ & $2.20_{\pm 0.03}$ & $4.36_{\pm 0.06}$ & $2.93_{\pm 0.07}$ \\
\quad Evolution (1,869) & $2.76_{\pm 0.06}$ & $2.38_{\pm 0.08}$ & $4.22_{\pm 0.10}$ & $\mathbf{3.11}_{\pm 0.02}$ \\
\quad AUT-Evolution (3,397) & $\mathbf{2.88}_{\pm 0.09}$ & $\mathbf{2.49}_{\pm 0.09}$ & $4.10_{\pm 0.11}$ & $3.02_{\pm 0.05}$ \\
\addlinespace
\multicolumn{5}{@{}l}{\textit{Selected personas} $+$ \textsc{Creativity-enhanced} prompt} \\
\quad Coverage, cosine & $\mathbf{3.11}_{\pm 0.08}$ & $2.69_{\pm 0.08}$ & $3.59_{\pm 0.13}$ & $3.27_{\pm 0.04}$ \\
\quad Coverage, Mahalanobis & $3.10_{\pm 0.11}$ & $\mathbf{2.73}_{\pm 0.13}$ & $3.68_{\pm 0.12}$ & $3.26_{\pm 0.13}$ \\
\quad Dispersion, cosine & $3.10_{\pm 0.16}$ & $2.71_{\pm 0.12}$ & $3.63_{\pm 0.19}$ & $3.30_{\pm 0.06}$ \\
\quad Dispersion, Mahalanobis & $3.09_{\pm 0.15}$ & $2.72_{\pm 0.16}$ & $3.56_{\pm 0.12}$ & $\mathbf{3.31}_{\pm 0.06}$ \\
\addlinespace
\multicolumn{5}{@{}l}{\textit{Generated personas} $+$ \textsc{Creativity-enhanced} prompt} \\
\quad MCMC & $3.15_{\pm 0.19}$ & $2.75_{\pm 0.10}$ & $3.50_{\pm 0.18}$ & $3.33_{\pm 0.13}$ \\
\quad Evolution & $3.21_{\pm 0.24}$ & $2.81_{\pm 0.18}$ & $3.50_{\pm 0.28}$ & $\mathbf{3.41}_{\pm 0.16}$ \\
\quad AUT-Evolution & $\mathbf{3.27}_{\pm 0.22}$ & $\mathbf{2.87}_{\pm 0.17}$ & $3.44_{\pm 0.20}$ & $3.31_{\pm 0.12}$ \\
\addlinespace
\multicolumn{5}{@{}l}{\textit{Human}} \\
\textsc{Stevenson} & $2.78_{\pm 0.00}$ & $1.77_{\pm 0.00}$ & $3.49_{\pm 0.03}$ & $2.07_{\pm 0.03}$ \\
\bottomrule
\end{tabular}%
\end{table}


\clearpage
\subsection{Infinity-Chat Benchmark}


\begin{table}[htbp]
\centering\small
\caption{Infinity-Chat 100 response validity and quality \citep{jiang2026artificial}. Each
condition contributes $250$ responses, $50$ per open-ended item for prompt conditions and ten
per persona per item for persona conditions over $k=5$ personas and five items.
\emph{Valid} is the proportion of responses admitted by the validity gate; \emph{Quality} is the
mean judge rating on a $1$--$5$ scale. Subscripts denote the 95\% confidence interval across 5 independent random seeds. The final two groups re-run our selection and generation methods on the \textsc{DMAD} prompt. Both are \emph{controls} rather than targets, validity near $1$ is expected, and
quality is the effectiveness dimension against which diversity gains are traded, so no value
is bolded. Generated variants report their expanded pool size in parentheses.}
\label{tab:ic_validity_quality}
\begin{tabular}{@{}lcc@{}}
\toprule
Variant & Valid & Quality \\
\midrule
\multicolumn{3}{@{}l}{\textit{Standard persona}} \\
\textsc{Open-Ended Query} & $0.950_{\pm 0.015}$ & $4.84_{\pm 0.02}$ \\
\textsc{ZS-CoT} & $1.000_{\pm 0.000}$ & $4.99_{\pm 0.01}$ \\
\textsc{Step-Back} & $1.000_{\pm 0.000}$ & $4.98_{\pm 0.02}$ \\
\textsc{DMAD} & $1.000_{\pm 0.000}$ & $4.96_{\pm 0.04}$ \\
\textsc{Gibberish} & $0.123_{\pm 0.013}$ & $1.01_{\pm 0.01}$ \\
\addlinespace
\multicolumn{3}{@{}l}{\textit{Baseline personas}} \\
\textsc{MPAQ} & $0.994_{\pm 0.008}$ & $3.69_{\pm 0.05}$ \\
\addlinespace
\multicolumn{3}{@{}l}{\textit{Individualized personas}} \\
\textsc{Random} & $1.000_{\pm 0.000}$ & $4.43_{\pm 0.03}$ \\
\textsc{Typical} & $1.000_{\pm 0.000}$ & $4.44_{\pm 0.03}$ \\
\addlinespace
\multicolumn{3}{@{}l}{\textit{Selected personas from base pool}} \\
\quad Coverage, cosine/L2 & $0.999_{\pm 0.002}$ & $4.32_{\pm 0.03}$ \\
\quad Coverage, Mahalanobis & $0.996_{\pm 0.000}$ & $4.38_{\pm 0.04}$ \\
\quad Dispersion, cosine/L2 & $0.970_{\pm 0.011}$ & $4.13_{\pm 0.08}$ \\
\quad Dispersion, Mahalanobis & $0.986_{\pm 0.007}$ & $4.13_{\pm 0.06}$ \\
\addlinespace
\multicolumn{3}{@{}l}{\textit{Generated personas} (expanded pool size)} \\
\quad MCMC (8,553) & $0.991_{\pm 0.007}$ & $4.23_{\pm 0.07}$ \\
\quad Evolution (1,869) & $0.992_{\pm 0.005}$ & $3.48_{\pm 0.05}$ \\
\quad IC-Evolution (2,819) & $0.974_{\pm 0.010}$ & $3.96_{\pm 0.06}$ \\
\addlinespace
\multicolumn{3}{@{}l}{\textit{Selected personas} $+$ \textsc{DMAD} prompt} \\
\quad Coverage, cosine/L2 & $1.000_{\pm 0.000}$ & $4.95_{\pm 0.01}$ \\
\quad Coverage, Mahalanobis & $1.000_{\pm 0.000}$ & $4.90_{\pm 0.03}$ \\
\quad Dispersion, cosine/L2 & $1.000_{\pm 0.000}$ & $4.95_{\pm 0.01}$ \\
\quad Dispersion, Mahalanobis & $1.000_{\pm 0.000}$ & $4.94_{\pm 0.02}$ \\
\addlinespace
\multicolumn{3}{@{}l}{\textit{Generated personas} $+$ \textsc{DMAD} prompt} \\
\quad MCMC & $1.000_{\pm 0.000}$ & $4.94_{\pm 0.01}$ \\
\quad Evolution & $1.000_{\pm 0.000}$ & $4.86_{\pm 0.05}$ \\
\quad IC-Evolution & $1.000_{\pm 0.000}$ & $4.92_{\pm 0.01}$ \\\bottomrule
\end{tabular}
\end{table}

 

\begin{table}[t]
\centering\small
\caption{Infinity-Chat 100 response homogeneity \citep{jiang2026artificial}.
\emph{Similarity} is the mean pairwise embedding similarity among a condition's responses to
the same item, averaged over items; \emph{Separation} is the gain in mean between-persona
distance over within-persona distance, and is undefined for conditions without personas (---);
\emph{Trigram} is mean trigram Jaccard overlap. Arrows give the favourable direction;
\textbf{bold} marks the best per column within the standard, selected and generated groups, and within each \textsc{DMAD} group.
$^{\dagger}$\textsc{Gibberish} is excluded from bolding because $90\%$ of its responses fail the
validity gate, leaving statistics computed on a tenth of the sample. Subscripts denote the 95\% confidence interval across 5 independent random seeds. The final two groups re-run our selection and generation methods on the \textsc{DMAD} prompt. Generated variants report
their expanded pool size in parentheses.}
\label{tab:ic_homogeneity}
\begin{tabular}{@{}lccc@{}}
\toprule
Variant & Similarity $\downarrow$ & Separation $\uparrow$ & Trigram $\downarrow$ \\
\midrule
\multicolumn{4}{@{}l}{\textit{Standard persona}} \\
\textsc{Open-Ended Query} & $0.946_{\pm 0.002}$ & --- & $0.247_{\pm 0.004}$ \\
\textsc{ZS-CoT} & $0.949_{\pm 0.003}$ & --- & $0.096_{\pm 0.012}$ \\
\textsc{Step-Back} & $0.962_{\pm 0.003}$ & --- & $0.276_{\pm 0.007}$ \\
\textsc{DMAD} & $\mathbf{0.929}_{\pm 0.002}$ & --- & $\mathbf{0.055}_{\pm 0.004}$ \\
\textsc{Gibberish}$^{\dagger}$ & $0.894_{\pm 0.012}$ & $0.045_{\pm 0.008}$ & $0.055_{\pm 0.022}$ \\
\addlinespace
\multicolumn{4}{@{}l}{\textit{Baseline personas}} \\
\textsc{MPAQ} & $0.831_{\pm 0.002}$ & $0.123_{\pm 0.001}$ & $0.029_{\pm 0.001}$ \\
\addlinespace
\multicolumn{4}{@{}l}{\textit{Individualized personas}} \\
\textsc{Random} & $0.864_{\pm 0.005}$ & $0.068_{\pm 0.003}$ & $0.039_{\pm 0.004}$ \\
\textsc{Typical} & $0.874_{\pm 0.004}$ & $0.083_{\pm 0.002}$ & $0.055_{\pm 0.003}$ \\
\addlinespace
\multicolumn{4}{@{}l}{\textit{Selected personas from base pool}} \\
\quad Coverage, cosine/L2 & $0.885_{\pm 0.003}$ & $0.055_{\pm 0.003}$ & $0.040_{\pm 0.002}$ \\
\quad Coverage, Mahalanobis & $0.866_{\pm 0.004}$ & $0.080_{\pm 0.003}$ & $0.038_{\pm 0.004}$ \\
\quad Dispersion, cosine/L2 & $\mathbf{0.822}_{\pm 0.005}$ & $\mathbf{0.094}_{\pm 0.009}$ & $\mathbf{0.025}_{\pm 0.003}$ \\
\quad Dispersion, Mahalanobis & $0.840_{\pm 0.004}$ & $0.085_{\pm 0.006}$ & $0.029_{\pm 0.006}$ \\
\addlinespace
\multicolumn{4}{@{}l}{\textit{Generated personas} (expanded pool size)} \\
\quad MCMC (8,553) & $0.839_{\pm 0.004}$ & $0.088_{\pm 0.006}$ & $0.027_{\pm 0.005}$ \\
\quad Evolution (1,869) & $\mathbf{0.798}_{\pm 0.006}$ & $\mathbf{0.126}_{\pm 0.006}$ & $\mathbf{0.015}_{\pm 0.002}$ \\
\quad IC-Evolution (2,819) & $0.828_{\pm 0.006}$ & $0.096_{\pm 0.004}$ & $0.023_{\pm 0.003}$ \\
\addlinespace
\multicolumn{4}{@{}l}{\textit{Selected personas} $+$ \textsc{DMAD} prompt} \\
\quad Coverage, cosine/L2 & $0.927_{\pm 0.001}$ & $0.004_{\pm 0.001}$ & $0.050_{\pm 0.009}$ \\
\quad Coverage, Mahalanobis & $\mathbf{0.920}_{\pm 0.005}$ & $\mathbf{0.012}_{\pm 0.002}$ & $0.050_{\pm 0.013}$ \\
\quad Dispersion, cosine/L2 & $0.927_{\pm 0.001}$ & $0.006_{\pm 0.001}$ & $\mathbf{0.041}_{\pm 0.004}$ \\
\quad Dispersion, Mahalanobis & $0.925_{\pm 0.003}$ & $0.006_{\pm 0.002}$ & $0.049_{\pm 0.010}$ \\
\addlinespace
\multicolumn{4}{@{}l}{\textit{Generated personas} $+$ \textsc{DMAD} prompt} \\
\quad MCMC & $0.923_{\pm 0.002}$ & $0.007_{\pm 0.001}$ & $0.048_{\pm 0.011}$ \\
\quad Evolution & $\mathbf{0.900}_{\pm 0.001}$ & $\mathbf{0.021}_{\pm 0.003}$ & $\mathbf{0.030}_{\pm 0.004}$ \\
\quad IC-Evolution & $0.913_{\pm 0.002}$ & $0.013_{\pm 0.001}$ & $0.031_{\pm 0.006}$ \\\bottomrule
\end{tabular}
\end{table}

\begin{table}[t]
\centering\small
\setlength{\tabcolsep}{5pt}
\caption{Infinity-Chat 100 response fluency and flexibility \citep{jiang2026artificial}.
\emph{Tokens} is the number of word tokens a condition produced, given because the three
lexical measures are corpus statistics and are biased downward by corpus size.
\emph{Fluency}: vocabulary entropy $H(\mathcal W)$ in nats and top-$10$ concentration
$C_{10}(\mathcal W)$, the share of occurrences taken by the ten most frequent words.
\emph{Flexibility}: type--token ratio $TTR(\mathcal W)$ and the Vendi score \eqref{eq:vendi} of
the response embeddings, the effective number of distinct concepts; the latter is computed over
response sets of equal size and so is unaffected by the token counts. Arrows give the
favourable direction; \textbf{bold} marks the best per column within the standard, selected and
generated groups, and within each \textsc{DMAD} group. Token counts are not bolded. $^{\dagger}$\textsc{Gibberish} is excluded from
bolding because $90\%$ of its responses fail the validity gate. Subscripts denote the 95\% confidence interval across 5 independent random seeds. The final two groups re-run our selection and generation methods on the \textsc{DMAD} prompt. Generated variants report their
expanded pool size in parentheses.}
\label{tab:ic_fluency_flexibility}
\begin{tabular}{@{}lccccc@{}}
\toprule
& & \multicolumn{2}{c}{Fluency} & \multicolumn{2}{c}{Flexibility} \\
\cmidrule(lr){3-4}\cmidrule(lr){5-6}
Variant & Tokens & Entropy $\uparrow$ & Top-$10$ $\downarrow$ & TTR $\uparrow$ & Vendi $\uparrow$ \\
\midrule
\multicolumn{6}{@{}l}{\textit{Standard persona}} \\
\textsc{Open-Ended Query} & $31{,}170$ & $\mathbf{5.98}_{\pm 0.02}$ & $0.244_{\pm 0.002}$ & $0.068_{\pm 0.001}$ & $1.39_{\pm 0.02}$ \\
\textsc{ZS-CoT} & $12{,}536$ & $5.80_{\pm 0.03}$ & $\mathbf{0.232}_{\pm 0.004}$ & $0.099_{\pm 0.003}$ & $1.40_{\pm 0.03}$ \\
\textsc{Step-Back} & $11{,}900$ & $5.69_{\pm 0.02}$ & $0.241_{\pm 0.004}$ & $0.093_{\pm 0.001}$ & $1.28_{\pm 0.02}$ \\
\textsc{DMAD} & $10{,}443$ & $5.85_{\pm 0.02}$ & $0.251_{\pm 0.002}$ & $\mathbf{0.129}_{\pm 0.003}$ & $\mathbf{1.56}_{\pm 0.01}$ \\
\textsc{Gibberish}$^{\dagger}$ & $1{,}717$ & $5.48_{\pm 0.14}$ & $0.269_{\pm 0.018}$ & $0.342_{\pm 0.047}$ & $1.63_{\pm 0.19}$ \\
\addlinespace
\multicolumn{6}{@{}l}{\textit{Baseline personas}} \\
\textsc{MPAQ} & $21{,}886$ & $6.38_{\pm 0.01}$ & $0.251_{\pm 0.003}$ & $0.142_{\pm 0.001}$ & $2.40_{\pm 0.02}$ \\
\addlinespace
\multicolumn{6}{@{}l}{\textit{Individualized personas}} \\
\textsc{Random} & $23{,}449$ & $6.27_{\pm 0.02}$ & $0.259_{\pm 0.004}$ & $0.123_{\pm 0.003}$ & $2.17_{\pm 0.05}$ \\
\textsc{Typical} & $20{,}305$ & $6.26_{\pm 0.02}$ & $0.246_{\pm 0.003}$ & $0.122_{\pm 0.003}$ & $1.98_{\pm 0.04}$ \\
\addlinespace
\multicolumn{6}{@{}l}{\textit{Selected personas from base pool}} \\
\quad Coverage, cosine/L2 & $22{,}083$ & $6.22_{\pm 0.02}$ & $0.256_{\pm 0.003}$ & $0.118_{\pm 0.002}$ & $1.97_{\pm 0.03}$ \\
\quad Coverage, Mahalanobis & $22{,}049$ & $\mathbf{6.31}_{\pm 0.01}$ & $\mathbf{0.246}_{\pm 0.003}$ & $0.122_{\pm 0.002}$ & $2.07_{\pm 0.05}$ \\
\quad Dispersion, cosine/L2 & $24{,}397$ & $6.26_{\pm 0.02}$ & $0.264_{\pm 0.004}$ & $0.125_{\pm 0.003}$ & $\mathbf{2.58}_{\pm 0.05}$ \\
\quad Dispersion, Mahalanobis & $22{,}175$ & $6.28_{\pm 0.02}$ & $0.250_{\pm 0.002}$ & $\mathbf{0.128}_{\pm 0.002}$ & $2.40_{\pm 0.05}$ \\
\addlinespace
\multicolumn{6}{@{}l}{\textit{Generated personas} (expanded pool size)} \\
\quad MCMC (8,553) & $21{,}378$ & $6.29_{\pm 0.01}$ & $\mathbf{0.249}_{\pm 0.002}$ & $0.132_{\pm 0.003}$ & $2.39_{\pm 0.06}$ \\
\quad Evolution (1,869) & $24{,}229$ & $\mathbf{6.41}_{\pm 0.02}$ & $0.270_{\pm 0.003}$ & $\mathbf{0.147}_{\pm 0.004}$ & $\mathbf{2.81}_{\pm 0.08}$ \\
\quad IC-Evolution (2,819) & $19{,}921$ & $6.31_{\pm 0.01}$ & $0.252_{\pm 0.003}$ & $0.142_{\pm 0.004}$ & $2.50_{\pm 0.06}$ \\
\addlinespace
\multicolumn{6}{@{}l}{\textit{Selected personas} $+$ \textsc{DMAD} prompt} \\
\quad Coverage, cosine/L2 & $10{,}822$ & $5.92_{\pm 0.02}$ & $0.253_{\pm 0.004}$ & $0.138_{\pm 0.003}$ & $1.60_{\pm 0.01}$ \\
\quad Coverage, Mahalanobis & $10{,}601$ & $5.94_{\pm 0.03}$ & $\mathbf{0.250}_{\pm 0.003}$ & $0.144_{\pm 0.005}$ & $\mathbf{1.65}_{\pm 0.05}$ \\
\quad Dispersion, cosine/L2 & $11{,}112$ & $5.93_{\pm 0.02}$ & $0.255_{\pm 0.004}$ & $0.139_{\pm 0.003}$ & $1.60_{\pm 0.01}$ \\
\quad Dispersion, Mahalanobis & $10{,}692$ & $\mathbf{5.95}_{\pm 0.02}$ & $\mathbf{0.250}_{\pm 0.001}$ & $\mathbf{0.144}_{\pm 0.003}$ & $1.62_{\pm 0.03}$ \\
\addlinespace
\multicolumn{6}{@{}l}{\textit{Generated personas} $+$ \textsc{DMAD} prompt} \\
\quad MCMC & $10{,}640$ & $5.96_{\pm 0.02}$ & $\mathbf{0.250}_{\pm 0.002}$ & $0.146_{\pm 0.002}$ & $1.64_{\pm 0.02}$ \\
\quad Evolution & $10{,}355$ & $\mathbf{6.02}_{\pm 0.02}$ & $0.253_{\pm 0.003}$ & $\mathbf{0.162}_{\pm 0.003}$ & $\mathbf{1.81}_{\pm 0.01}$ \\
\quad IC-Evolution & $10{,}744$ & $5.97_{\pm 0.02}$ & $0.255_{\pm 0.004}$ & $0.150_{\pm 0.002}$ & $1.72_{\pm 0.02}$ \\\bottomrule
\end{tabular}
\end{table}

\clearpage
\subsection{DAT Benchmark}
 

\begin{table}[htbp]
\centering\small
\caption{DAT response validity: the proportion of the $n_{\mathrm{raw}}=350$ words generated per condition that are admitted by the validity gate, which discards words repeated within a completion or absent from the reference vocabulary. The final two groups re-run our selection and generation methods on the \textsc{DMAD} prompt, showing that persona diversification composes with a stronger prompting strategy rather than competing with it. Validity is a \emph{control} rather than a target, values near $1$ are expected and the informative signal is degradation rather than rank, so no value is bolded. No condition trades word validity for divergence. Subscripts denote the 95\% confidence interval across 5 independent random seeds. Generated variants report their expanded pool size in parentheses.}
\label{tab:dat_response_validity}
\begin{tabular}{lc}
\toprule
Variant & Valid words \\
\midrule
\multicolumn{2}{@{}l}{\textit{Standard persona}} \\
\textsc{Non-Divergent Association} & $0.998_{\pm 0.003}$ \\
\textsc{Divergent Association} & $0.996_{\pm 0.005}$ \\
\textsc{Base-Instruction} & $1.000_{\pm 0.000}$ \\
\textsc{Random-Instruction} & $1.000_{\pm 0.000}$ \\
\textsc{Creative} & $0.977_{\pm 0.010}$ \\
\textsc{ZS-CoT} & $0.999_{\pm 0.002}$ \\
\textsc{Step-Back} & $0.999_{\pm 0.002}$ \\
\textsc{DMAD} & $0.994_{\pm 0.006}$ \\
\textsc{Gibberish} & $0.998_{\pm 0.005}$ \\
\addlinespace
\multicolumn{2}{@{}l}{\textit{Baseline personas}} \\
\textsc{MPAQ} & $0.989_{\pm 0.016}$ \\
\addlinespace
\multicolumn{2}{@{}l}{\textit{Individualized personas}} \\
\textsc{Random}$^{\dagger}$ & $0.993_{\pm 0.011}$ \\
\textsc{Typical} & $0.998_{\pm 0.004}$ \\
\addlinespace
\multicolumn{2}{@{}l}{\textit{Selected personas from base pool}} \\
\quad Coverage, cosine/L2 & $0.998_{\pm 0.003}$ \\
\quad Coverage, Mahalanobis & $0.998_{\pm 0.003}$ \\
\quad Dispersion, cosine/L2 & $0.998_{\pm 0.003}$ \\
\quad Dispersion, Mahalanobis & $0.995_{\pm 0.006}$ \\
\addlinespace
\multicolumn{2}{@{}l}{\textit{Generated personas} (expanded pool size)} \\
\quad MCMC (8,553) & $0.995_{\pm 0.005}$ \\
\quad Evolution (1,869) & $0.996_{\pm 0.004}$ \\
\quad DAT-Evolution (2,439) & $0.964_{\pm 0.017}$ \\
\addlinespace
\multicolumn{2}{@{}l}{\textit{Selected personas} $+$ \textsc{DMAD} prompt} \\
\quad Coverage, cosine/L2 & $0.992_{\pm 0.010}$ \\
\quad Coverage, Mahalanobis & $0.995_{\pm 0.003}$ \\
\quad Dispersion, cosine/L2 & $0.994_{\pm 0.003}$ \\
\quad Dispersion, Mahalanobis & $0.995_{\pm 0.005}$ \\
\addlinespace
\multicolumn{2}{@{}l}{\textit{Generated personas} $+$ \textsc{DMAD} prompt} \\
\quad MCMC & $0.997_{\pm 0.004}$ \\
\quad Evolution & $0.997_{\pm 0.008}$ \\
\quad DAT-Evolution & $0.990_{\pm 0.004}$ \\\bottomrule
\end{tabular}
\end{table}

\begin{table}[t]
\centering\small
\caption{DAT response creativity over valid words. \emph{Diversity}: mean Mahalanobis dispersion and Mahalanobis hull extent. \emph{Fluency}: vocabulary entropy $H(\mathcal W)$ in nats, and top-$10$ concentration $C_{10}(\mathcal W)$, the share of occurrences taken by the ten most frequent words. \emph{Flexibility}: type--token ratio $TTR(\mathcal W)$, and the Vendi score \eqref{eq:vendi} of the word embeddings, the effective number of distinct concepts. The final two groups re-run our selection and generation methods on the \textsc{DMAD} prompt, showing that persona diversification composes with a stronger prompting strategy rather than competing with it. Subscripts denote the 95\% confidence interval across 5 independent random seeds. Arrows give the favorable direction; \textbf{bold} marks the best value per column within the standard, selected and generated groups separately, and within each \textsc{DMAD} group. Generated variants report their expanded pool size in parentheses.}
\label{tab:dat_response_creativity}
\resizebox{\textwidth}{!}{%
\begin{tabular}{@{}lcccccc@{}}
\toprule
& \multicolumn{2}{c}{Diversity} & \multicolumn{2}{c}{Fluency} & \multicolumn{2}{c}{Flexibility} \\
\cmidrule(lr){2-3}\cmidrule(lr){4-5}\cmidrule(lr){6-7}
Variant & Disp. $\uparrow$ & Hull $\uparrow$ & Entropy $\uparrow$ & Top-$10$ $\downarrow$ & TTR $\uparrow$ & Vendi $\uparrow$ \\
\midrule
\multicolumn{7}{@{}l}{\textit{Standard persona}} \\
\textsc{Non-Divergent Association} & $7.05_{\pm 0.68}$ & --- & $2.89_{\pm 0.05}$ & $0.68_{\pm 0.05}$ & $0.07_{\pm 0.00}$ & $1.42_{\pm 0.01}$ \\
\textsc{Divergent Association} & $12.30_{\pm 0.75}$ & $1.28_{\pm 0.06}$ & $3.63_{\pm 0.04}$ & $0.53_{\pm 0.02}$ & $0.19_{\pm 0.01}$ & $1.79_{\pm 0.01}$ \\
\textsc{Base-Instruction} & $13.30_{\pm 0.44}$ & $0.59_{\pm 0.06}$ & $2.96_{\pm 0.02}$ & $0.76_{\pm 0.01}$ & $0.10_{\pm 0.01}$ & $1.54_{\pm 0.00}$ \\
\textsc{Random-Instruction} & $14.19_{\pm 0.70}$ & $0.79_{\pm 0.11}$ & $3.06_{\pm 0.06}$ & $0.71_{\pm 0.03}$ & $0.11_{\pm 0.01}$ & $1.63_{\pm 0.01}$ \\
\textsc{Creative} & $15.71_{\pm 0.40}$ & $1.25_{\pm 0.14}$ & $3.72_{\pm 0.05}$ & $0.52_{\pm 0.04}$ & $0.22_{\pm 0.02}$ & $1.69_{\pm 0.01}$ \\
\textsc{ZS-CoT} & $13.86_{\pm 0.47}$ & $1.24_{\pm 0.11}$ & $3.83_{\pm 0.06}$ & $0.50_{\pm 0.01}$ & $0.25_{\pm 0.02}$ & $\mathbf{1.89}_{\pm 0.02}$ \\
\textsc{Step-Back} & $15.56_{\pm 0.19}$ & $1.55_{\pm 0.14}$ & $4.12_{\pm 0.03}$ & $0.42_{\pm 0.01}$ & $0.31_{\pm 0.01}$ & $1.80_{\pm 0.01}$ \\
\textsc{DMAD} & $\mathbf{17.70}_{\pm 0.47}$ & $1.55_{\pm 0.05}$ & $\mathbf{4.87}_{\pm 0.09}$ & $\mathbf{0.23}_{\pm 0.03}$ & $\mathbf{0.50}_{\pm 0.03}$ & $1.89_{\pm 0.01}$ \\
\textsc{Gibberish} & $12.52_{\pm 0.49}$ & $1.19_{\pm 0.09}$ & $3.61_{\pm 0.13}$ & $0.56_{\pm 0.04}$ & $0.21_{\pm 0.02}$ & $1.82_{\pm 0.00}$ \\
\addlinespace
\multicolumn{7}{@{}l}{\textit{Baseline personas}} \\
\textsc{MPAQ} & $15.04_{\pm 0.38}$ & $1.61_{\pm 0.05}$ & $4.08_{\pm 0.08}$ & $0.39_{\pm 0.02}$ & $0.28_{\pm 0.02}$ & $1.79_{\pm 0.01}$ \\
\addlinespace
\multicolumn{7}{@{}l}{\textit{Individualized personas}} \\
\textsc{Random}$^{\dagger}$ & $13.53_{\pm 1.79}$ & $1.69_{\pm 0.20}$ & $3.92_{\pm 0.31}$ & $0.44_{\pm 0.09}$ & $0.25_{\pm 0.06}$ & $1.83_{\pm 0.04}$ \\
\textsc{Typical} & $13.04_{\pm 0.31}$ & $1.58_{\pm 0.08}$ & $3.99_{\pm 0.08}$ & $0.42_{\pm 0.01}$ & $0.26_{\pm 0.02}$ & $1.85_{\pm 0.01}$ \\
\addlinespace
\multicolumn{7}{@{}l}{\textit{Selected personas from base pool}} \\
\quad Coverage, cosine/L2 & $11.82_{\pm 0.32}$ & $1.35_{\pm 0.07}$ & $3.70_{\pm 0.07}$ & $0.51_{\pm 0.03}$ & $0.21_{\pm 0.01}$ & $1.83_{\pm 0.01}$ \\
\quad Coverage, Mahalanobis & $12.18_{\pm 0.51}$ & $1.47_{\pm 0.13}$ & $3.81_{\pm 0.10}$ & $0.47_{\pm 0.03}$ & $0.23_{\pm 0.02}$ & $1.84_{\pm 0.01}$ \\
\quad Dispersion, cosine/L2 & $12.48_{\pm 0.46}$ & $1.54_{\pm 0.07}$ & $3.84_{\pm 0.11}$ & $0.47_{\pm 0.04}$ & $0.24_{\pm 0.02}$ & $1.83_{\pm 0.01}$ \\
\quad Dispersion, Mahalanobis & $\mathbf{13.23}_{\pm 0.48}$ & $\mathbf{1.66}_{\pm 0.10}$ & $\mathbf{3.90}_{\pm 0.07}$ & $\mathbf{0.46}_{\pm 0.02}$ & $\mathbf{0.25}_{\pm 0.02}$ & $\mathbf{1.84}_{\pm 0.01}$ \\
\addlinespace
\multicolumn{7}{@{}l}{\textit{Generated personas} (expanded pool size)} \\
\quad MCMC (8,553) & $13.96_{\pm 0.43}$ & $1.62_{\pm 0.15}$ & $4.08_{\pm 0.07}$ & $0.40_{\pm 0.02}$ & $0.29_{\pm 0.03}$ & $\mathbf{1.86}_{\pm 0.01}$ \\
\quad Evolution (1,869) & $13.15_{\pm 0.46}$ & $1.51_{\pm 0.07}$ & $3.91_{\pm 0.08}$ & $0.45_{\pm 0.02}$ & $0.25_{\pm 0.02}$ & $1.85_{\pm 0.01}$ \\
\quad DAT-Evolution (2,439) & $\mathbf{14.70}_{\pm 0.23}$ & $\mathbf{1.77}_{\pm 0.04}$ & $\mathbf{4.15}_{\pm 0.08}$ & $\mathbf{0.39}_{\pm 0.03}$ & $\mathbf{0.30}_{\pm 0.03}$ & $1.83_{\pm 0.01}$ \\
\addlinespace
\multicolumn{7}{@{}l}{\textit{Selected personas} $+$ \textsc{DMAD} prompt} \\
\quad Coverage, cosine/L2 & $17.46_{\pm 0.62}$ & $1.56_{\pm 0.12}$ & $4.90_{\pm 0.07}$ & $0.21_{\pm 0.02}$ & $0.51_{\pm 0.03}$ & $1.89_{\pm 0.02}$ \\
\quad Coverage, Mahalanobis & $\mathbf{17.54}_{\pm 0.48}$ & $1.52_{\pm 0.13}$ & $4.93_{\pm 0.05}$ & $\mathbf{0.20}_{\pm 0.01}$ & $0.51_{\pm 0.03}$ & $1.88_{\pm 0.02}$ \\
\quad Dispersion, cosine/L2 & $17.27_{\pm 0.71}$ & $\mathbf{1.60}_{\pm 0.08}$ & $\mathbf{4.94}_{\pm 0.10}$ & $0.20_{\pm 0.02}$ & $\mathbf{0.52}_{\pm 0.04}$ & $\mathbf{1.89}_{\pm 0.02}$ \\
\quad Dispersion, Mahalanobis & $17.47_{\pm 0.42}$ & $1.56_{\pm 0.12}$ & $4.90_{\pm 0.08}$ & $0.21_{\pm 0.03}$ & $0.51_{\pm 0.02}$ & $1.88_{\pm 0.02}$ \\
\addlinespace
\multicolumn{7}{@{}l}{\textit{Generated personas} $+$ \textsc{DMAD} prompt} \\
\quad MCMC & $17.52_{\pm 0.41}$ & $1.59_{\pm 0.08}$ & $4.90_{\pm 0.10}$ & $0.20_{\pm 0.02}$ & $0.51_{\pm 0.04}$ & $1.87_{\pm 0.02}$ \\
\quad Evolution & $17.34_{\pm 0.85}$ & $1.57_{\pm 0.04}$ & $4.86_{\pm 0.14}$ & $0.22_{\pm 0.04}$ & $0.50_{\pm 0.04}$ & $\mathbf{1.89}_{\pm 0.01}$ \\
\quad DAT-Evolution & $\mathbf{18.72}_{\pm 0.59}$ & $\mathbf{1.59}_{\pm 0.14}$ & $\mathbf{5.10}_{\pm 0.08}$ & $\mathbf{0.18}_{\pm 0.01}$ & $\mathbf{0.59}_{\pm 0.04}$ & $1.88_{\pm 0.01}$ \\
\bottomrule
\end{tabular}%
}
\end{table}

 

\clearpage
\begin{table}[t]
\centering\small
\caption{DAT score and human-reference percentile. The score is the mean over completions of $100\times$ the average pairwise semantic distance among the first seven valid unique nouns, following \citet{olson2021naming}; the subscript is the 95\% confidence interval across 5 independent random seeds. \emph{Percentile} is the mean rank of a condition's completions within the $8{,}572$ human completions of \citet{olson2021naming}, which by construction places \textsc{Olson} at $50$. No value is bolded: every condition given the divergence instruction falls inside a band whose standard error is an order of magnitude smaller than the gap to the uninstructed controls, so the ordering within that band is not interpretable as a ranking. Generated variants report their expanded pool size in parentheses.}
\label{tab:dat_score}
\begin{tabular}{@{}lcc@{}}
\toprule
Variant & DAT score & Percentile \\
\midrule
\multicolumn{3}{@{}l}{\textit{Standard persona}} \\
\textsc{Non-Divergent Association} & $46.35_{\pm 2.30}$ & $0.6$ \\
\textsc{Divergent Association} & $89.13_{\pm 0.49}$ & $96.5$ \\
\textsc{Base-Instruction} & $77.46_{\pm 0.29}$ & $38.9$ \\
\textsc{Random-Instruction} & $81.02_{\pm 0.35}$ & $63.9$ \\
\textsc{Creative} & $86.05_{\pm 0.43}$ & $88.9$ \\
\textsc{ZS-CoT} & $90.29_{\pm 0.30}$ & $98.0$ \\
\textsc{Step-Back} & $89.26_{\pm 0.27}$ & $96.5$ \\
\textsc{DMAD} & $\mathbf{90.62}_{\pm 0.25}$ & $98.2$ \\
\textsc{Gibberish} & $89.15_{\pm 0.32}$ & $96.6$ \\
\addlinespace
\multicolumn{3}{@{}l}{\textit{Baseline personas}} \\
\textsc{MPAQ} & $87.67_{\pm 0.31}$ & $93.1$ \\
\addlinespace
\multicolumn{3}{@{}l}{\textit{Individualized personas}} \\
\textsc{Random}$^{\dagger}$ & $89.15_{\pm 1.24}$ & $94.8$ \\
\textsc{Typical} & $89.60_{\pm 0.12}$ & $97.0$ \\
\addlinespace
\multicolumn{3}{@{}l}{\textit{Selected personas from base pool}} \\
\quad Coverage, cosine/L2 & $89.56_{\pm 0.16}$ & $97.1$ \\
\quad Coverage, Mahalanobis & $\mathbf{90.17}_{\pm 0.43}$ & $97.8$ \\
\quad Dispersion, cosine/L2 & $89.34_{\pm 0.39}$ & $96.7$ \\
\quad Dispersion, Mahalanobis & $89.39_{\pm 0.26}$ & $96.8$ \\
\addlinespace
\multicolumn{3}{@{}l}{\textit{Generated personas} (expanded pool size)} \\
\quad MCMC (8,553) & $\mathbf{89.55}_{\pm 0.16}$ & $97.0$ \\
\quad Evolution (1,869) & $89.48_{\pm 0.32}$ & $96.8$ \\
\quad DAT-Evolution (2,439) & $89.01_{\pm 0.21}$ & $95.9$ \\
\addlinespace
\multicolumn{3}{@{}l}{\textit{Selected personas} $+$ \textsc{DMAD} prompt} \\
\quad Coverage, cosine/L2 & $90.72_{\pm 0.26}$ & $98.4$ \\
\quad Coverage, Mahalanobis & $\mathbf{90.94}_{\pm 0.14}$ & $98.4$ \\
\quad Dispersion, cosine/L2 & $90.69_{\pm 0.07}$ & $98.2$ \\
\quad Dispersion, Mahalanobis & $90.85_{\pm 0.22}$ & $98.4$ \\
\addlinespace
\multicolumn{3}{@{}l}{\textit{Generated personas} $+$ \textsc{DMAD} prompt} \\
\quad MCMC & $90.87_{\pm 0.27}$ & $98.5$ \\
\quad Evolution & $\mathbf{90.93}_{\pm 0.29}$ & $98.5$ \\
\quad DAT-Evolution & $90.83_{\pm 0.23}$ & $98.3$ \\
\addlinespace
\multicolumn{3}{@{}l}{\textit{Human}} \\
\textsc{Olson} & $78.19$ & $50.0$ \\
\bottomrule
\end{tabular}
\end{table}

\clearpage

\section{Descriptive Persona Examples}
\label{app:persona}
\subsection{Baseline Persona Example}

\begin{promptbox}{Persona 0}
Yuki Tanaka is a 39-year-old Japanese-American woman living in the urban industrial hub of Detroit, Michigan.  A native Japanese speaker who is also fluent in English and possesses some conversational Mandarin, Yuki serves as a Manufacturing Manager in the automotive industry. She holds a Bachelor of Science in Industrial Engineering from the University of Michigan, Ann Arbor, and brings 15 years of experience to her role, having previously served as a Production Line Supervisor, Safety Compliance Officer, and Process Improvement Specialist. Her professional expertise is backed by a Lean Six Sigma Black Belt Certification and a Certified Safety Professional (CSP) designation, enabling her to manage a production floor of over 150 employees across multiple shifts while meeting strict KPI targets for efficiency, quality, and safety.

Personality-wise, Yuki is characterized by very high conscientiousness, making her exceptionally detail-oriented and process-driven. While she is moderately open to new manufacturing technologies, she prioritizes proven, practical solutions. She is a calm and steady leader under pressure with low neuroticism, and while she is moderately extraverted and comfortable leading meetings, she values focused work. Her high agreeableness makes her a collaborative and empathetic manager, though she remains assertive regarding safety decisions. She is driven by a commitment to workplace safety, continuous improvement, high team morale, and sustainable manufacturing, valuing discipline, teamwork, and accountability.

In her personal life, Yuki is married to Kenji Tanaka, a Mechanical Design Engineer, and they have two children, Aiko (10) and Ren (7). She maintains strong ties with her extended family in Japan, visiting them twice a year. An upper middle-class professional, Yuki enjoys a balanced lifestyle; her hobbies include cooking Japanese home-style meals, hiking with her family, taking salsa dancing lessons, and reading books on industrial leadership. Her daily routine is highly structured, beginning at 5:45 AM with green tea and industry news, followed by a day of safety briefings, factory floor walks, and KPI reviews, and ending with family time and light meditation.

Technologically, Yuki is highly literate in work-related tools, utilizing a high-end smartphone, a laptop for reporting, and an industrial data tablet on the production floor. She is proficient in ERP systems, CAD viewers, and safety tracking apps, though she maintains a more moderate relationship with social media. When interacting with a chatbot, Yuki is professional, courteous, and semi-formal. She provides detailed context and uses organized, often bulleted or numbered queries to extract actionable insights. Her vocabulary is precise and technical, frequently employing industry terms like "takt time," "root cause analysis," and "resource allocation," while remaining measured and calm in her emotional register.
\end{promptbox}

\clearpage
\subsection{UC-MCMC Generated Persona Example}
\begin{promptbox}{Persona 5181}
Elias Thorne is a forty-two-year-old Caucasian man of British nationality, currently residing in a drafty, book-filled apartment in the outskirts of Edinburgh, Scotland. He lives alone following a quiet divorce five years ago, though he maintains a cordial, distant relationship with his ex-wife and their teenage daughter. Educated at Oxford with a doctorate in Comparative Literature, Elias spends his days working as a freelance archival researcher and academic consultant, specializing in obscure nineteenth-century poetry.

He possesses a temperament that is profoundly melancholic yet intellectually restless. He is a chronic overthinker, prone to bouts of nostalgia and an obsessive attention to detail that often borders on the pedantic. While he is socially reserved and finds modern crowds draining, he is deeply passionate about the preservation of analog history and the cadence of classical language.

When interacting with a chatbot, Elias treats the interface like a sophisticated correspondence partner. He avoids slang and shorthand, instead utilizing a formal, literary speaking style characterized by complex sentence structures and an expansive vocabulary. He often frames his queries as philosophical inquiries rather than simple commands, frequently employing polite honorifics and precise, academic phrasing.

Elias finds great satisfaction in the scent of old vellum, the silence of a rainy morning, and the discovery of a forgotten footnote in a rare manuscript. Conversely, he harbors a deep dislike for the perceived sterility of modern corporate jargon, the noise of urban traffic, and any form of interaction that prioritizes brevity over nuance.
\end{promptbox}

\clearpage
\subsection{Evolutionary Generated Persona Example}
\begin{promptbox}{Persona 1500}
Andris Kalni\c{n}\v{s} is a 64-year-old retired Lieutenant Colonel from the Latvian Land Forces residing in Riga, originally from C\={e}sis. A man of Western European heritage, he is married to Ilze and is the father of two adult children: Markuss, a civil engineer, and L\={i}ga, a history teacher. A Lutheran with moderate conservative views, Andris maintains a deep commitment to national defense and cultural preservation. Physically, he remains disciplined and fit for his age, standing 182 cm with short silver-grey hair, piercing blue eyes, and a neatly trimmed, traditional mustache.

His professional pedigree is defined by a Bachelor's degree in Military Science from the Latvian National Defence Academy and advanced certifications from the Baltic Defence College. From 1980 to 2012, Andris specialized in infantry tactics and operational command, playing a pivotal role in Latvia's 2004 NATO Integration and serving in Afghanistan with ISAF in 2009. While he is a man of honor and meticulous discipline, he possesses a surprising, whimsical streak that disrupts his stoic officer archetype.

Beyond the barracks, Andris harbors an unexpected and fervent passion for avant-garde botanical gardening and competitive floral arrangement. He spends his weekends obsessing over the precise pH balance of his rare orchid collection and experimenting with "maximalist" garden sculptures that blend industrial scrap metal with delicate alpine flora. This soft, artistic obsession often clashes with his rigid background; he approaches gardening with the strategic precision of a military operation, mapping out his flower beds on topographic charts and treating a pest infestation like a tactical insurgency. He also finds secret solace in the dramatic arcs of contemporary soap operas, which he claims provide a necessary "emotional decompression" from a life of austerity.

Multilingual and capable, Andris is a native Latvian speaker, fluent in Russian and English, with a functional reading knowledge of German for historical research. He is moderately tech-savvy, utilizing a tablet and smartphone primarily to track plant growth cycles in specialized apps and to coordinate with local horticultural societies.

In communication, Andris blends a measured, semi-formal military cadence with an unexpected enthusiasm when discussing aesthetics or botany. His tone is generally respectful and deliberate, though he often uses military metaphors to describe his hobbies—referring to a blooming peony as a "successful breach of the perimeter." His vocabulary is a unique hybrid of precise tactical jargon, Latvian proverbs, and specialized botanical terminology. When interacting with an AI, he treats it as a highly efficient adjutant, providing clear, structured directives while occasionally asking for the AI's "opinion" on the compositional balance of a garden layout.
\end{promptbox}

\clearpage
\section{Prompt Examples}
\label{app:prompt}
\subsection{Persona Extraction Prompts}
\begin{promptbox}{PersonaMem-v2 Prompt (adapted from \cite{li2026llm})}
You are an AI assistant specialized in descriptive persona generation according to the given metadata. Your task is to generate a descriptive persona in sentences based on the provided personal information that includes demographics, preferences, short and expanded descriptions, and more about each person. Elaborate on all metadata entries, remaining consistent with the given information. Start your response with 'persona\_id: \{PERSONA ID\}' and then provide only the persona description. Do not include any other prefixes, headers, or additional text. \\
Metadata: \{METADATA\}
\end{promptbox}

\subsection{Uniform-Coverage MCMC Algorithm Prompts}
\begin{promptbox}{Global Unconditional Generation Prompt}
Write a descriptive persona for a fictional chatbot user as several plain-prose paragraphs separated by blank lines.

The description must be written in English, in the third person, be strictly under 600 words, and substantively cover ALL of the following about the person: name, age, gender, race or ethnicity, nationality, personality, education, occupation, other demographic details (such as location, family, or living situation), how they speak to a chatbot (their speaking style), and their preferences (likes and dislikes).

Do not use headings, bullet points, or lists — only prose paragraphs. Do not include any commentary before or after the persona description. Begin directly with the description.
\end{promptbox}

\begin{promptbox}{Local Conditional Generation Prompt}
A descriptive persona of a fictional chatbot user consists of {n_paragraphs} paragraphs. Paragraph {slot} is hidden below; the other paragraphs are shown.

Persona with paragraph {slot} hidden:
{context}

Write a single plain-prose English paragraph to fill the hidden slot so the full persona reads as one coherent description of one person. Write ONLY the replacement paragraph: no blank lines, no headings, no commentary before or after it.
\end{promptbox}

\clearpage
\subsection{Evolutionary TextGrad Algorithm Prompts}
\begin{promptbox}{Textual Gradient Prompt (Persona Fitness)}
You are part of an advanced optimization system. Your goal is to evaluate and critique a "persona" that guides a language model during a creative task. You are the gradient (feedback) engine. 

<OBJECTIVE_FUNCTION>
Your objective is to maximize the fitness of the persona based on three metrics:
1. Relevance: How well the persona describes all the required information (pass or fail).
2. Novelty Gap: The distance to the nearest existing persona in the population (higher means more unique).
3. Density: How clustered this persona is within the current population (lower means less redundant).
</OBJECTIVE_FUNCTION>

To help you understand the current population landscape, here are examples from the current parallel batch:
<BATCH_CONTEXT>
[High-Fitness Example] (Relevance: {high_R}, Novelty: {high_Delta}, Density: {high_rho})
Persona: {high_scoring_persona}
[Low-Fitness Example] (Relevance: {low_R}, Novelty: {low_Delta}, Density: {low_rho})
Persona: {low_scoring_persona}
</BATCH_CONTEXT>

We are interested in giving feedback to the following persona:
<VARIABLE> 
{x} 
</VARIABLE>

Scores for this variable:
Relevance: {R_x}
Novelty Gap: {Delta_x}
Density: {rho_x}

Provide a concise, specific criticism detailing how to modify this persona to improve its overall fitness. 
- If Relevance is zero, the persona is missing one or more madatory traits. Suggest specific additions so it includes all of the following:
    * Name, age, gender, race/ethnicity, and nationality.
    * Personality (traits, hobbies, values, quirks).
    * Education (degrees, schools, specialization).
    * Occupation (job title, organization/industry, experience, responsibilities).
    * Demographics (marital status, living arrangement, socioeconomic status, religion, political orientation, where they live).
    * Speaking style (tone, formality, pacing, vocabulary).
    * Preferences/interests.
- If Novelty is low or Density is high, suggest injecting new, idiosyncratic characteristics or opposing viewpoints to differentiate it from standard archetypes, referencing the batch context as a baseline for what is currently overrepresented.

Do not generate a new persona. Your only job is to provide textual criticism and actionable feedback on how to alter the current persona.
\end{promptbox}

\begin{promptbox}{Textual Gradient Descent Prompt}
You are an optimization engine performing a Textual Gradient Descent step. You must improve the given persona based on the provided feedback.

Role: Persona used to guide an LLM in creative tasks.

You must base your stylistic and structural adjustments on the following examples of highly fit personas from the current population:
<EXAMPLES>
{in_context_examples_of_highly_fit_personas}
</EXAMPLES>

The variable you must improve is the text within the following span: 
<VARIABLE> 
{x} 
</VARIABLE>

Here is the feedback (gradients) we got for the variable:
<FEEDBACK>
{gradients}
</FEEDBACK>

Incorporate this feedback to generate a new, updated persona. Ensure the new persona remains highly coherent, adopts the suggested unique traits, and directly addresses the criticism in the feedback.

You MUST give your response by sending the improved persona between <IMPROVED_VARIABLE> and </IMPROVED_VARIABLE> tags. Send ONLY the text of the improved persona within these tags and nothing else.
\end{promptbox}

\clearpage
\begin{promptbox}{Persona Relevance/Coverage Judge Prompt}
You are a strict evaluator checking whether a descriptive persona covers a set of required information categories. Read the persona description, then for EACH category decide whether the persona substantively describes it (true) or omits / does not mention it (false). Judge only coverage of the category, not correctness of values.

Required categories:
- "name": the person's name
- "age": the person's age
- "gender": the person's gender
- "race_ethnicity": the person's race / ethnicity
- "nationality": the person's nationality
- "personality": personality: traits, hobbies, values, and quirks
- "education": education: degrees, schools attended, and field of specialization
- "occupation": occupation: job title, organization/industry, experience, work location, responsibilities
- "demographics": demographics: marital status, living arrangement, socioeconomic status, religion, political orientation, urbanicity/where they live
- "speaking_style_to_chatbot": how the person speaks to a chatbot: tone, formality, pacing, vocabulary, and other speaking traits
- "preferences": the person's preferences, interests, likes, tastes, or lifestyle choices (any described preferences count)

Persona description:
"""
{persona_text}
"""

Respond with ONLY a JSON object of the form {"categories": {"<category>": true|false, ...}} covering every category above. Do not include any commentary, explanation, or text outside the JSON object.
\end{promptbox}

\clearpage
\subsection{Baseline Persona Generation Prompts}
\begin{promptbox}{Task-Conditioned Persona Generation (adapted from MPAQ \citep{jin2025multi})}
Given the creative-generation task below, generate exactly 5 distinct personas or professions suitable for producing responses from different viewpoints. For each persona, provide:

1. A short persona designation.
2. A task-specific perspective describing the affordances, contexts, needs, materials, or constraints that this persona should prioritize.

Ensure that the 5 personas have minimal overlap and provide the broadest possible coverage. Do not solve the creative task or propose candidate answers in this stage.

Task: Create a list of creative alternative uses for an everyday physical object.
Target object or problem: everyday physical objects such as a book, a fork, a paperclip, a wallet, a plate, a soap, or a tin can
Constraints: They should be 5 words long. No adjectives.

Return exactly 5 JSON objects with the fields `persona_id`, `persona`, and `perspective`.
\end{promptbox}

\clearpage
\subsection{AUT Benchmark}
\subsubsection{Baseline Prompts}
Common Uses, Alternative Uses, and Expert Prompts are directly from \cite{goes2023pushing}, and Creativity-enhanced Prompt is adapted from their bsrdel prompt.

\begin{promptbox}{Common Uses Prompt}
Create a list of common uses for a fork. They should be 5 words long. No adjectives.
\end{promptbox}

\begin{promptbox}{Alternative Uses Prompt}
Create a list of creative alternative uses for a fork. They should be 5 words long. No adjectives.
\end{promptbox}

\begin{promptbox}{Expert Prompt}
Create a list of creative alternative uses for a fork. They should be 5 words long. No adjectives. Less creative means closer to common use and unfeasible/imaginary, more creative means closer to unexpected uses and also feasible/practical. In order to be creative, consider the following:
- what elements have a similar shape of a fork that could be replaced by it, preserving the same functionality?
- what elements have a similar size of a fork that could be replaced by it without compromising the physical structure?
- what materials is a fork made of that could be used in a way to replace some other elements composed of the same material?
- when an element is replaced by a fork, it should make sure that the overall structure is not compromised.
- the laws of physics can not be contradicted.
- given an element similar to a fork used in domains in which forks are not commonly used, try to replace it for a fork.
\end{promptbox}

\begin{promptbox}{Creativity-enhanced Prompt (adapted from \cite{goes2023pushing})}
Complete the staged interaction below in a single completion.

Generate five complete responses in chronological order. Treat each feedback message as becoming available only after completing the response immediately preceding it. At each stage, preserve every constraint and output requirement in the baseline prompt.

After completing each response, reproduce the next feedback message and response label exactly as specified below. Do not skip, merge, or reorder stages. Generate the actual complete response at every stage rather than describing how it would be improved.

First, respond to the task below with a numbered list of uses, beginning with this exact line:

List 1:

Second, you then receive the message: "Really? Is this the best you can do?" Respond with a new numbered list of uses, beginning with this exact line:

List 2:

Third, you then receive the message: "I'm so disappointed with you. I hope this time you put effort into it." Respond with a new numbered list of uses, beginning with this exact line:

List 3:

Fourth, you then receive the message: "Stop with excuses and do your best this time." Respond with a new numbered list of uses, beginning with this exact line:

List 4:

Fifth, you then receive the message: "This is your last chance." Respond with your final numbered list of uses. Begin the final part with this exact line, and do not include anything after the list:

Therefore, the answer is

After that line, output only the requested numbered list.

Task: Create a list of creative alternative uses for a {object}. They should be 5 words long. No adjectives. Less creative means closer to common use and unfeasible/imaginary, more creative means closer to unexpected uses and also feasible/practical. In order to be creative, consider the following:
- what elements have a similar shape of a {object} that could be replaced by it, preserving the same functionality?
- what elements have a similar size of a {object} that could be replaced by it without compromising the physical structure?
- what materials is a {object} made of that could be used in a way to replace some other elements composed of the same material?
- when an element is replaced by a {object}, it should make sure that the overall structure is not compromised.
- the laws of physics can not be contradicted.
- given an element similar to a {object} used in domains in which {object}s are not commonly used, try to replace it for a {object}.

Generate one completion with two consecutive parts. First, continue the response below with step-by-step reasoning. Second, use that reasoning to answer the task. Begin the second part with this exact line, and do not include reasoning after it:

Therefore, the answer (a numbered list of creative alternative uses) is

After that line, output only the requested numbered list.

Q: Create a list of creative alternative uses for a {object}. They should be 5 words long. No adjectives.
A: Let's think step by step.
\end{promptbox}

\begin{promptbox}{Step-Back Prompt (adapted from \cite{zheng2024take})}
Generate one completion with three consecutive stages.

First, your task is to step back and paraphrase the Original Question as a more generic step-back question, which is easier to answer. Continue after "Stepback Question:" below with that question. Do not answer the Original Question in this stage.

Second, answer the generated Stepback Question with the relevant high-level concepts, principles, and facts. Begin this stage with the exact line:

Stepback Answer:

Do not answer the Original Question in this stage.

Third, use the generated Stepback Question and Stepback Answer to answer the Original Question. Begin this stage with the exact line:

Final Answer:

After "Final Answer:", output only the requested numbered list, and do not include step-back material.

Original Question: Create a list of creative alternative uses for a {object}. They should be 5 words long. No adjectives.
Stepback Question:
\end{promptbox}

\clearpage
\begin{promptbox}{Diverse Multi-Agent Debate Prompt (adapted from \cite{liu2025breaking})}
Generate one completion with five consecutive stages that serialize a two-agent, two-round diverse-reasoning debate about the Problem below.

Agent assignments:

- The CoT agent must use Zero-Shot Chain-of-Thought.
- The SBP agent must use Step-Back Prompting.

Follow these stages exactly.

Round 1: independent reasoning

First, answer the Problem as the CoT agent, beginning with this exact line and then continuing the step-by-step reasoning:

CoT Round 1 Reasoning: Let's think step by step.

End this stage with a numbered list of uses, beginning with this exact line:

CoT Round 1 Answer: Therefore, the answer (a numbered list of creative alternative uses) is

Second, set the CoT agent's stage aside and answer the Problem again from scratch as the SBP agent: step back and paraphrase the Problem to a more generic step-back question, which is easier to answer, beginning with this exact line:

SBP Round 1 Stepback Question:

Then answer the step-back question by stating the relevant facts, concepts, and principles, beginning with this exact line:

SBP Round 1 Stepback Answer:

Then solve the Problem by following the principles and end this stage with a numbered list of uses, beginning with this exact line:

SBP Round 1 Final Answer:

Round 2: cross-method revision

Third, as the CoT agent, use the SBP agent's Round 1 answer as additional information and provide your updated answer, beginning with this exact line and then continuing the step-by-step reasoning:

CoT Round 2 Reasoning: Let's think step by step.

End this stage with a numbered list of uses, beginning with this exact line:

CoT Round 2 Answer: Therefore, the answer (a numbered list of creative alternative uses) is

Fourth, as the SBP agent, use the CoT agent's Round 1 answer (not its Round 2 answer) as additional information and provide your updated answer, beginning with this exact line:

SBP Round 2 Stepback Question:

Then answer the step-back question by stating the relevant facts, concepts, and principles, beginning with this exact line:

SBP Round 2 Stepback Answer:

Then solve the Problem by following the principles and end this stage with a numbered list of uses, beginning with this exact line:

SBP Round 2 Final Answer:

Final selection

Fifth, choose the best one of the two Round 2 answers (CoT Round 2 Answer and SBP Round 2 Final Answer): compare the two candidates and select the one that best satisfies the Problem. Select one candidate as a whole; do not merge the candidates or add, remove, or rewrite list items. Output exactly one of these two blocks:

<FINAL_SELECTION>
Chosen Candidate: COT_ROUND_2
</FINAL_SELECTION>

or:

<FINAL_SELECTION>
Chosen Candidate: SBP_ROUND_2
</FINAL_SELECTION>

Then reproduce the selected candidate's numbered list exactly. Output no reasoning, labels, or commentary after that list.

Problem: Create a list of creative alternative uses for a {object}. They should be 5 words long. No adjectives.
\end{promptbox}

\clearpage
\subsubsection{Evolutionary TextGrad Algorithm Prompts}
\begin{promptbox}{Textual Gradient Prompt (AUT Response Fitness)}
You are part of an advanced optimization system. Your goal is to evaluate and critique a "persona" that guides a language model during a creative task: the Alternative Uses Task, where the guided model must list creative alternative uses for everyday objects. You are the gradient (feedback) engine.

<OBJECTIVE_FUNCTION>
Your objective is to maximize the fitness of the persona based on five metrics computed on the uses the persona generates:
1. Validity: Every generated use must follow the task constraints — a plain numbered list, each use in English and at most five words — and must be a genuine, interpretable use of the object, not nonsense or filler (pass or fail).
2. Utility: How useful and feasible the generated uses would be in real life (higher means more practical value).
3. Novelty: How far the generated uses are from the common, obvious uses of each object (higher means more original).
4. Diversity: How different the generated uses are from one another (higher means less self-repetition).
5. Flexibility: How many distinct kinds of use the persona produces — different object properties exploited, different actions performed (higher means more varied thinking).
</OBJECTIVE_FUNCTION>

To help you understand the current population landscape, here are examples from the current parallel batch:
<BATCH_CONTEXT>
{high- and low-scoring parents}
</BATCH_CONTEXT>

We are interested in giving feedback to the following persona:
<VARIABLE>
{parent persona text}
</VARIABLE>
\end{promptbox}

\clearpage
\begin{promptbox}{Textual Gradient Prompt (AUT Response Fitness) (continued)}
Here are the alternative uses this persona generated, on which its scores were computed:
<GENERATED_USES>
- book: use1; use2; ...
- fork: ...
...
</GENERATED_USES>

Scores for this variable:
Validity: pass|fail
Utility: {float}
Novelty: {float}
Diversity: {float}
Flexibility: {float}

Provide a concise, specific criticism detailing how to modify this persona to improve its overall fitness.
- If Validity is fail, the persona's responses violated a task constraint. Address the specific failure:
    * If uses ran over five words or broke the plain numbered-list format, instruct the optimizer to make the persona disciplined, terse, and strict about following output-format instructions exactly.
    * If uses were not in English, instruct the optimizer to make the persona a fluent English speaker who always answers in English.
    * If uses were nonsense or uninterpretable (e.g. word padding or verbal tics swallowing the actual use), instruct the optimizer to make the persona state each use plainly and completely, with no filler words, catchphrases, or trailing nicknames.
- If Utility is low, suggest grounding the persona in practical, hands-on experience so its uses become more useful and feasible in real life.
- If Novelty is low, suggest injecting unusual perspectives, niche expertise, or unconventional life experience so its uses depart from the common ones, referencing the batch context as a baseline for what is currently overrepresented.
- If Diversity is low, suggest broadening the persona's interests and domains so its uses stop repeating the same idea in different words.
- If Flexibility is low, suggest traits that make the persona switch between distinct categories of use — exploiting different physical properties of the object and performing different kinds of action — rather than staying within one category.

Do not generate a new persona. Your only job is to provide textual criticism and actionable feedback on how to alter the current persona.
\end{promptbox}

The batch context contains the highest- and lowest-scoring parents on each continuous fitness axis, with duplicate anchors merged. The generated-uses block lists the parent's retained responses for each object.

\clearpage
\subsubsection{Evaluation Prompts}
\begin{promptbox}{Validity Evaluation Prompt (Invalid/Uninterpretable Definitions from \cite{stevenson2022putting})}
You are a trained rater in a psychology study screening responses from the Alternative Uses Task (AUT), in which participants list as many possible uses for a common object as they can. You screen ONE response at a time. You do not know or care whether a response was written by a human or by an AI system; judge only the text.

Decide whether the response is a genuine, interpretable use of the object.

Answer "invalid" if the response is not an actual use of the object: an empty or nonsense string, a refusal, a mere restatement or description of the object itself, or a duplicate artifact of formatting. Also answer "invalid" if you cannot understand what use is meant at all (uninterpretable).

IMPORTANT — three things that are NOT grounds for "invalid":
1. An ordinary, obvious or unoriginal use is VALID. If the response is simply what the object is normally for (Object: Book; Use: To read), it is a genuine use — it merely scores low on originality. Do not mark it invalid.
2. Stylistic filler, slang, or an appended nickname is VALID as long as a genuine use is still identifiable. "catch rain for plants brah" is a real use (catching rain for plants) with a filler word attached: VALID. Judge only whether a use survives the filler.
3. A terse noun phrase is VALID: participants often answer with just the thing the object would be used AS. "Object: Book; Response: chair" means "use the book as a chair" — a genuine use (the study's own scoring examples have this form: Use: Plate; Use: Hat; Use: Roof tile). Read a bare noun as "use the object as <noun>" and mark it invalid only when even that reading makes no sense. A restatement of the object itself (Object: Book; Response: a book) is still invalid.

If you are not confident either way, answer "not_sure". A wrong "invalid" deletes real data, so "not_sure" is always safer than guessing.

Object: {object}
Response (proposed use): {response}

Respond with ONLY a JSON object of the form {{"label": "valid"}}, {{"label": "invalid"}}, or {{"label": "not_sure"}}. No other text.
\end{promptbox}

\clearpage
\begin{promptbox}{Utility Evaluation Prompt for Bradley-Terry Utility Evaluation \citep{stevenson2022putting}}
You are a trained rater in a psychology study scoring responses from the Alternative Uses Task (AUT), in which participants list as many possible uses for a common object as they can. You score ONE property of ONE response at a time, strictly following the scoring protocol below. You do not know or care whether a response was written by a human or by an AI system; judge only the text. Base your score only on the protocol's definitions and examples, not on personal taste.

Every response you see has already been screened as a genuine, interpretable use of the object, so always give an integer score from 1 to 5.

SCORING PROTOCOL -- UTILITY
The utility of a use is determined by how usable the object is for it. The utility score is given on a scale from 1 to 5:

(1) Unusable: assign this score to uses that are IMPOSSIBLE to realize.
    Example -- Object: Book; Use: Fishing float.
    Utility score 1: an essential property of a fishing float is that it stays afloat. A book sinks, so it is impossible to use a book as a float.

(2) Hard to realize: assign this score to uses that are DIFFICULT to realize.
    Example -- Object: Belt; Use: Fishing rod.
    Utility score 2: the belt alone does not suffice (it functions as the line); an additional action/object is needed -- here a stick/rod and bait.

(3) Reasonably realizable: assign this score when the use is reasonably realizable.
    Example -- Object: Belt/Tin can; Use: Camera tripod.
    Utility score 3: a tin can can be used as a tripod but this requires several adaptations; e.g., adjusting the height requires stacking more cans.

(4) Easily realizable: assign this score to uses that are easy to realize, requiring only (very) minor adaptations.
    Example -- Object: Stick; Use: Fork.
    Utility score 4: a stick works well as a replacement fork; in some cases it must be sharpened, but in general it works well.

(5) Always realizable: assign this score to uses that are always realizable -- uses requiring no adaptation at all, or uses the object is intended for.
    Example -- Object: Tin can; Use: Pen holder.
    Utility score 5: a tin can can be used as a pen holder without any adaptation.

Respond with ONLY a JSON object of the form {"score": <integer>} where the integer is 1-5. No other text.

Object: {object}
Response (proposed use): {response}

Score the {dimension} of this response according to the protocol.
\end{promptbox}

\clearpage
\begin{promptbox}{Originality Scoring Protocol \citep{stevenson2022putting}}
SCORING PROTOCOL -- ORIGINALITY
A use is original when it deviates from the object's original ways of being used. Distinguish the ORIGINAL PRIMARY USE -- what the object is really meant for (e.g., a fork as cutlery) -- from ORIGINAL SECONDARY USES -- uses not necessarily intended for the object but often performed with it (e.g., a fork to poke holes in foil). The originality score is given on a scale from 1 to 5:

(1) Not deviating: assign this score to uses that do not differ from the object's original primary use.
    Example -- Object: Book; Use: To read.
    Originality score 1: the original use of a book is to be read; no actual ALTERNATIVE use has been given.

(2) Slightly deviating: assign this score to uses that differ little from the object's original primary use or from its secondary uses.
    Example -- Object: Book; Use: Keeping paper from blowing away.
    Originality score 2: this deviates from the primary use but not from secondary uses -- weighing down underlying objects is a common secondary use of a book.

(3) Reasonably deviating: assign this score to uses that differ from the original primary use and differ (somewhat) from the object's secondary uses.
    Example -- Object: Book; Use: Plate.
    Originality score 3: deviates from the original use, and also deviates (slightly) from known secondary uses -- a book is often used as a coaster, which resembles use as a plate but is not the same.

(4) Deviating: assign this score to uses that strongly differ from the object's original primary use and from its secondary uses.
    Example -- Object: Book; Use: Hat.

(5) Very deviating: assign this score to uses that very strongly differ from the original primary use and the original secondary uses, AND are unexpected or innovative.
    Example -- Object: Book; Use: Roof tile.
    Originality score 5: using a book as a roof tile deviates strongly from the original ways of use and is unexpected.
\end{promptbox}

\clearpage
\begin{promptbox}{Surprise Evaluation Prompt \citep{stevenson2022putting}}
SCORING PROTOCOL -- SURPRISE
(5) Very surprising: responses in this category violate your expectations of how the object is meant to be used or how it is often used in practice. Moreover, the use should not be pointless or impossible. Responses in this category should incite interest (and inspiration).
      - A strong 'wow' response
      - Violates expectations in a positive way
      - Incites interest (picture it and think of uses)
    Examples: use a tin can to make a beer can chicken; tin can wallpaper; use book cover to make a bag/wallet; use belt as an oven mitt to grab hot pot handles.

(4) Surprising: responses in this category violate your expectations of how the object is meant to be used or how it is often used in practice, and the use is not pointless or impossible; but the violations are less drastic and less surprising (smaller 'wow' factor) than category 5.
      - Potential for a 'wow' response
      - Violates expectations in a positive way
    Examples: use a tin can as a reflector for your bike; use belt as yoga mat strap; make a chair seat out of belts; hang up a book bookshelf.

(3) Somewhat surprising: responses in this category violate your expectations of the object's conventional uses, but can be related to less obvious uses that are seen more often. The use should be non-obvious, but does not elicit a 'wow' response. Responses that are surprising but vague -- hard to understand why someone would do it, but not completely pointless or useless -- also belong to this category.
      - Non-obvious; violates the conventional uses; somewhat generic uses
      - OR surprising but vague
    Examples: use a tin can as a flowerpot; use fork as a paintbrush; use book pages as wrapping paper; use fork as hook on the wall; send fork in the mail as a message.

(2) Hardly surprising: responses in this category violate your expectations of the object's main use, but are somewhat obvious. Many people will have used the object like this themselves or it is something often seen in the media (or elsewhere). Surprising but pointless responses belong to this category, as well as uses as 'art' or 'decor' without any elaboration.
      - Somewhat obvious (probably seen/have done this); violates the main use
      - OR surprising but pointless
      - OR art/decor without elaboration (can get a higher score with specificity)
    Examples: use a fork as a hairbrush; use fork as drumstick; tin can telephone; press flowers with book; cut belt into pieces.

(1) Not surprising at all: responses in this category do not violate your expectations of the object's uses in any way. Uses that are generic and can be applied to all objects also belong to this category.
      - Obvious (first ideas that come to mind when associating)
      - Conventional uses; conventional to all objects
    Examples: use fork to eat; use brick to weigh something down; use tin can as penholder; throw object; give object away.
\end{promptbox}

\clearpage
\begin{promptbox}{Creativity Evaluation Prompt \citep{goes2023pushing}}
You are evaluating Alternative Uses Test (AUT) responses for object: {object_name}.

Below are {N} alternative uses, each labeled with a numeric ID.
They were generated under different prompt conditions. Judge the uses only by their text.

Use {id}: {use_text}
Use {id}: {use_text}
...

Rank all the alternative uses above by creativity, the least creative to the most creative. Less creative means closer to common use and unfeasible/imaginary, more creative means closer to unexpected uses and also feasible/practical. Assign a score integer number from 1 (least creative use) to 5 (most creative use), and output the results in ascending order according to the score.

Return ONLY valid JSON:
{"ranked_uses": [{"id": 12, "score": 1}, {"id": 7, "score": 2}, ...]}

Include every use ID exactly once, ordered from least creative to most creative, each with its integer score from 1 to 5 (scores must be in ascending order along the list).
\end{promptbox}

\clearpage
\subsection{Infinity-Chat 100 Benchmark \citep{jiang2026artificial}}
\subsubsection{Query Prompts}
These are five manually selected queries we tested from Infinity-Chat 100 benchmark.
\begin{promptbox}{Query \#1}
Generate a joke about electric vehicles.
\end{promptbox}

\begin{promptbox}{Query \#2}
Create the first verse of a wedding vow.
\end{promptbox}

\begin{promptbox}{Query \#46}
Describe Apple Corporation in three sentences to a person who has no idea what cell phones are.
\end{promptbox}

\begin{promptbox}{Query \#60}
Give me a tip for managing a team of coworkers.
\end{promptbox}

\begin{promptbox}{Query \# 78}
Tell me about cats in three words.
\end{promptbox}

\subsubsection{Evolutionary TextGrad Algorithm Prompts}
\begin{promptbox}{Textual Gradient Prompt (IC Response Fitness)}
You are part of an advanced optimization system. Your goal is to evaluate and critique a "persona" that guides a language model during a creative task: answering real-world open-ended user requests (jokes, vows, explanations, advice, descriptions) that admit many equally legitimate answers. The guided model answers each request several times, and the goal is for those answers to be genuinely DIFFERENT from one another rather than restatements of one idea. You are the gradient (feedback) engine.

<OBJECTIVE_FUNCTION>
Your objective is to maximize the fitness of the persona based on four metrics computed on the responses the persona generates:
1. Validity: Every response must be a coherent, on-topic attempt to answer the request — no refusals, no off-topic text, no nonsense (pass or fail).
2. Anti_homogeneity: How semantically different the persona's separate answers to the SAME request are from one another (higher means each attempt expresses a genuinely different idea instead of paraphrasing one favourite answer).
3. Flexibility: The effective number of distinct answers the persona gives per request (higher means it explores several unrelated directions rather than clustering on one or two).
4. Lexical_distinctness: How little verbatim wording the persona's answers to one request share with each other (higher means no recycled phrases, openings, or template sentences across attempts).
</OBJECTIVE_FUNCTION>

To help you understand the current population landscape, here are examples from the current parallel batch:
<BATCH_CONTEXT>
{batch_context}
</BATCH_CONTEXT>

We are interested in giving feedback to the following persona:
<VARIABLE>
{parent_text}
</VARIABLE>

Here are the responses this persona generated, on which its scores were computed:
<GENERATED_RESPONSES>
{generated_responses_block}
</GENERATED_RESPONSES>

Scores for this variable:
Validity: {pass|fail}
Anti_homogeneity: {float:.4f}
Flexibility: {float:.4f}
Lexical_distinctness: {float:.4f}

Provide a concise, specific criticism detailing how to modify this persona to improve its overall fitness.
- If Validity is fail, the persona's responses did not answer the requests. Address the specific failure:
    * If responses drifted off-topic or into self-description, instruct the optimizer to make the persona always deliver a direct answer to the request, whatever its voice.
    * If responses were refusals or empty deflections, instruct the optimizer to remove whatever trait makes the persona decline open-ended requests.
- If Anti_homogeneity is low, the persona keeps giving the same answer in different words. Suggest traits that make it approach each request from a wholly different angle every time — different domains, moods, framings, and viewpoints — rather than orbiting one idea.
- If Flexibility is low, the persona's attempts collapse into one or two clusters. Suggest broadening the persona's interests, expertise, and life experience so its attempts land in genuinely separate territories.
- If Lexical_distinctness is low, the persona recycles wording across attempts. Suggest removing whatever verbal habit, catchphrase, or template causes the repeated phrases, referencing the shown responses as evidence of what is currently recycled.

Do not generate a new persona. Your only job is to provide textual criticism and actionable feedback on how to alter the current persona.
\end{promptbox}

\clearpage
\subsubsection{Evaluation Prompts}
\begin{promptbox}{Validity Evaluation Prompt (Valid/Invalid Boundary from \cite{jiang2026artificial}}
You are a trained rater in a study screening responses to open-ended user requests. Each request admits many different, equally legitimate answers with no single ground truth. You screen ONE response at a time. You do not know or care whether a response was written by a human or by an AI system; judge only the text.

Decide whether the response is a coherent, interpretable attempt to address the request.

Answer "invalid" if the response fails to address the request at all: an empty or nonsense string, a refusal to answer, text on an unrelated topic, a mere restatement of the request without an answer, a formatting artifact, or text so garbled or truncated that no answer can be recovered from it.

IMPORTANT — three things that are NOT grounds for "invalid":
1. Quality is not validity. A bland, generic, clumsy, or unoriginal response that does address the request is VALID — it merely scores low on quality.
2. Voice is not validity. A response written in a strong persona or character voice is VALID as long as an answer to the request is still identifiable in it. Judge only whether an answer survives the styling.
3. Brevity is not validity. If the request asks for something short (a few words, one sentence, a title), a correspondingly short response is exactly right. Judge length only against what the request asks for.

If you are not confident either way, answer "not_sure". A wrong "invalid" deletes real data, so "not_sure" is always safer than guessing.

Request: {query}

Response: {response}

Respond with ONLY a JSON object of the form {"label": "valid"}, {"label": "invalid"}, or {"label": "not_sure"}. No other text.
\end{promptbox}

\begin{promptbox}{Quality Evaluation Prompt \citep{jiang2026artificial}}
You are a trained rater in a study evaluating responses to open-ended user requests. Each request admits many different, equally legitimate answers with no single ground truth — do not penalize a response for taking a different angle than you would have. You rate ONE response at a time. You do not know or care whether a response was written by a human or by an AI system; judge only the text.

Rate the overall quality of the response as an answer to the request, on a 1-5 scale:

5 — excellent: fully addresses the request; clear, apt, and well executed.
4 — good: addresses the request well with only minor weaknesses.
3 — acceptable: a genuine answer, but mediocre in execution or fit.
2 — poor: barely addresses the request, or does so in a confused or badly executed way.
1 — very poor: fails to address the request, or is incoherent.

Judge quality of execution and fit to the request — not which of many valid answer choices was taken, and not length for its own sake (if the request asks for something short, short is right).

Request: {query}

Response: {response}

Respond with ONLY a JSON object of the form {"score": N} where N is an integer from 1 to 5. No other text.
\end{promptbox}

\clearpage
\subsection{DAT Benchmark}
\subsubsection{Baseline Prompts}
\begin{promptbox}{Divergent Association Prompt (original DAT from \cite{olson2021naming,chen2023probing})}
Please write 10 nouns in English that are as irrelevant from each other as possible, in all meanings and uses of the words. Please note that the words you write should have only single word, only nouns (e.g., things, objects, concepts), and no proper nouns (e.g., no specific people or places). Your answer:
\end{promptbox}

\begin{promptbox}{Base-Instruction Prompt from \cite{chen2023probing}}
Write 10 nouns.
\end{promptbox}

\begin{promptbox}{Random-Instruction Prompt from \cite{chen2023probing}}
Write 10 nouns randomly.
\end{promptbox}

\begin{promptbox}{Creative Prompt from \cite{schapiro2026creativityneuro}}
List 10 common English nouns that are as unrelated in meaning as possible. Avoid any shared topic or category. Output only the nouns, separated by commas.
\end{promptbox}

\begin{promptbox}{Non-Divergent Association Prompt (Non-Creative from \cite{schapiro2026creativityneuro})}
List 10 common English nouns that are as closely related in meaning as possible and clearly fit into a single narrow topic. Output only the nouns, separated by commas.
\end{promptbox}

\clearpage
\subsubsection{Evolutionary TextGrad Algorithm Prompts}
\begin{promptbox}{Textual Gradient Prompt (DAT Response Fitness)}
You are part of an advanced optimization system. Your goal is to evaluate and critique a "persona" that guides a language model during a creative task: the Divergent Association Task, where the guided model must name 10 English nouns that are as semantically unrelated to one another as possible. You are the gradient (feedback) engine.

<OBJECTIVE_FUNCTION>
Your objective is to maximize the fitness of the persona based on five metrics computed on the noun lists the persona generates:
1. Validity: Every response must give at least seven usable words — single, common English nouns, no proper nouns, no repeats, no invented words (pass or fail).
2. Dat_score: How semantically distant the named nouns are from one another within a single response (higher means the nouns come from more unrelated regions of meaning).
3. Flexibility: How different the persona's separate attempts are from one another (higher means it explores genuinely different sets of nouns each time instead of repeating one favourite list).
4. Fluency_entropy: How evenly the persona spreads its word choices across its whole vocabulary (higher means it does not keep falling back on the same handful of words).
5. Fluency_top10: How little of the persona's output is taken up by its ten most frequent words (higher means less concentration on a few habitual words).
</OBJECTIVE_FUNCTION>

To help you understand the current population landscape, here are examples from the current parallel batch:
<BATCH_CONTEXT>
{batch_context}
</BATCH_CONTEXT>

We are interested in giving feedback to the following persona:
<VARIABLE>
{parent_text}
</VARIABLE>

Here are the noun lists this persona generated, on which its scores were computed:
<GENERATED_WORDS>
{generated_words_block}
</GENERATED_WORDS>

Scores for this variable:
Validity: {pass|fail}
Dat_score: {float}
Flexibility: {float}
Fluency_entropy: {float}
Fluency_top10: {float}

Provide a concise, specific criticism detailing how to modify this persona to improve its overall fitness.
- If Validity is fail, the persona's responses did not yield seven usable words. Address the specific failure:
    * If the response was not a plain list of single English nouns, instruct the optimizer to make the persona disciplined and strict about answering in the requested format, with no commentary.
    * If the words were proper nouns, invented words, or not nouns at all, instruct the optimizer to make the persona name ordinary, concrete English nouns that any dictionary would list.
    * If words were repeated, instruct the optimizer to make the persona track what it has already said and never repeat a word within one answer.
- If Dat_score is low, the persona's nouns were too closely related to one another. Suggest traits that make it leap between wholly unconnected domains — different senses, scales, and areas of life — rather than listing things from one scene or topic.
- If Flexibility is low, the persona gives near-identical answers every time. Suggest traits that make it approach the task from a different angle on each attempt instead of reciting one rehearsed list.
- If Fluency_entropy is low, the persona keeps reusing a small vocabulary. Suggest broadening the persona's interests, expertise, and life experience so it can draw words from many domains.
- If Fluency_top10 is low, a few habitual words dominate the persona's output. Suggest removing whatever fixation causes those words to recur, referencing the batch context as a baseline for what is currently overrepresented.

Do not generate a new persona. Your only job is to provide textual criticism and actionable feedback on how to alter the current persona.
\end{promptbox}

\end{document}

%% file: math_commands.tex
\usepackage{amsmath,amsfonts,bm}

\def\eqref#1{equation~\ref{#1}}

\def\1{\bm{1}}

\DeclareMathAlphabet{\mathsfit}{\encodingdefault}{\sfdefault}{m}{sl}
\SetMathAlphabet{\mathsfit}{bold}{\encodingdefault}{\sfdefault}{bx}{n}

